\definecolor{darkblue}{rgb}{0, 0, 0.5}
\newtheoremstyle{mytheoremstyle}
{.5em}  % Space above theorem (e.g., 1em, 12pt, 0.5\baselineskip)
{.5em}  % Space below theorem
{\itshape} % Body font (e.g., \itshape for italics, \normalfont for roman)
{}     % Indent amount (empty = no indent, \parindent = paragraph indent)
{\bfseries} % Theorem head font (e.g., \bfseries for bold)
{.}    % Punctuation after theorem head
{.5em} % Space after theorem head (e.g., .5em or " ")
{}     % Theorem head spec (can be left empty, meaning 'normal')
\theoremstyle{mytheoremstyle}
\newtheorem{theorem}{Theorem}
\newtheorem{corollary}[theorem]{Corollary}
\newtheorem{proposition}[theorem]{Proposition}
\newtheorem{remark}[theorem]{Remark}
\newcommand{\argmin}{\mathop{\mathrm{argmin}}}
\newcommand{\minimize}{\mathop{\mathrm{minimize}}}
\newcommand{\maximize}{\mathop{\mathrm{maximize}}}
\newcommand{\st}{\mathop{\mathrm{subject\,\,to}}}
\newcommand{\Bern}{\mathop{\mathrm{Bern}}}
\DeclareMathOperator{\Tr}{Tr}
\def\R{\mathbb{R}}
\def\N{\mathbb{N}}
\def\E{\mathbb{E}}
\def\P{\mathbb{P}}
\def\T{\mathsf{T}}
\def\Var{\mathrm{Var}}
\def\cN{\mathcal{N}}
\def\cX{\mathcal{X}}
\def\ind#1{\mathbbm{1}\left\{#1\right\}}
\newcommand{\Error}{\mathsf{Error}}
\newcommand{\ErrorRatio}{\mathsf{ErrorRatio}}
\newcommand{\Cost}{\mathsf{Cost}}
\newcommand{\MSE}{\mathrm{MSE}}
\def\costh{{c}_h}
\def\costg{{c}_g}
\newcommand{\eqd}{\stackrel{\textnormal{d}}{=}}
\newcommand{\aseq}{\stackrel{\textnormal{a.s.}}{=}}
\newcommand{\asgeq}{\stackrel{\textnormal{a.s.}}{\geq}}
\newcommand{\iidsim}{\stackrel{\textnormal{i.i.d.}}{\sim}}
\newtcolorbox{AIbox}[2][]{aibox,title=#2,#1}
\title{Cost-Optimal Active AI Model Evaluation}
\author[1$\dagger$*]{Anastasios N. Angelopoulos}
\author[2]{Jacob Eisenstein}
\author[3]{Jonathan Berant}
\author[3]{Alekh Agarwal}
\author[2*]{Adam Fisch}
\affil[1]{University of California, Berkeley}
\affil[2]{Google DeepMind}
\affil[3]{Google Research}
\affil[$\dagger$]{Work done as an intern at GDM}
\affil[*]{Equal contribution.}
\begin{abstract}
The development lifecycle of generative AI systems requires continual evaluation, data acquisition, and annotation, which is costly in both resources and time. In practice, rapid iteration often makes it necessary to rely on synthetic annotation data because of the low cost, despite the potential for substantial bias.
In this paper, we develop novel, cost-aware methods for actively balancing the use of a cheap, but often inaccurate, {weak rater}---such as a model-based autorater that is designed to automatically assess the quality of generated content---with a more expensive, but also more accurate, {strong rater} alternative such as a human.
More specifically, the goal of our approach is to produce a low variance, unbiased estimate of the mean of the target "strong" rating, subject to some total annotation budget.
Building on recent work in active and prediction-powered statistical inference, we derive a family of cost-optimal policies for allocating a given annotation budget between weak and strong raters so as to maximize statistical efficiency. 
Using synthetic and real-world data, we empirically characterize the conditions under which these policies yield improvements over prior methods. We find that, especially in tasks where there is high variability in the difficulty of examples, our policies can achieve the same estimation precision at a far lower total annotation budget than standard evaluation methods.\looseness=-1

\end{abstract}
\begin{document}
\maketitle

\section{Introduction}
\label{sec:intro}

Accurately and efficiently evaluating generative AI  (GenAI) systems is a core technical challenge,  both for model development and for reliable model deployment. 
In this paper, we introduce new statistical tools for active, cost-sensitive model evaluation. 
Specifically, we develop evaluation pipelines that dynamically collect and annotate data using a mix of weak and strong annotation options in a way that is aware of their relative costs and strengths. 
The core idea is to strategically balance inexpensive but potentially inaccurate annotations from a \textit{weak rater} against  more accurate, but also more costly, annotations from a more sophisticated \textit{strong rater} alternative. 
The exact composition of the weak and strong raters is flexible; for example, the weak rater might be a small AI model or rule-based heuristic, while the strong rater might be a larger AI model, an AI model equipped with tools or larger inference-time reasoning capabilities, a human, or even an expert human. 
The cost of the evaluation might then be  measured in compute, latency, or dollars.
Active evaluation aims to minimize cost by selectively obtaining expensive annotations only when they are informative, relying on the cheaper option otherwise. All of the annotations are then combined using statistically principled, unbiased methods to yield  reliable, yet more cost-effective, performance metrics.\looseness=-1

Combining different data sources to improve evaluation quality is not new: in particular, the use of cheap but biased metrics as control variates to improve statistical efficiency in model evaluation has been explored before from various perspectives~\cite{angelopoulos2023prediction, angelopoulos2023ppi++, boyeau2024autoeval, chaganty-etal-2018-price, chatzi2024ppirank,  fisch2024stratified, jung2025trust, saad-falcon-etal-2024-ares, zrnic2024active}.  
Here, our main technical contribution is a theoretical framework for cost-optimal active evaluations---evaluation algorithms that strategically choose when to deploy the strong rater as opposed to the weak rater in order to minimize the final total cost of evaluation while remaining unbiased.
Informally, these  policies solve the following constrained optimization problem:\looseness=-1
\begin{align}
        \maximize \quad & \text{Accuracy of the evaluation,} \\
        \textrm{subject to} \quad & \text{Cost of the evaluation remaining below a budget }B.
    \label{eq:informal-optimizesc-cost}
\end{align}
The policies that solve this optimization problem, as we will see, depend both on (i) the relative costs of the raters, and (ii) the error of the weak rater with respect to the strong one. 

We derive these optimal policies via new technical extensions and combinations of modern techniques in statistics, namely, active statistical inference~\cite{zrnic2024active} and  prediction-powered inference~\cite{angelopoulos2023prediction, angelopoulos2023ppi++, zrnic2025cross}.
Building on this  foundation, we test and analyze empirical approximations to these optimal policies.
Our experiments demonstrate promising results for implementing  cost-optimal active evals in practice, although we also highlight important practical challenges that  naturally arise due to "cold-start" issues, as well as  imperfections of existing autorater models and their uncertainty estimates.\looseness=-1

\paragraph{Related work.} Prediction-powered inference (PPI)~\cite{angelopoulos2023prediction, angelopoulos2023ppi++, zrnic2025cross} is the technique of combining a small number of trusted observations  with predictions from a machine learning system for the purpose of statistical estimation.
Its core statistical principles are closely related to control variate estimators~\cite{ chaganty-etal-2018-price, Ripley87} as well as semi-parametric inference with missing data~\cite{chernozhukov2018double, aipw_robins, tsiatis2006missing}. 
Recently, a body of work has explored applying PPI to the evaluation of GenAI systems, where human annotations are combined with "autorater" outputs~\cite{boyeau2024autoeval,  chatzi2024ppirank, egami2023design, fisch2024stratified, saad-falcon-etal-2024-ares};  though it has also been noted that the sample efficiency gained is limited when the autorater is not sufficiently accurate~\cite{dorner2025limits, thakur2025judgingjudgesevaluatingalignment}.
A natural extension of PPI is  to actively select a fixed number of examples on which to obtain trusted observations, while deferring the remaining examples to the autorater~\cite{gligoric2024can, zrnic2024active}. 
Roughly speaking, these approaches sample human annotations with probability proportional to the uncertainty of the autorater. 
However, they work only in a restricted setting in which the ratio of expensive to cheap ratings, $n/N$, is fixed in advance, and then pick the optimal policy subject to that constraint.
No guidance is given as to what this ratio should be based on the relative costs of the ratings, or even what the total number of examples $N$ should be.\looseness=-1 

\paragraph{Contributions.}
Our work extends this literature both theoretically and empirically. 
Our core theoretical contribution is the derivation of error-minimizing sampling rules under cost constraints.
That is, previous methods have a fixed ratio $n/N$ and a policy that maximizes accuracy under that fixed ratio, while our policy maximizes accuracy subject to a cost constraint by optimizing everything including the ratio $n/N$.
We derive two forms of optimal policies: (i) the best fixed sampling rate (Proposition~\ref{prop:optimal-random}), and (ii) the best active sampling rule that depends on covariates (Proposition~\ref{prop:optimal-budget}). 
One additional novelty of our work is that it improves upon the policy proposed by ~\citet{zrnic2024active} by accounting for the constraint that the policy must lie in $[0,1]$ for all values of $x$.
Finally, Appendix~\ref{app:additional-theory} includes further theoretical innovations, such as an extension to convex M-estimators and an optimal method for selecting the covariate $x$ (as opposed to the label, as considered in the prior work).\looseness=-1

On the empirical end, we extend the scope of the standard PPI framework to heterogeneous model evaluation settings involving two distinct rating sources, each with a different cost-performance profile. 
This goes beyond the typical "human-vs-LLM" scenario described above, and encompasses any situation where less expensive, less accurate ratings are combined with more expensive, more accurate ones, even if both sources are automated (e.g., smaller vs. larger models or more vs. less inference-time reasoning). In Sections~\ref{sec:synthetic_data} and \ref{sec:estimating-policies}, we present an extensive empirical investigation into the conditions under which these new sampling rules prove beneficial over classical estimation. Specifically, we identify that the success of our framework is determined by: (a) the overall error of the weak rater, (b) the overall variance of the target strong rater, and (c) the heteroskedasticity of the weak rater’s errors.\looseness=-1

\section{Cost-optimal annotation policies}
\label{sec:optimal-policies}

We now describe our methods for constructing active, cost-optimal evals.
The methods rely on one critical ingredient: an \emph{annotation policy} $\pi$. 
The job of the annotation policy is to look at the input  and decide whether it should be labeled by the expensive rater.
The theory in this section derives the optimal policies under different restrictions on the policy space.
These policies are \emph{oracle} policies---they depend on properties of the data distribution, some of which are impossible to know in advance.
The point of this section is to tell us what policies we should be targeting, not how to find them; later, we will  explore how to estimate them in practice.\looseness=-1

\subsection{Basic notation}
We observe inputs $X \sim P_X$ from some space $\mathcal{X}$ and distribution $P_X$: in the setting of LLMs, we think of the input $X$ as containing the prompt as well as the response from one or multiple LLMs.
Our goal is to approximate an expensive rating signal $h(X) \in \R$, such as a preference label from a human or a large AI model, with a cheap automated evaluator $g(X) \in \R$;
for notational convenience we introduce $H \triangleq h(X)$  and $G \triangleq g(X)$.
In our setup, querying $h$ costs $\costh$, and querying $g$ costs $\costg$, where $\costg < \costh$.
We seek to query $h$ only when it is ``worth the cost''.

We  consider a sequential setting: for every $t \in \N$, we observe $X_t$ drawn i.i.d. from the distribution $P_X$ and $G_t \sim P_{G \mid X}$. Upon observing $X_t$, we then have the option of also querying $H_t \sim P_{H \mid X}$.
Our objective is to estimate $\theta^* = \E[H]$, the mean target rating.
To this end, we develop estimators that efficiently sample only the data points for which $H_t$ is needed, and stop sampling after a certain budget is exhausted.
Define the binary random variable $\xi_t \sim \Bern(\pi_t(X_t))$, which is the indicator of whether we sampled $H_t$; it takes on the value $1$ with probability $\pi_t(X_t)$, and we have the freedom to choose the annotation policy $\pi_t$ based on the previous data we have seen so far.
We  estimate $\theta^*$ with the following unbiased estimator, defined for all $T \in \N$:
\begin{equation}
    \label{eq:base-active-estimator}
    \hat\theta_T = \frac{1}{T}\sum\limits_{t=1}^T \Delta_t \quad \text{ where } \Delta_t = G_t + \left(H_t - G_t\right) \frac{\xi_t}{\pi_t(X_t)}.
\end{equation}
Here $\pi_t \in \Pi$ for some policy class $\Pi$. 
%\alekh{This is not clear. How are $\pi$ and $\pi_t$ related? Is $\pi = (\pi_1,\ldots,\pi_T)$ a collection of maps, or is it an algorithm? If neither, then I don't follow how a static policy $\pi$ maps to the RHS.}
If $\Pi$ is left unspecified, it should be assumed that $\pi_t$ can be any function with range $(0,1]$. This is the sequential active inference estimator from~\citet{zrnic2024active}: the difference here will be in how we set $\pi_t$ to balance the different labeling costs.
In general, the annotation policy $\pi_t$ is allowed to change arbitrarily online as a function of past data, as is the predictor $g$. 
The estimator is written in a sequential form, wherein the data points arrive one-by-one and we have the option of querying $h$ per-data-point.
For simplicity, here we will focus on the setting where the parameters of $\pi$ and $g$ remain fixed throughout and are \emph{not} updated online, or as if we are updating in batches; however our results will also hold asymptotically when $\pi$ and $g$ are updated online and converge.
To this end, we use the notation $\hat\theta_T^\pi$ to denote the estimator in \eqref{eq:base-active-estimator} with a fixed policy $\pi$, i.e.,  where $\pi_t = \pi$, $\forall t \in T$.
%\alekh{I think our optimal benchmark is a fixed policy, but actual policies that we try are sequentially adapted? A bit like how we evaluate things in online learning.}
%
To calculate the cost and error of our estimator, we  additionally define:\looseness=-1
\begin{align}
    \label{eq:error_t}
    \Error_T(\pi) &\triangleq \E\left[\left(\hat\theta_T^\pi - \theta^*\right)^2\right] =\frac{1}{T}\left(\Var(H) - \E[(H-G)^2] + \E\left[(H - G)^2\frac{1}{\pi(X)}\right]\right)
    % \\
    %     \Cost_T(\pi) &\triangleq T(c_h\E\left[\pi(X)\right]+c_g).
\end{align}
and
\begin{equation}
\Cost_T(\pi) \triangleq T(c_h\E\left[\pi(X)\right]+c_g).
\end{equation}

These functions describe the mean squared error and expected cost of the estimator with annotation policy $\pi$ as a function of time, and our goal will be to minimize one subject to a constraint on the other.
When we refer to a budget on the cost, it will be denoted as $B$.
Furthermore, we note that, for convenience, the cost-optimized policies that we present in the remainder of this section will relax the constraint that the stopping time $T^{\rm stop}$ at which $\Cost_{T^{\rm stop}}(\pi)$ is just under budget must be an integer, though this does not have a significant effect on the optimization for large enough budgets $B$ where $T^{\rm stop} \gg 1$. Some additional treatment for this restriction is included in Appendix~\ref{app:additional-theory}.

\subsection{Optimal random annotation}
\label{sec:optimal-random}
The simplest annotation policy does not depend on $X$, and simply queries $H$ with some fixed probability, which we  denote as $\pi(x)=p$ for a sampling rate $p \in \R$ and all $x \in \cX$.
In other words, we let $\pi \in \Pi^{\rm random} = \{ x \mapsto p : p \in (0,1] \}$.
When $p$ is too large, the cost of the estimator is too high; when $p$ is too small, the error blows up.
Our job is to choose the optimal balance, and the next proposition shows it has a simple, explicit form that depends both on the cost ratio $\costg / \costh$ and how the mean squared error of $G$ relative to $H$, that is ${\MSE(H, G) = \E[(H - G)^2]}$, compares to the  variance of $H$.\looseness=-1

\begin{proposition}
    \label{prop:optimal-random}
    Let $(X_1, G_1, H_1), \ldots, (X_T, G_T, H_T)$, $T \in \N$, be an i.i.d. sequence of real-valued random variables with joint distribution $P$, and define $\Error$, $\Cost$, and $\Pi^{\rm random}$ as above.
    Assume that $\P(G_1 = H_1) < 1$ and that $c_h > c_g > 0$, and define the optimization problem
    \begin{equation}
        %\begin{aligned}
            \minimize_{\substack{\pi \in \Pi^{\rm random}, ~T^{\rm stop} \in \R_{>0}}} \quad \Error_{T^{\rm stop}}(\pi) \quad
            \st \quad  \Cost_{T^{\rm stop}}(\pi) \leq B.
        %\end{aligned}
        \label{problem:optimal-random}
    \end{equation}
    Then the solution to Problem~\eqref{problem:optimal-random} for all $x \in \cX$ is
    \begin{equation}
        \label{eq:policy-optimal-random}
        \pi_\mathrm{random}(x) = \begin{cases}
            \sqrt{\frac{c_g}{c_h}\frac{\E[(H-G)^2]}{{\Var(H)} - \E[(H-G)^2]}} & \text{ if } \E[(H-G)^2] < \frac{\costh}{\costh + \costg}\Var(H) \\
            1 & \text{ otherwise.}
        \end{cases}
    \end{equation}    
\end{proposition}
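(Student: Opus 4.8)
The plan is to reduce the two-variable problem to a one-dimensional convex optimization in the sampling rate $p$, and then to handle the box constraint $p \le 1$ through a short case analysis. First I would note that for any fixed $p \in (0,1]$ the map $T^{\rm stop} \mapsto \Error_{T^{\rm stop}}(p)$ is non-increasing while $T^{\rm stop} \mapsto \Cost_{T^{\rm stop}}(p) = T^{\rm stop}(c_h p + c_g)$ is strictly increasing, since $c_h p + c_g > 0$. The constraint $\Cost_{T^{\rm stop}}(p) \le B$ only caps $T^{\rm stop}$ from above, so it is without loss of generality to take it tight; substituting $T^{\rm stop}(p) = B/(c_h p + c_g)$ reduces Problem~\eqref{problem:optimal-random} to $\min_{p \in (0,1]} \Error_{T^{\rm stop}(p)}(p)$.

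Second, I would plug $\pi \equiv p$ and this value of $T^{\rm stop}$ into the expression for $\Error$ in \eqref{eq:error_t}. Writing $m \triangleq \E[(H-G)^2]$ and $v \triangleq \Var(H)$ (note $m > 0$ by the assumption $\P(G_1 = H_1) < 1$), one obtains
\[
\Error_{T^{\rm stop}(p)}(p) = \frac{c_h p + c_g}{B}\Bigl(v - m + \frac{m}{p}\Bigr) = \frac{1}{B}\Bigl(c_h(v-m)\,p + \frac{c_g m}{p}\Bigr) + \frac{c_h m + c_g(v-m)}{B},
\]
so it suffices to minimize $\varphi(p) \triangleq c_h(v-m)\,p + c_g m\, p^{-1}$ over $(0,1]$, the trailing term being constant in $p$.

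Third, I would observe that $\varphi$ is strictly convex on $(0,\infty)$ since $\varphi''(p) = 2 c_g m\, p^{-3} > 0$, and that $\varphi(p) \to +\infty$ as $p \to 0^+$, so a minimizer over $(0,1]$ exists. Solving $\varphi'(p) = c_h(v-m) - c_g m\, p^{-2} = 0$ gives the unique interior stationary point $p^* = \sqrt{(c_g/c_h)\,m/(v-m)}$, provided $v - m > 0$; if $v - m \le 0$, then $\varphi' < 0$ throughout $(0,\infty)$ and $\varphi$ is strictly decreasing. The main point to get right — and where the statement genuinely differs from a naive first-order condition — is the interaction of $p^*$ with the constraint $p \le 1$: a direct computation shows $p^* \le 1 \iff c_g m \le c_h(v-m) \iff m \le \tfrac{c_h}{c_h + c_g}\,v$.

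Finally, I would assemble the cases. If $m < \tfrac{c_h}{c_h+c_g}v$, then in particular $v - m > 0$, so $p^*$ lies in $(0,1)$ and, by strict convexity of $\varphi$, is the constrained minimizer — this is the first branch of \eqref{eq:policy-optimal-random}. Otherwise, either $v - m \le 0$ (so $\varphi$ is strictly decreasing on $(0,1]$), or $v - m > 0$ but $p^* \ge 1$ (so $\varphi$ is strictly decreasing on $(0,p^*] \supseteq (0,1]$); in either case the minimizer over $(0,1]$ is $p = 1$, which is the second branch. Since $\Pi^{\rm random}$ consists of the constant policies $x \mapsto p$, the optimal policy is the constant function taking the value just derived, establishing \eqref{eq:policy-optimal-random}. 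The only mild obstacle in the whole argument is this boundary bookkeeping (especially the regime $v \le m$ with no interior critical point); everything else is routine substitution and one-dimensional calculus.
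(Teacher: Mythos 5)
Your proof is correct and follows essentially the same route as the paper's: substitute the tight budget constraint $T^{\rm stop} = B/(c_h p + c_g)$, reduce to the one-dimensional objective $c_h(v-m)p + c_g m/p$ on $(0,1]$, and resolve the first-order condition against the boundary. The only cosmetic difference is that you dispatch the boundary cases via strict convexity and monotonicity of $\varphi$, whereas the paper explicitly compares the objective value at the interior critical point against that at $p=1$; both yield the same case split $\E[(H-G)^2] \lessgtr \frac{c_h}{c_h+c_g}\Var(H)$.
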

% \begin{remark}
% Note that in actuality, the stopping time $T^{\rm stop}$ must be an integer, though this does not have a significant effect on the optimization for large enough budgets $B$. See  Proposition~\ref{prop:optimal-random-integer} in Appendix~\ref{app:additional-theory} for the solution with this additional restriction.
% \end{remark} 

We can make a few observations about $\pi_\mathrm{random}$. First, if the mean squared error of the weak rater $G$ is greater than the variance of $H$ (or more precisely, more than a $\costh/(\costh + \costg)$ fraction of the variance of $H$), then it is not helpful---and we should simply choose to query $H$ all  the time. If $\MSE(H, G)$ is sufficiently low, however, then the rate at which we sample $H$ varies inversely with both the ratio of $\Var(H)$ to $\MSE(H, G)$ and  the ratio of the cost of $H$ to the cost of $G$. This makes intuitive sense: if the target label $H$ is high variance but our "weak" rater $G$ is in fact a  fairly "strong" rater (in that it produces similar ratings to those of $H$), then we should primarily exploit $G$'s low cost, high-quality predictions, while sampling $H$ at just a  low rate to correct for any minor bias that arises.\looseness=-1

\subsection{Optimal active annotation}
\label{sec:optimal-active}
Next, we study policies that are allowed to \emph{depend} on $X$; that is, they can elect to query $H$ with some probability that depends on $X$.
This strategy can greatly improve statistical power when the error distribution is heteroskedastic in $X$; for example, when some prompts are much harder than others.
In this setting, it makes sense for $\pi$ to depend on $X$, and to ask for advanced rating help more often when $G$ is likely to be wrong.
Towards that end, we define our annotation policy class to be $\pi \in \Pi = \{ x \mapsto f(x) : f(x) \in (0,1]; \forall x \in \cX \}$, which is the set of annotation policies placing a strictly positive amount of sampling mass on each query. 
As the next proposition shows, the optimal policy will depend on the uncertainty of the weak rater, $u(x) \triangleq \mathbb{E}[(H - G)^2 \mid X = x]$, expressed as the expected mean squared conditional error given $X = x$. For notational convenience, we also define the random variable $U \triangleq u(X)$.

\begin{proposition}
    \label{prop:optimal-budget}
    In the same setting as Proposition~\ref{prop:optimal-random}, define $\Pi$ as above, let $\cX$ be discrete, and additionally define the optimization problem
    \begin{align}
            \minimize_{\substack{\pi \in \Pi, ~T^{\rm stop} \in \R_{> 0}}} \quad  \Error_{T^{\rm stop}}(\pi) \quad
            \st \quad  \Cost_{T^{\rm stop}}(\pi) \leq B.
        \label{problem:optimal-budget}
    \end{align}
    Define the scaled and clipped policy, $\pi_\mathrm{clip}$, as:
    \begin{equation}
        \pi_\mathrm{clip}(x; \tau) = \min\left(\gamma^*(\tau) \sqrt{u(x)}, 1\right) = \begin{cases}
            \gamma^*(\tau) \sqrt{u(x)} &  \text{ if } \sqrt{u(x)} \leq \tau \\ 
            1 & \text{otherwise,}
        \end{cases}
    \end{equation}
    where $\costh > \costg > 0$ and $\gamma^*(\tau) \in \big(0, \frac{1}{\tau}\big]$ is defined as
    \begin{equation}
        \gamma^*(\tau) = \min\left(
        \sqrt{\frac{{c_g}/{c_h} + \mathbb{P}\left(U > \tau^2\right)}{\big(\mathrm{Var}(H) - \mathbb{E}[U \ind{U \leq \tau^2}]\big)_+}},
        \frac{1}{\tau} \right).
    \end{equation}
    Then the solution to Problem~\eqref{problem:optimal-budget} is $\pi_\mathrm{active}(x) = \pi_\mathrm{clip}(x;  \tau^*)$,
    where $\tau^* > 0$ is the  solution to
    \begin{equation}
    \label{eq:tau-optimization}
    \tau^* = \argmin_{\tau \in \mathbb{R}_{>0}}\left(\costh\mathbb{E}[\pi_\mathrm{clip}(x; \tau)] + \costg \right) \left(\Var(H) + \mathbb{E}\left[U \left(\pi_\mathrm{clip}(x; \tau)^{-1}  - 1\right)\right]\right).
    \end{equation}
\end{proposition}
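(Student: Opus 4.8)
The plan is to collapse the bilevel problem into a one‑dimensional optimization in three moves: eliminate $T^{\rm stop}$, reduce the search over $\pi\in\Pi$ to the scaled‑and‑clipped $\sqrt{u}$ family via convex duality, and then optimize the two remaining scalars. For the first move, \eqref{eq:error_t} shows that $\Error_T(\pi)$ is $1/T$ times a nonnegative, $T$‑independent quantity, hence strictly decreasing in $T$, while $\Cost_T(\pi)=T(c_h\E[\pi(X)]+c_g)$ is strictly increasing; so at any optimum the budget is saturated and $T^{\rm stop}=B/(c_h\E[\pi(X)]+c_g)$. Since $\pi$ is a function of $X$, the tower property rewrites \eqref{eq:error_t} as $\Error_T(\pi)=\tfrac1T(\Var(H)+\E[U(\pi(X)^{-1}-1)])$ with $U=\E[(H-G)^2\mid X]$, and substituting the saturating $T^{\rm stop}$ turns Problem~\eqref{problem:optimal-budget}, up to the positive constant $1/B$, into
\begin{equation*}
  \min_{\pi\in\Pi}F(\pi),\qquad F(\pi):=\big(c_h\,\E[\pi(X)]+c_g\big)\big(\Var(H)+\E[U(\pi(X)^{-1}-1)]\big),
\end{equation*}
which is precisely the objective appearing in \eqref{eq:tau-optimization} once $\pi=\pi_\mathrm{clip}(\cdot;\tau)$ is plugged in; it therefore suffices to show the minimizer of $F$ over $\Pi$ is $\pi_\mathrm{clip}(\cdot;\tau^*)$.

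For the second move, write $F=a(\pi)\,b(\pi)$, where, in the vector $(\pi(x))_{x\in\cX}$, $a$ is affine and $b$ is convex (both positive). This product is not convex in general, so rather than optimizing $F$ directly I would slice the feasible set by the value $c:=\E[\pi(X)]\in(0,1]$; on that slice $a(\pi)=c_hc+c_g$ is constant, so minimizing $F$ there is the convex program of minimizing $\E[U\pi(X)^{-1}]$ subject to the linear constraint $\E[\pi(X)]=c$ and $\pi(x)\in(0,1]$ — finite‑ (or countably‑) dimensional since $\cX$ is discrete, with Slater's condition met by $\pi\equiv c$. Its KKT conditions — multiplier $\lambda\ge0$ for the budget, $\mu(x)\ge0$ for $\pi(x)\le1$, stationarity $-p_xu(x)\pi(x)^{-2}+\lambda p_x+\mu(x)=0$ where $p_x=P_X(\{x\})$ — together with complementary slackness force $\pi(x)=\min(\sqrt{u(x)/\lambda},1)$, i.e. a scaled‑and‑clipped $\sqrt u$ policy with scale $\gamma=1/\sqrt\lambda$ and clip threshold $\tau=\sqrt\lambda$ (so $\gamma\tau=1$; points with $u(x)=0$ are immaterial). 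Hence the global minimizer of $F$ over $\Pi$ lies in the one‑parameter family $\{\pi_\mathrm{clip}(\cdot;\tau)\}_{\tau>0}$.

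For the third move, fix $\tau$ and substitute $\pi_\mathrm{clip}(\cdot;\tau)=\min(\gamma\sqrt{u(\cdot)},1)$ with scale $\gamma\in(0,1/\tau]$; splitting expectations on $\{U\le\tau^2\}$ gives $F=\big(c_h(\gamma A_\tau+q_\tau)+c_g\big)\big(\Var(H)-C_\tau+A_\tau/\gamma\big)$ with $A_\tau=\E[\sqrt U\ind{U\le\tau^2}]$, $C_\tau=\E[U\ind{U\le\tau^2}]$, $q_\tau=\P(U>\tau^2)$. This is convex in $\gamma$ on $(0,\infty)$, so its minimizer over $(0,1/\tau]$ is the stationary point $\sqrt{(c_g/c_h+q_\tau)/(\Var(H)-C_\tau)}$ clipped into the box — exactly $\gamma^*(\tau)$, with the $(\cdot)_+$ absorbing the degenerate regime $\Var(H)\le\E[U\ind{U\le\tau^2}]$, in which $F$ is monotone in $\gamma$ and the boundary $\gamma=1/\tau$ is optimal (the analogue of the ``otherwise'' case of Proposition~\ref{prop:optimal-random}). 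What remains is the scalar problem $\tau^*=\argmin_{\tau>0}F(\pi_\mathrm{clip}(\cdot;\tau))$; rewriting $F(\pi_\mathrm{clip}(\cdot;\tau))=(c_h\E[\pi_\mathrm{clip}(x;\tau)]+c_g)(\Var(H)+\E[U(\pi_\mathrm{clip}(x;\tau)^{-1}-1)])$ recovers \eqref{eq:tau-optimization}, and then $\pi_\mathrm{active}=\pi_\mathrm{clip}(\cdot;\tau^*)$ together with $T^{\rm stop}=B/(c_h\E[\pi_\mathrm{active}]+c_g)$ solves Problem~\eqref{problem:optimal-budget}.

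The delicate parts are all in the reduction. I expect the main obstacles to be: (i) showing the slicewise multiplier $\lambda(c)$ is well defined, via continuity and monotonicity of $\lambda\mapsto\E[\min(\sqrt{U/\lambda},1)]$ and the intermediate value theorem, and that stitching the slice minimizers together yields the \emph{global} minimizer of $F$ rather than merely a stationary point; (ii) verifying the outer scalar problem in $\tau$ attains its minimum — controlling $F(\pi_\mathrm{clip}(\cdot;\tau))$ as $\tau\to0$ and $\tau\to\infty$ and identifying the boundary case where the true optimum is $\pi\equiv1$ exactly as in Proposition~\ref{prop:optimal-random}; and (iii) the bookkeeping that reconciles the two displayed forms of $\pi_\mathrm{clip}$, i.e. checking that at the optimum the policy is continuous at the threshold so that the $\min$ defining $\gamma^*(\tau)$ is inactive at $\tau^*$. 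Atoms of $U$ at $\tau^2$ and the set $\{u=0\}$ require separate, but routine, handling.
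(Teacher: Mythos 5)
Your proposal is correct and arrives at the stated policy, but the central reduction step is carried out by a genuinely different route than the paper's. The paper works on the product objective $J(\pi)$ directly: it writes the problem in discrete vector form, forms the Lagrangian with multipliers only for the box constraint $p \preceq 1$, sets the gradient of the (non-convex) product to zero, and reads off from the stationarity condition that interior coordinates satisfy $p_x/p_{x'} = \sqrt{u(x)/u(x')}$ while the constraint activates precisely when $u(x)$ exceeds a threshold; this yields the two-parameter family $\gamma\sqrt{u(x)}\ind{\sqrt{u(x)}\le\tau} + \ind{\sqrt{u(x)}>\tau}$, after which the paper optimizes $\gamma$ for fixed $\tau$ exactly as in your third move (convexity in $\gamma$, stationary point clipped to $\big(0,\tfrac{1}{\tau}\big]$) and leaves $\tau$ to a one-dimensional search. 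Your slicing by $c=\E[\pi(X)]$ replaces the paper's first-order necessary-conditions argument on a non-convex objective with a family of genuinely convex programs on which KKT is necessary \emph{and} sufficient; this makes the claim that the global minimizer must lie in the scaled-and-clipped $\sqrt{u}$ family airtight (global optimality on each slice comes for free), at the cost of the easy observation that the global minimizer minimizes its own slice. Both routes tie the scale and threshold together ($\gamma\tau=1$) at the true KKT point, which is exactly what reconciles the two displayed forms of $\pi_\mathrm{clip}$ at $\tau^*$, and both leave the same routine loose ends you already flag --- attainment as $\tau\to 0$ or $\tau\to\infty$, the $\pi\equiv 1$ boundary case, atoms of $U$ at $\tau^2$, and the set $\{u=0\}$ --- which the paper's own proof also treats implicitly.
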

\begin{remark} The final optimization problem presented for the clipping threshold $\tau^*$ is non-convex and has no analytical solution.
However, because it is a 1-dimensional optimization problem, we can coarsely discretize and optimize $\tau$ via simple grid search.\looseness=-1
\end{remark}

On a technical level, the solution in Proposition~\ref{prop:optimal-budget}  has a similar form to the active sequential estimator proposed in \citet{zrnic2024active}, but with an optimized proportionality constant, as well as additional clipping to rigorously account for the constraints on $\pi(x) \in (0, 1]$. The latter point is particularly important, as it is not accounted for in prior work. In contrast to the fixed, prespecified ratio prescribed by prior work, in Appendix~\ref{app:active-cases} we show how the \emph{cost-optimal} target ratio of expensive to cheap ratings can be as extreme as $0$ or $1$, depending on the cost ratio of $G$ to $H$.

While the form of $\pi_\mathrm{active}$ is more complex than that of $\pi_\mathrm{random}$, it still admits a fairly straightforward interpretation: for some confidence threshold $\tau^*$ below which the conditional mean squared error of $G$ over all confident data points with $\sqrt{u(x)} \leq \tau^*$  is sufficiently low, we sample proportional to $\sqrt{u(x)}$. On the remaining highly uncertain examples where $\sqrt{u(x)} > \tau^*$,  we  always use $H$, and ignore $G$. The exact threshold $\tau^*$ depends on the distributions of $H$ and $G$, and their cost-ratio.

 Proposition~\ref{prop:optimal-budget} is also a direct generalization of Proposition~\ref{prop:optimal-random}.
When $X$ is independent of $(H-G)^2$ so that $u(x) = \E[(H - G)^2]$ $\forall x \in \mathcal{X}$,  we see that $\pi_\mathrm{active}$ reduces to $\pi_\mathrm{random}$:  
\begin{equation}
    \underbrace{\gamma^*(\tau^*) \sqrt{u(x)}}_{\text{optimal active}} = \sqrt{\frac{\costg}{\costh}\frac{ \E[(H-G)^2 \mid X = x] }{\Var(H) - \E[(H-G)^2]}} =  \underbrace{\sqrt{\frac{c_g}{c_h} \frac{\E[(H-G)^2]}{\Var(H) - \E[(H-G)^2]}}}_{\text{optimal random}}.
\end{equation}
The intuitive conclusion is that active querying can help if the conditional squared error of $U$ has significant variance to it (i.e., there exist some regions of $\mathcal{X}$ where $G$ has a much higher level of agreement with $H$ than on other regions of $\mathcal{X}$, such as on easy vs. hard examples). This can be contrasted with the optimal random policy, $\pi_\mathrm{random}$, from \eqref{eq:policy-optimal-random}: there we sample at a fixed rate for each $X$, where that rate  depends only  on $G$'s \emph{average} error with respect to $H$ across all types of inputs.\looseness=-1

\begin{AIbox}{Takeaways: Cost-optimal annotation policies}

We characterize two types policies for sampling the  expensive rating $H$ given a  budget $B$: $\pi_\mathrm{random}$ chooses the optimal fixed  probability $p^* \in (0, 1]$, while $\pi_\mathrm{active}$ defines an optimal input conditional probability $\pi_\mathrm{active}(x) \in (0, 1]$. Both  navigate the following trade-off:  reducing $\mathbb{E}[\pi(X)]$ increases the total number of $X$ samples we can afford to rate at all, but not querying $H$ when $G$ is inaccurate increases variance. Finally, both policies converge to the baseline  estimator (i.e., $\pi_\mathrm{base}(x) = 1$) when the  error of $G$ is too high relative to the  variance of $H$.\looseness=-1
\end{AIbox}

\section{Comparing cost-optimal annotation policies in simulated settings}
\label{sec:synthetic_data}

The estimation error of the optimal policies presented in Section~\ref{sec:optimal-policies} depends on the distributions of the  expensive target label $H$, the cheap estimated label $G$, and the cost-ratio $\costg / \costh$ for querying $G$ versus $H$. To build a clearer  understanding of how these variables influence the performance of our proposed policies, we now conduct a series of carefully controlled experiments on simulated data. Note that since all of the key distributional quantities (i.e., $\Var(H)$, $\MSE(H, G)$, etc) are  known  in the synthetic settings we  consider in this section, we are also able to compute $\pi_\mathrm{active}$ and $\pi_\mathrm{random}$ exactly---as opposed to the more difficult  real-world data settings we will tackle in Section~\ref{sec:estimating-policies}. %This allows us to isolate these specific distributional aspects before examining the performance of these  policies on real-world data in Section~\ref{sec:estimating-policies}.

\subsection{Metrics}
%The core distributional properties we are interested in analyzing the effects of are  $\Var(H)$, $\MSE(H, G)$, and $\Var(U)$. 
To measure the relative performance of annotation policy $\pi_1$ vs $\pi_2$, we  compute the \emph{ratio} of their errors at $T^{\rm stop}_i$. %= \left\lfloor \frac{B}{\costh \E[\pi_i(X)] + \costg}\right\rfloor$. 
 Once again relaxing the restriction that $T^{\rm stop}_i \in \mathbb{N}$, %\jacob{I don't think this relaxation is actually mentioned in S2}
%or assuming that $B$ is sufficiently large such that $T^{\rm stop}_i =\left\lfloor \frac{B}{\costh \E[\pi_i(X)] + \costg}\right\rfloor \approx  \frac{B}{\costh \E[\pi_i(X)] + \costg}$, 
we  compute a budget-free approximation based on  the expression for $\Error_{T_i^\mathrm{stop}}(\pi)$, where $T_i^\mathrm{stop} = B / (\costh\E[\pi_i(X)] + \costg)$:
\begin{equation}
\begin{split}
    \ErrorRatio(\pi_1, \pi_2) \triangleq \frac
    {\big(\costh\E[\pi_1(X)] + \costg\big)
    \left(\Var(H) - \E[(H - G)^2] + \E\left[(H - G)^2 \frac{1}{\pi_1(X)}\right]\right)}
    {\big(\costh\E[\pi_2(X)] + \costg\big)
    \left(\Var(H) - \E[(H - G)^2] + \E\left[(H - G)^2 \frac{1}{\pi_2(X)}\right]\right)}.
\end{split}
\end{equation}
Note that while $\ErrorRatio(\pi_1, \pi_2)$ does not depend on the budget, it does implicitly depend on $P_X$ as well as $P_{H \mid X}$ and $P_{G \mid X}$.
We will focus on $\ErrorRatio(\pi_{\rm active}, \pi_{\rm base})$, the error ratio of the active estimator  to the baseline estimator which only uses  $H$, as well as $\ErrorRatio(\pi_{\rm active}, \pi_{\rm random})$, which compares the active estimator to the estimator that doe not depend on $X$. Note that for $\ErrorRatio(\cdot, \pi_\mathrm{base})$, we disregard  $\costg$ for $\pi_\mathrm{base}$, and replace the denominator with $\costh\Var(H)$.\looseness=-1

\subsection{Gaussian data}
\label{sec:gaussian_data}

We construct an experiment where we change $\Var(H)$, $\MSE(H, G)$, and $\Var(U)$ independently (recall that we introduce $U \triangleq u(X) = \E[(H - G)^2 \mid X]$ in  Section~\ref{sec:optimal-active}). First, we draw $H \sim \mathcal{N}(0, \nu)$ from a normal distribution with variance $\nu$. We then draw $U$ from a gamma distribution with mean $\mu$ and variance $\eta$ so that $\E[U] = \mathrm{MSE}(H, G) = \mu$, and $\Var(U) = \eta$. This can be satisfied by sampling $U \sim \mathrm{Gamma}\left(\mu^2/\eta, \eta/{\mu}\right)$. Finally, we set $G = H + \sqrt{U}$. 

\begin{figure}[!t]
    \centering
    \includegraphics[width=1\linewidth]{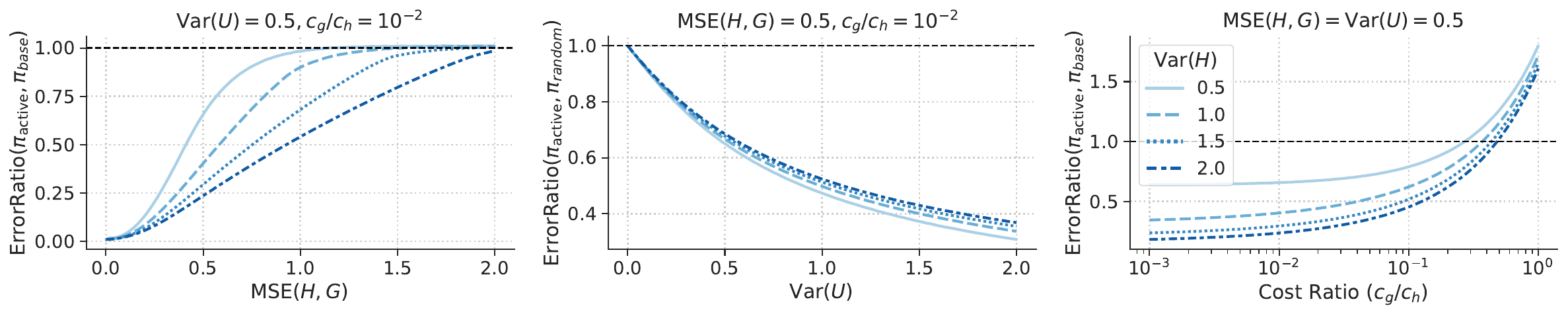}
    \includegraphics[width=1\linewidth]{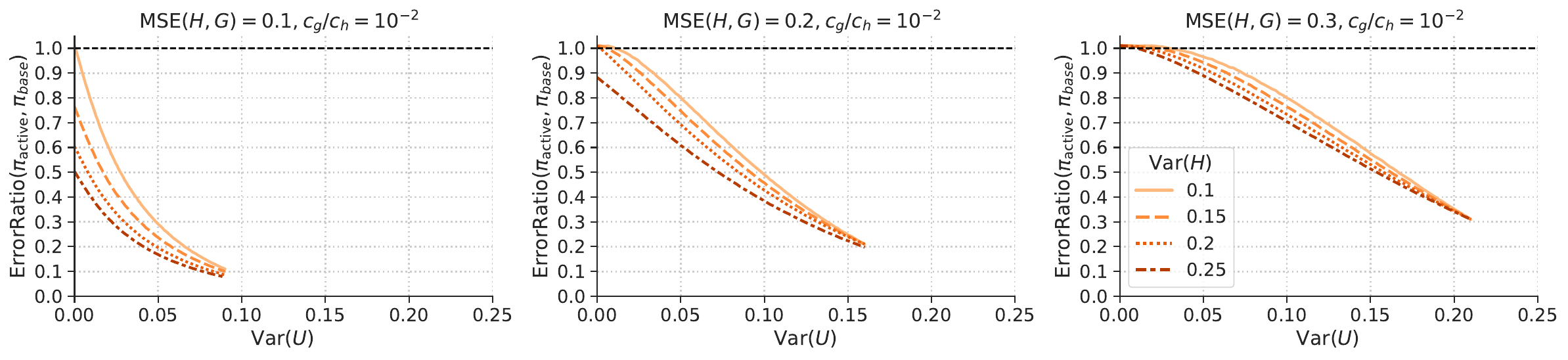}
    \vspace{-14pt}
    \caption{Top: Results on the Gaussian data (\S\ref{sec:gaussian_data}) while varying  $\MSE(H, G)$,  $\Var(U)$, and $\costg/\costh$. Bottom: Results on the bounded Bernoulli data (\S\ref{sec:bernoulli_data}) while varying  $\MSE(H, G)$  and $\Var(U)$. Each line plots a different value of $\Var(H)$, where we choose values  that are representative of low, medium, or high variance settings  compared to $\MSE(H, G)$. In the Bernoulli setting (bottom), $\MSE(H, G)$ is equivalent to the error rate of $G$, and $\Var(U)$ can be at most $\MSE(H, G)(1 - \MSE(H, G))$.}
    \label{fig:gaussian}
   \vspace{-10pt}
\end{figure}

Results are shown in the top row of Figure~\ref{fig:gaussian}. The left panel illustrates the error ratio of $\pi_{\mathrm{active}}$ to $\pi_{\mathrm{base}}$ as a function of $\MSE(H,G)$, while keeping $\Var(U)$ fixed at $0.5$. Each line plots results for a different level of $\Var(H)$. As expected, the error of $\pi_\mathrm{active}$  increases with the $\MSE(H,G)$, with the rate of increase significantly influenced by $\Var(H)$. When $\MSE(H, G)$ is large relative to $\Var(H)$, $\pi_\mathrm{active}$ provides no benefit over $\pi_{\mathrm{base}}$. The middle panel plots the error ratio of $\pi_\mathrm{active}$ to $\pi_\mathrm{random}$ while varying $\Var(U)$ for a fixed $\MSE(H, G)$. For small values of $\Var(U)$, the conditional error in $G$ is nearly the same everywhere, and  there is no benefit to using $\pi_\mathrm{active}$ over $\pi_\mathrm{random}$. Larger values of $\Var(U)$, however, lead to  a large performance advantage for $\pi_\mathrm{active}$. Finally, the right panel plots the error ratio of $\pi_\mathrm{active}$ to $\pi_\mathrm{base}$ while keeping $\MSE(H, G)$ and $\Var(U)$ fixed, but varying the cost-ratio $\costg / \costh$. As expected, $\pi_\mathrm{active}$ is most effective when $\costg \ll \costh$.

\subsection{Bernoulli data}
\label{sec:bernoulli_data}

While the Gaussian setting above is informative, in many typical situations $H$ is bounded, such as when $H$ is a binary, Bernoulli rating for win-rate or accuracy estimation. This creates a more difficult setting for $\pi_\mathrm{active}$, since both $\Var(H)$ and $\Var(U)$ are upper-bounded by $0.25$ for Bernoulli $H$. In fact, in binary settings, $\MSE(H, G)$ and $\Var(U)$ are in tension: the more accurate $G$ is, the lower the variance of its errors, and $\pi_\mathrm{active}$ will be limited in terms of any relative benefit it can provide over $\pi_\mathrm{random}$. The same is also true for when $G$ is uniformly \emph{inaccurate}. 
To better analyze this kind of setting, we construct a controlled dataset similar to the previous one, but for binary $H$. First, we draw $H$ from a Bernoulli distribution with variance $\nu$, which is satisfied by $H \sim \mathrm{Bern}(0.5 + \sqrt{0.25 - \nu})$. Next, we draw $U$ from a Beta distribution with mean $\mu$ and variance $\eta$, where $\eta \leq \mu(1 - \mu)$, which is satified by $U \sim \mathrm{Beta}(\kappa \mu, \kappa (1 - \mu))$ for $\kappa = \frac{\mu(1 - \mu)}{\eta} - 1$. Finally, we flip $H$ with probability $U$ to get the prediction $G$ (i.e., $G$ is also Bernoulli with $\MSE(H, G) = \mu$).

Results are shown in the bottom row of Figure~\ref{fig:gaussian} for $\pi_\mathrm{active}$ vs $\pi_\mathrm{base}$ (see Appendix~\ref{app:additional_results}  for $\pi_\mathrm{active}$ vs. $\pi_\mathrm{random}$). As in the Gaussian setting,  the error ratio of $\pi_\mathrm{active}$ to $\pi_\mathrm{base}$ 
 improves dramatically with larger $\Var(U)$. Note that the active and random estimator are the same when $\Var(U) = 0$ .  For larger $\MSE(H, G)$,  $\Var(U)$ must also be increasingly large for $\pi_\mathrm{active}$  to improve significantly over $\pi_\mathrm{base}$. Indeed, on the right-hand side of the bottom row of Figure~\ref{fig:gaussian} where $\MSE(H, G)>\Var(H)$, we can see that  $\pi_\mathrm{random}$ provides no benefits over  $\pi_\mathrm{base}$; 
 that is,  $\mathrm{ErrorRatio}(\pi_\mathrm{random}, \pi_\mathrm{base}) = 1$ when $\Var(U) = 0$, which corresponds to the fixed-rate sampling policy as noted earlier.
 When $\Var(U) \gg 0 $, however,  $\pi_\mathrm{active}$ can obtain substantially lower estimation error than  $\pi_\mathrm{base}$. Still, unlike the earlier Gaussian data, the best active  error ratio in this setting is bounded from below by $\MSE(H, G)$, and is achieved when $U$ has maximum variance (which is also bounded).\looseness=-1%\footnote{The best case is in the limit where $U$ is binary. Here, the active policy will choose to label all points where $U = 1$, and use $G$ where $U = 0$, resulting in a sample efficiency gain of $(\MSE(H, G) + \costg/\costh) / (1 + \costg / \costh) > \MSE(H, G)$.}

\begin{AIbox}{Takeaways: Performance characteristics of cost-optimal annotation policies}
%As shown in Section~\ref{sec:optimal-policies}, cost-optimal policies are distribution dependent. 
In general, the following properties hold for active annotation  versus standard annotation (similar findings for random): (i) as the error of $G$, $\MSE(H, G)$, increases, the  benefit \textbf{decreases}; (ii) as the variance of the conditional squared-error of $G$, $\Var(U)$, increases, the  benefit \textbf{increases}; and (iii) as the cost ratio, $\costg/\costh$, of  $G$ relative to $H$ increases, the  benefit \textbf{decreases}.\looseness=-1
%Active annotation is  also more effective when $(H - G)^2$ can be  large, versus when it is bounded to a small range (e.g., Bernoulli data). \jacob{Is this takeaway independent from the point about $\Var(U)$?}\looseness=-1  %When errors are bounded (e.g., Bernoulli data), the headroom of active  over random annotation is more limited.\looseness=-1
\end{AIbox}

\section{Estimating optimal policies on real data}
\label{sec:estimating-policies}

The theoretical results in Section~\ref{sec:optimal-policies} derive optimal annotation policies under the assumption that the key distributional parameters governing the relationship between the expensive rater ($H$) and the cheap rater ($G$) are known. In reality, these parameters --- namely $\mathrm{Var}(H)$, $\mathrm{MSE}(H, G) = \mathbb{E}[(H-G)^2]$, and $U = \mathbb{E}[(H-G)^2 \mid X]$  --- must be estimated (imperfectly). Furthermore, the optimal threshold $\tau^*$ and scaling factor $\gamma(\tau^*)$ for the active policy $\pi_\mathrm{active}$ in Proposition~\ref{prop:optimal-budget} also depend on conditional versions of these unknown quantities (e.g., the conditional MSE, $\mathbb{E}[(H - G)^2 \mid U \leq \tau]$).

Some of these estimates can be derived automatically from the model itself, for example if $g(x) \in [0,1]$ is a binary classifier, we may choose $u(x) = g(x)(1-g(x))$, which is equal to $\E[(H-g(x))^2 \mid X = x]$ when $g(x) =  \P(H=1 \mid X = x)$. Alternatively, $u(x)$ can be an entirely separate prediction, such as by asking an LLM directly for its confidence~\cite{kadavath2022language,xiong2024can}. For the parameters $\Var(H)$, $\MSE(H, G)$, $\gamma(\tau^*)$, and $\tau^*$, we explore estimating them using the following approaches:\looseness=-1

\textbf{Policy transfer from related datasets (A1).} In this approach, we \emph{transfer} all parameters necessary for $\pi$ from a separate, but related, dataset.  Specifically, in Section~\ref{sec:datasets}, we use data from the Chatbot Arena dataset~\cite{zheng2023judging, chiang2024chatbot} to estimate the win-rate of GPT-4 over Claude 2.1, but transfer the parameters for $\pi_\mathrm{random}$ and $\pi_\mathrm{active}$ from a separate set of comparisons between different available models. See Section~\ref{sec:datasets}  for details. We also calibrate $G$ using Platt scaling~\cite{platt1999probabilistic} on the transfer dataset.\looseness=-1

\textbf{Policy burn-in on the first $n_b$ examples (A2).} When a suitable transfer dataset is not available as in A1, we can take a hybrid approach where we start by sampling $H$ for the first $n_\mathrm{b} = 200$ examples with probability $1$, and then use them to estimate the parameters necessary for $\pi_\mathrm{active}$ and $\pi_\mathrm{random}$.  We also calibrate $G$ using Platt scaling on these $n_b$ examples. As a fair comparison to the baseline method of only using $H$, we also allow these $n_b$ samples to be used as additional data for estimating $\theta = \E[H]$. Specifically, we use the (estimated) inverse-variance-weighted average of the annotation policy $\pi$'s estimate, $\hat{\theta}_T^\pi$, and the  classical estimate on the burn-in data, $\hat{\theta}_{n_b} = \frac{1}{n_b}\sum_{i=1}^{n_b} H_i$,
    \begin{equation}
    \label{eq:combo}
        \hat{\theta}^\pi_{T^\mathrm{stop} + n_b}  = \frac{\widehat{\Var}\big(\hat{\theta}_{T^\mathrm{stop}}^\pi \big)}{\widehat{\Var}(\hat{\theta}_{n_b}) + \widehat{\Var}\big(\hat{\theta}_{T^\mathrm{stop}}^\pi\big)} \hat{\theta}_{n_b} + \frac{\widehat{\Var}(\hat{\theta}_{n_b})}{\widehat{\Var}(\hat{\theta}_{n_b}) + \widehat{\Var}(\hat{\theta}_{T^{\mathrm{stop}}}^\pi)} \hat{\theta}_{T^{\mathrm{stop}}}^\pi,
    \end{equation}
    where $\widehat{\Var}(\cdot)$ is also estimated on the burn-in data. Note that $\hat{\theta}_{T^\mathrm{stop} + n_b}$ is still unbiased. 

To get a sense of how close to optimal the estimated policies are,  we also compute an  \textbf{Oracle}: $\pi_\mathrm{active}$ with parameters computed using the whole dataset, and $u(x)$ taken directly as $\vert h(x) - g(x)\vert^2$.\looseness=-1

\subsection{Metrics}
We compare the baseline method $\pi_\mathrm{base}$ of always sampling $H$ with the random policy $\pi_\mathrm{random}$ and the active policy $\pi_\mathrm{active}$. For each policy, we compute the \textbf{mean squared error}, $\E[(\hat{\theta}_T^\pi - \theta^*)^2]$, for a range of budgets $B$ ($\costh$ is normalized to be one "cost unit"), with 95\% bootstrap CIs shown over $2k$ trials. We then compute the \textbf{mean effective budget}, which we define as the budget $B'$ required for $\pi_\mathrm{base}$ to achieve the same MSE as the given policy $\pi$ at a budget $B$. If $\pi$ is more cost-effective than $\pi_\mathrm{base}$, then $B'$ will be larger than $B$ (higher is better). Finally, we also compute the \textbf{mean cost savings} for a given mean-squared error, which we define as the budget deficit relative to $\pi_\mathrm{base}$ required to achieve that target error (higher is better). By definition, we have that the mean effective budget for $\pi_\mathrm{base}$ is the line $y = x$ (since $B' = B$ always), while  the  cost savings for $\pi_\mathrm{base}$ is  $0$.

\subsection{Datasets}
\label{sec:datasets}

We report experimental results on four datasets, which span a diverse range of weak and strong raters $G$ and $H$ and distributional characteristics. For each task, we calculate the target metric $\theta^* = \E[H]$ using the full dataset. For simplicity we assume that the total number of  data points $X_t$  is at least $\lceil B / \costg \rceil$, and sample with replacement from the original dataset until this condition is met. We leave treatment of finite datasets where $T^\mathrm{stop} \leq T^\mathrm{max}$ (and the constraint is active) to future work. See Appendix~\ref{app:additional_results} for results on two additional datasets, ImageNet~\cite{deng2009imagenet} and Seahorse~\cite{clark-etal-2023-seahorse}.

\textbf{Chatbot Arena.} The Chatbot Arena dataset~\cite{zheng2023judging, chiang2024chatbot} evaluates LLMs  via pairwise comparisons (i.e., eliciting preferences for response A vs. B from two models for the same query). Among the $64$ models present in the $57k$ total comparisons in the dataset, we focus on estimating the win-rate of GPT-4 (specifically, the $11/06$ preview model) versus Claude 2.1---as they are both strong models, and  also have the most pairwise comparisons in the dataset ($1073$ total), which allows us to get a reliable estimate of $\theta^*$. We  model $H$ via the majority preference from 10 Gemini 1.5 Flash~\cite{geminiteam2024gemini15unlockingmultimodal} evaluations (5 samples each comparing A vs. B and B vs. A to mitigate position bias).
$G$ is the win probability predicted by a Gemma-3 4B model~\cite{gemmateam2025gemma3technicalreport} which has been fine-tuned on the other model comparisons from the dataset to predict the Gemini labels.  $U$ is computed as $G(1-G)$.

\textbf{Chatbot Arena (estimated easy/hard split).} In an effort to include a dataset with more (identifiable) heteroskedasticy, we also include a filtered version of the GPT-4 versus Claude 2.1 task described above, where we  construct a dataset slice containing only the examples corresponding to the bottom 25\% and top 25\% of Gemma's uncertainty estimates (we use $U$ as the  metric).  While partly manipulated, this scenario is designed to test for potential gains from actively choosing when to query the expensive rater, as per the intuition from Section~\ref{sec:synthetic_data}, where it was shown how higher $\mathrm{Var}(U)$ benefits active policies (though note this may not be true if the estimated $U$ is inaccurate).

\begin{figure}[!t]
    \centering
    \includegraphics[width=1\linewidth]{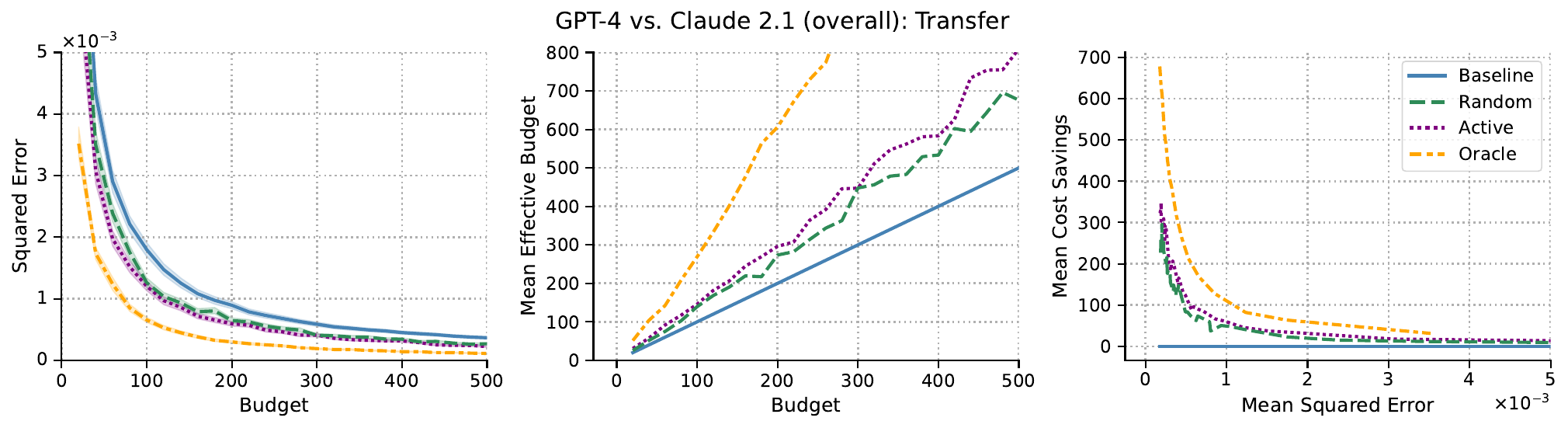}
    \includegraphics[width=1\linewidth]{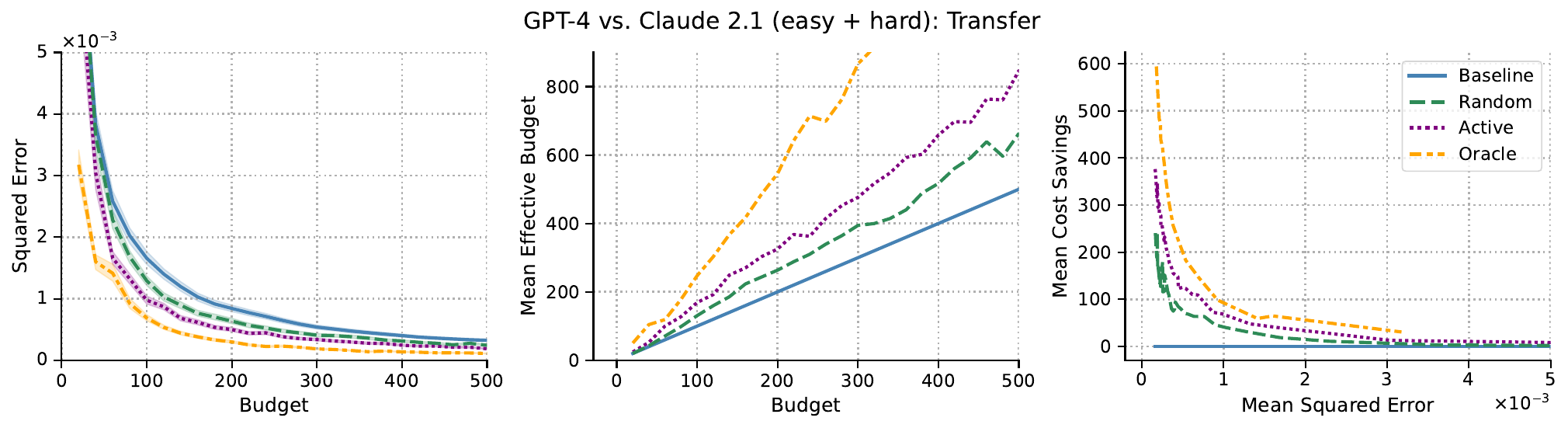}
    \vspace{-12pt}
    \caption{Results for estimating the win-rate of GPT-4 vs. Claude-2.1 on the Chatbot Arena dataset when  using policy transfer (see approach A1 in Section~\ref{sec:estimating-policies}). Both $\pi_\mathrm{random}$ and $\pi_\mathrm{active}$ substantially improve estimation quality over $\pi_\mathrm{base}$ for a given budget. Consistent with our theory, $\pi_\mathrm{active}$'s performance benefits are substantially magnified on the heterogenous easy/hard split (bottom row). %\jacob{how are errorbars computed?}\adam{see 4.1}
    } 
    \vspace{-10pt}
    \label{fig:arena}
\end{figure}

\textbf{AQA.}  Attributed Question Answering (AQA)~\cite{bohnet2023attributed} assesses if a QA system's answer is both correct \emph{and} supported by the text of a document provided as evidence for it (also by the QA system). We evaluate the highest-scoring "retrieve-and-read" system from the dataset. $H$ is a binary human label that is 1 if the answer is both correct \emph{and} attributable, and 0 otherwise. $G$ is the probability predicted by an 11B parameter T5 model~\cite{JMLR:v21:20-074} that the answer is attributable. The T5 model is finetuned on a collection of natural language entailment tasks~\cite{honovich2022true}. $U$ is computed as $G(1-G)$.\looseness=-1

\subsection{Results}
\label{sec:experimental-results}

 \begin{figure}[!t]
    \centering
    \includegraphics[width=\linewidth]{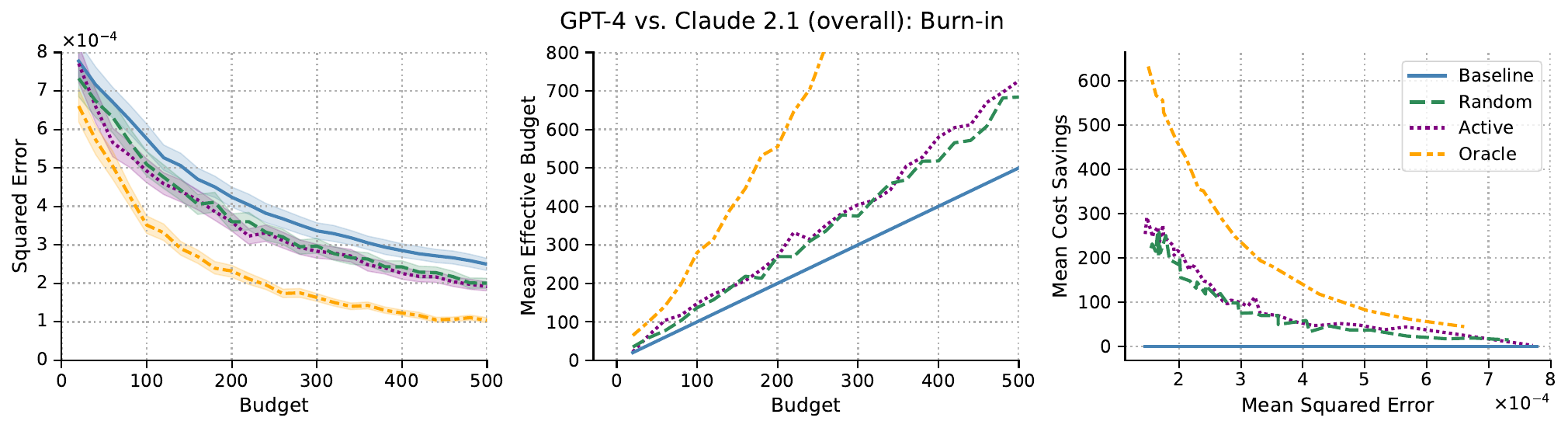} 
    \includegraphics[width=\linewidth]{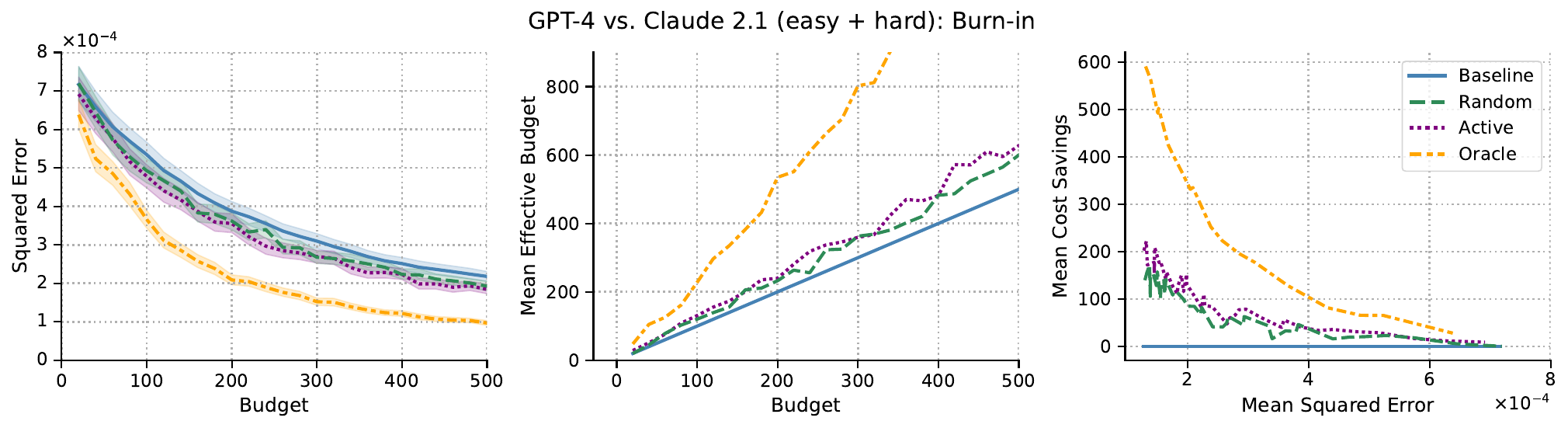} 
    \includegraphics[width=\linewidth]{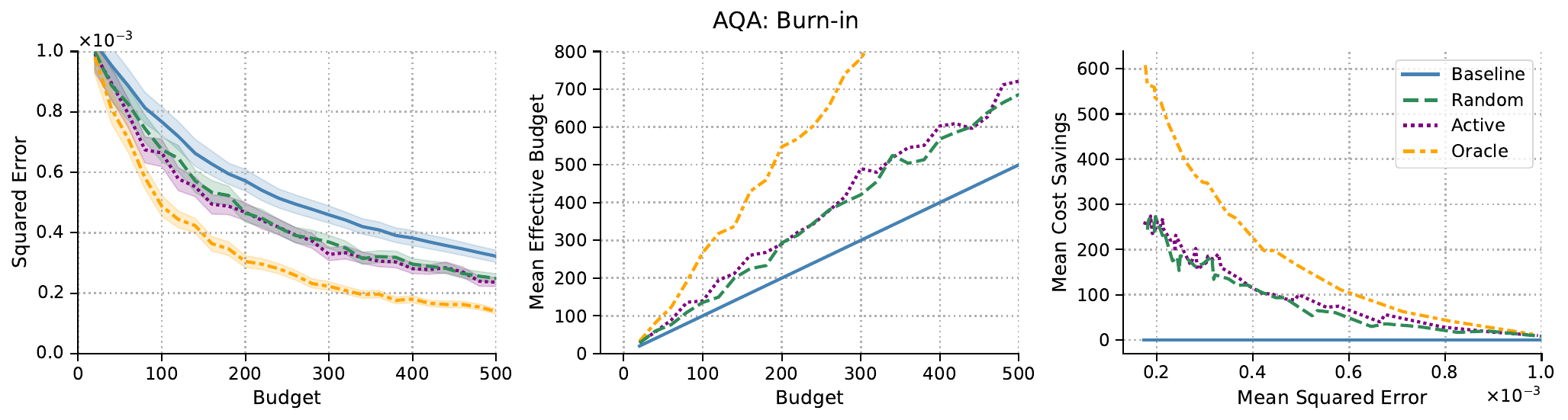} 
    \vspace{-14pt}
    \caption[]{Results on Chatbot Arena and AQA using $200$ examples as a burn-in to estimate  policy parameters, and then switching to the initialized annotation policy thereafter  (see approach A2 in Section~\ref{sec:estimating-policies}; note that budgets $B$ on the x-axis  reflect the ``additional'' budget used \emph{after} the burn-in examples). While the absolute differences in squared errors for the estimated means are smaller than in the transfer setting in Figure~\ref{fig:arena}, both $\pi_\mathrm{random}$ and $\pi_\mathrm{active}$ still achieve consistent improvements over $\pi_\mathrm{base}$.\looseness=-1} %in  error reduction,  mean effective budget, and mean cost savings over $\pi_\mathrm{base}$.}
    \label{fig:burnin}
    \vspace{-10pt}
\end{figure}

Figure~\ref{fig:arena} shows results for the Chatbot Arena datasets using the \emph{transfer} approach (A1), while Figure~\ref{fig:burnin} shows results for all datasets using the \emph{burn-in} approach (A2).\footnote{We also apply power tuning~\cite{angelopoulos2023ppi++} after all samples are collected. See Appendix~\ref{sec:power_tuning} for details.} As expected, the absolute improvement for both $\pi_\mathrm{active}$ over $\pi_\mathrm{random}$ and $\pi_\mathrm{random}$ over $\pi_\mathrm{base}$ is greatest in the transfer setting in Figure~\ref{fig:arena}, where the parameters of $\pi_\mathrm{random}$ and $\pi_\mathrm{active}$ can be approximated in advance. In particular, to achieve a root mean-squared error (RMSE) of $0.05$, $\pi_\mathrm{active}$ requires only $\approx 40\%$ of the budget required by $\pi_\mathrm{base}$ in the overall setting of Chatbot Arena, and only $\approx 50\%$ of the budget in the easy/hard setting. These cost savings become even more pronounced the more precise (i.e., lower MSE) estimates are required to be. 
In Figure~\ref{fig:burnin}, where the first  $n_b = 200$ examples are fully labeled in order to estimate the parameters of $\pi_\mathrm{random}$ and $\pi_\mathrm{active}$, the absolute \emph{difference} in MSE is smaller for $\pi_\mathrm{random}$ and  $\pi_{active}$ over $\pi_\mathrm{base}$,  though the subsequent cost savings over $\pi_\mathrm{base}$ for achieving lower and lower MSE (that is, past the MSE of the initial $n_b$ sample estimate) are  consistent. 

That said, while the annotation policies $\pi_\mathrm{random}$ and $\pi_\mathrm{active}$ help achieve more accurate estimates of $\E[H]$ overall, the results in Figure~\ref{fig:arena} and Figure~\ref{fig:burnin} also show that the extent of the improvement in estimation accuracy varies per dataset (see also the additional results in Appendix~\ref{app:additional_results}). In particular, the best results are obtained on the easy/hard split of the Chatbot Arena dataset, where (i) the weak annotator $G$ is a good proxy of the strong annotator $H$ (both are LLMs), and (ii) there is more variability in the difficulty of examples according to the predicted $U$, resulting in a greater opportunity  for improvement for $\pi_\mathrm{active}$.  On the other hand, while results on AQA and the homogeneous split of the Chatbot Arena dataset also show improvements for $\pi_\mathrm{random}$ and $\pi_\mathrm{active}$ over $\pi_\mathrm{base}$,  the relative improvement of $\pi_\mathrm{active}$ over $\pi_\mathrm{random}$ is fairly small---indicating that while the weak annotator $G$ that is used  is relatively good on average, there is not much variability in its estimated uncertainty, $u(x)$, on those distributions. To that point, when we compare to performance using the oracle active policy, it is also clear that the estimated $u(x)$ is also far from perfect. Even on the datasets where the improvement due to the estimated active policy is small, the oracle policy which has knowledge of the true error of $G$ often promises significant headroom: indicating that the working on better autorater uncertainty estimation is a promising and important direction for future work.\looseness=-1

\section{Conclusion}
\label{sec:conclusion}

This paper introduces \textbf{active evals}, a framework that strategically combines cheap, automated raters with more expensive, accurate alternatives to improve evaluation efficiency. We derive annotation policies that are optimal in the sense of minimizing expected error under annotation budget constraints, and we empirically characterize the conditions under which such policies yield improvements over non-hybrid (e.g., human-only) and non-active hybrid alternatives. However, the theoretically-optimal annotation policies developed here are distribution dependent, with a few task-specific parameters that must be estimated. Furthermore, truly optimal \emph{active} annotation depends on having an accurate \emph{conditional} uncertainty estimates, which may be more difficult to obtain than the original expected quality estimates, particularly in a "cold-start" setting. However, many realistic evaluation scenarios involve incrementally adding new models to existing benchmarks; as shown in \S\ref{sec:experimental-results}, policy transfer can work quite well. 
All that said, the optimal fixed sampling rate policy that we derive in this paper, always provides a substantial improvement over baselines in our experiments.
Our results indicate that even when active sampling is difficult for the reasons outlined above, the simple strategy remains useful.\looseness=-1

\bibliographystyle{plainnat}
\bibliography{bib}

\begin{thebibliography}{33}
\providecommand{\natexlab}[1]{#1}
\providecommand{\url}[1]{\texttt{#1}}
\expandafter\ifx\csname urlstyle\endcsname\relax
  \providecommand{\doi}[1]{doi: #1}\else
  \providecommand{\doi}{doi: \begingroup \urlstyle{rm}\Url}\fi

\bibitem[Angelopoulos et~al.(2023{\natexlab{a}})Angelopoulos, Bates, Fannjiang, Jordan, and Zrnic]{angelopoulos2023prediction}
Anastasios~N Angelopoulos, Stephen Bates, Clara Fannjiang, Michael~I Jordan, and Tijana Zrnic.
\newblock Prediction-powered inference.
\newblock \emph{Science}, 382\penalty0 (6671):\penalty0 669--674, 2023{\natexlab{a}}.

\bibitem[Angelopoulos et~al.(2023{\natexlab{b}})Angelopoulos, Duchi, and Zrnic]{angelopoulos2023ppi++}
Anastasios~N Angelopoulos, John~C Duchi, and Tijana Zrnic.
\newblock {PPI}++: Efficient prediction-powered inference.
\newblock \emph{arXiv preprint arXiv:2311.01453}, 2023{\natexlab{b}}.

\bibitem[Bohnet et~al.(2023)Bohnet, Tran, Verga, Aharoni, Andor, Soares, Ciaramita, Eisenstein, Ganchev, Herzig, Hui, Kwiatkowski, Ma, Ni, Saralegui, Schuster, Cohen, Collins, Das, Metzler, Petrov, and Webster]{bohnet2023attributed}
Bernd Bohnet, Vinh~Q. Tran, Pat Verga, Roee Aharoni, Daniel Andor, Livio~Baldini Soares, Massimiliano Ciaramita, Jacob Eisenstein, Kuzman Ganchev, Jonathan Herzig, Kai Hui, Tom Kwiatkowski, Ji~Ma, Jianmo Ni, Lierni~Sestorain Saralegui, Tal Schuster, William~W. Cohen, Michael Collins, Dipanjan Das, Donald Metzler, Slav Petrov, and Kellie Webster.
\newblock Attributed question answering: Evaluation and modeling for attributed large language models.
\newblock \emph{arXiv preprint: 2212.08037}, 2023.

\bibitem[Boyeau et~al.(2024)Boyeau, Angelopoulos, Yosef, Malik, and Jordan]{boyeau2024autoeval}
Pierre Boyeau, Anastasios~N Angelopoulos, Nir Yosef, Jitendra Malik, and Michael~I Jordan.
\newblock {AutoEval} done right: Using synthetic data for model evaluation.
\newblock \emph{arXiv preprint arXiv:2403.07008}, 2024.

\bibitem[Chaganty et~al.(2018)Chaganty, Mussmann, and Liang]{chaganty-etal-2018-price}
Arun Chaganty, Stephen Mussmann, and Percy Liang.
\newblock The price of debiasing automatic metrics in natural language evalaution.
\newblock In Iryna Gurevych and Yusuke Miyao, editors, \emph{Proceedings of the 56th Annual Meeting of the Association for Computational Linguistics (Volume 1: Long Papers)}, pages 643--653, Melbourne, Australia, July 2018. Association for Computational Linguistics.
\newblock \doi{10.18653/v1/P18-1060}.
\newblock URL \url{https://aclanthology.org/P18-1060/}.

\bibitem[Chatzi et~al.(2024)Chatzi, Straitouri, Thejaswi, and Rodriguez]{chatzi2024ppirank}
Ivi Chatzi, Eleni Straitouri, Suhas Thejaswi, and Manuel~Gomez Rodriguez.
\newblock Prediction-powered ranking of large language models.
\newblock In A.~Globerson, L.~Mackey, D.~Belgrave, A.~Fan, U.~Paquet, J.~Tomczak, and C.~Zhang, editors, \emph{Advances in Neural Information Processing Systems}, volume~37, pages 113096--113133. Curran Associates, Inc., 2024.
\newblock URL \url{https://proceedings.neurips.cc/paper_files/paper/2024/file/cd47cd67caa87f5b1944e00f6781598f-Paper-Conference.pdf}.

\bibitem[Chernozhukov et~al.(2018)Chernozhukov, Chetverikov, Demirer, Duflo, Hansen, Newey, and Robins]{chernozhukov2018double}
Victor Chernozhukov, Denis Chetverikov, Mert Demirer, Esther Duflo, Christian Hansen, Whitney Newey, and James Robins.
\newblock Double/debiased machine learning for treatment and structural parameters.
\newblock \emph{The Econometrics Journal}, 21\penalty0 (1):\penalty0 C1--C68, 01 2018.
\newblock ISSN 1368-4221.
\newblock \doi{10.1111/ectj.12097}.
\newblock URL \url{https://doi.org/10.1111/ectj.12097}.

\bibitem[Chiang et~al.(2024)Chiang, Zheng, Sheng, Angelopoulos, Li, Li, Zhang, Zhu, Jordan, Gonzalez, and Stoica]{chiang2024chatbot}
Wei-Lin Chiang, Lianmin Zheng, Ying Sheng, Anastasios~Nikolas Angelopoulos, Tianle Li, Dacheng Li, Hao Zhang, Banghua Zhu, Michael Jordan, Joseph~E. Gonzalez, and Ion Stoica.
\newblock Chatbot arena: An open platform for evaluating llms by human preference.
\newblock \emph{arXiv preprint: 2403.04132}, 2024.

\bibitem[Clark et~al.(2023)Clark, Rijhwani, Gehrmann, Maynez, Aharoni, Nikolaev, Sellam, Siddhant, Das, and Parikh]{clark-etal-2023-seahorse}
Elizabeth Clark, Shruti Rijhwani, Sebastian Gehrmann, Joshua Maynez, Roee Aharoni, Vitaly Nikolaev, Thibault Sellam, Aditya Siddhant, Dipanjan Das, and Ankur Parikh.
\newblock {SEAHORSE}: A multilingual, multifaceted dataset for summarization evaluation.
\newblock In Houda Bouamor, Juan Pino, and Kalika Bali, editors, \emph{Proceedings of the 2023 Conference on Empirical Methods in Natural Language Processing}, pages 9397--9413, Singapore, December 2023. Association for Computational Linguistics.
\newblock \doi{10.18653/v1/2023.emnlp-main.584}.
\newblock URL \url{https://aclanthology.org/2023.emnlp-main.584}.

\bibitem[Deng et~al.(2009)Deng, Dong, Socher, Li, Li, and Fei-Fei]{deng2009imagenet}
Jia Deng, Wei Dong, Richard Socher, Li-Jia Li, Kai Li, and Li~Fei-Fei.
\newblock Imagenet: A large-scale hierarchical image database.
\newblock In \emph{2009 IEEE conference on computer vision and pattern recognition}, pages 248--255. Ieee, 2009.

\bibitem[Dorner et~al.(2025)Dorner, Nastl, and Hardt]{dorner2025limits}
Florian~E. Dorner, Vivian~Yvonne Nastl, and Moritz Hardt.
\newblock Limits to scalable evaluation at the frontier: {LLM} as judge won{\textquoteright}t beat twice the data.
\newblock In \emph{The Thirteenth International Conference on Learning Representations}, 2025.
\newblock URL \url{https://openreview.net/forum?id=NO6Tv6QcDs}.

\bibitem[Egami et~al.(2023)Egami, Hinck, Stewart, and Wei]{egami2023design}
Naoki Egami, Musashi Hinck, Brandon Stewart, and Hanying Wei.
\newblock Using imperfect surrogates for downstream inference: Design-based supervised learning for social science applications of large language models.
\newblock In A.~Oh, T.~Naumann, A.~Globerson, K.~Saenko, M.~Hardt, and S.~Levine, editors, \emph{Advances in Neural Information Processing Systems}, volume~36, pages 68589--68601. Curran Associates, Inc., 2023.
\newblock URL \url{https://proceedings.neurips.cc/paper_files/paper/2023/file/d862f7f5445255090de13b825b880d59-Paper-Conference.pdf}.

\bibitem[Fisch et~al.(2024)Fisch, Maynez, Hofer, Dhingra, Globerson, and Cohen]{fisch2024stratified}
Adam Fisch, Joshua Maynez, R.~Alex Hofer, Bhuwan Dhingra, Amir Globerson, and William~W. Cohen.
\newblock Stratified prediction-powered inference for effective hybrid evaluation of language models.
\newblock In \emph{The Thirty-eighth Annual Conference on Neural Information Processing Systems}, 2024.
\newblock URL \url{https://openreview.net/forum?id=8CBcdDQFDQ}.

\bibitem[{Gemini Team}(2024)]{geminiteam2024gemini15unlockingmultimodal}
{Gemini Team}.
\newblock Gemini 1.5: Unlocking multimodal understanding across millions of tokens of context, 2024.
\newblock URL \url{https://arxiv.org/abs/2403.05530}.

\bibitem[{Gemma Team}(2025)]{gemmateam2025gemma3technicalreport}
{Gemma Team}.
\newblock Gemma 3 technical report.
\newblock \emph{arXiv preprint: arXiv 2503.19786}, 2025.
\newblock URL \url{https://arxiv.org/abs/2503.19786}.

\bibitem[Gligori{\'c} et~al.(2024)Gligori{\'c}, Zrnic, Lee, Cand{\`e}s, and Jurafsky]{gligoric2024can}
Kristina Gligori{\'c}, Tijana Zrnic, Cinoo Lee, Emmanuel~J Cand{\`e}s, and Dan Jurafsky.
\newblock Can unconfident llm annotations be used for confident conclusions?
\newblock \emph{arXiv preprint arXiv:2408.15204}, 2024.

\bibitem[He et~al.(2016)He, Zhang, Ren, and Sun]{he2016deep}
Kaiming He, Xiangyu Zhang, Shaoqing Ren, and Jian Sun.
\newblock Deep residual learning for image recognition.
\newblock In \emph{Proceedings of the IEEE conference on computer vision and pattern recognition}, pages 770--778, 2016.

\bibitem[Honovich et~al.(2022)Honovich, Aharoni, Herzig, Taitelbaum, Kukliansy, Cohen, Scialom, Szpektor, Hassidim, and Matias]{honovich2022true}
Or~Honovich, Roee Aharoni, Jonathan Herzig, Hagai Taitelbaum, Doron Kukliansy, Vered Cohen, Thomas Scialom, Idan Szpektor, Avinatan Hassidim, and Yossi Matias.
\newblock True: Re-evaluating factual consistency evaluation.
\newblock \emph{arXiv preprint: 2204.04991}, 2022.

\bibitem[Jung et~al.(2025)Jung, Brahman, and Choi]{jung2025trust}
Jaehun Jung, Faeze Brahman, and Yejin Choi.
\newblock Trust or escalate: {LLM} judges with provable guarantees for human agreement.
\newblock In \emph{The Thirteenth International Conference on Learning Representations}, 2025.
\newblock URL \url{https://openreview.net/forum?id=UHPnqSTBPO}.

\bibitem[Kadavath et~al.(2022)Kadavath, Conerly, Askell, Henighan, Drain, Perez, Schiefer, Hatfield-Dodds, DasSarma, Tran-Johnson, et~al.]{kadavath2022language}
Saurav Kadavath, Tom Conerly, Amanda Askell, Tom Henighan, Dawn Drain, Ethan Perez, Nicholas Schiefer, Zac Hatfield-Dodds, Nova DasSarma, Eli Tran-Johnson, et~al.
\newblock Language models (mostly) know what they know.
\newblock \emph{arXiv preprint arXiv:2207.05221}, 2022.

\bibitem[Platt(1999)]{platt1999probabilistic}
John~C Platt.
\newblock Probabilistic outputs for support vector machines and comparisons to regularized likelihood methods.
\newblock In \emph{Advances in large margin classifiers}, pages 61--74. MIT Press, 1999.

\bibitem[Raffel et~al.(2020)Raffel, Shazeer, Roberts, Lee, Narang, Matena, Zhou, Li, and Liu]{JMLR:v21:20-074}
Colin Raffel, Noam Shazeer, Adam Roberts, Katherine Lee, Sharan Narang, Michael Matena, Yanqi Zhou, Wei Li, and Peter~J. Liu.
\newblock Exploring the limits of transfer learning with a unified text-to-text transformer.
\newblock \emph{Journal of Machine Learning Research}, 21\penalty0 (140):\penalty0 1--67, 2020.
\newblock URL \url{http://jmlr.org/papers/v21/20-074.html}.

\bibitem[Ripley(1987)]{Ripley87}
B.~D. Ripley.
\newblock \emph{Stochastic simulation}.
\newblock John Wiley \& Sons, Inc., New York, NY, USA, 1987.
\newblock ISBN 0-471-81884-4.

\bibitem[Robins and Rotnitzky(1995)]{aipw_robins}
James~M. Robins and Andrea Rotnitzky.
\newblock Semiparametric efficiency in multivariate regression models with missing data.
\newblock \emph{Journal of the American Statistical Association}, 90\penalty0 (429):\penalty0 122--129, 1995.
\newblock ISSN 01621459.
\newblock URL \url{http://www.jstor.org/stable/2291135}.

\bibitem[Saad-Falcon et~al.(2024)Saad-Falcon, Khattab, Potts, and Zaharia]{saad-falcon-etal-2024-ares}
Jon Saad-Falcon, Omar Khattab, Christopher Potts, and Matei Zaharia.
\newblock {ARES}: An automated evaluation framework for retrieval-augmented generation systems.
\newblock In Kevin Duh, Helena Gomez, and Steven Bethard, editors, \emph{Proceedings of the 2024 Conference of the North American Chapter of the Association for Computational Linguistics: Human Language Technologies (Volume 1: Long Papers)}, pages 338--354, Mexico City, Mexico, June 2024. Association for Computational Linguistics.
\newblock \doi{10.18653/v1/2024.naacl-long.20}.
\newblock URL \url{https://aclanthology.org/2024.naacl-long.20/}.

\bibitem[Thakur et~al.(2025)Thakur, Choudhary, Ramayapally, Vaidyanathan, and Hupkes]{thakur2025judgingjudgesevaluatingalignment}
Aman~Singh Thakur, Kartik Choudhary, Venkat~Srinik Ramayapally, Sankaran Vaidyanathan, and Dieuwke Hupkes.
\newblock Judging the judges: Evaluating alignment and vulnerabilities in llms-as-judges.
\newblock \emph{ArXiv preprint: arXiv 2406.12624}, 2025.
\newblock URL \url{https://arxiv.org/abs/2406.12624}.

\bibitem[Tsiatis(2006)]{tsiatis2006missing}
Anastasios A. (Anastasios~Athanasios) Tsiatis.
\newblock \emph{Semiparametric theory and missing data}.
\newblock Springer series in statistics. Springer, New York, 2006.
\newblock ISBN 9780387373454.

\bibitem[Van~der Vaart(2000)]{van2000asymptotic}
Aad~W Van~der Vaart.
\newblock \emph{Asymptotic statistics}, volume~3.
\newblock Cambridge university press, 2000.

\bibitem[Xiong et~al.(2024)Xiong, Hu, Lu, LI, Fu, He, and Hooi]{xiong2024can}
Miao Xiong, Zhiyuan Hu, Xinyang Lu, YIFEI LI, Jie Fu, Junxian He, and Bryan Hooi.
\newblock Can {LLM}s express their uncertainty? an empirical evaluation of confidence elicitation in {LLM}s.
\newblock In \emph{The Twelfth International Conference on Learning Representations}, 2024.
\newblock URL \url{https://openreview.net/forum?id=gjeQKFxFpZ}.

\bibitem[Xue et~al.(2020)Xue, Constant, Roberts, Kale, Al-Rfou, Siddhant, Barua, and Raffel]{xue2020mt5}
Linting Xue, Noah Constant, Adam Roberts, Mihir Kale, Rami Al-Rfou, Aditya Siddhant, Aditya Barua, and Colin Raffel.
\newblock mt5: A massively multilingual pre-trained text-to-text transformer.
\newblock \emph{arXiv preprint arXiv:2010.11934}, 2020.

\bibitem[Zheng et~al.(2023)Zheng, Chiang, Sheng, Zhuang, Wu, Zhuang, Lin, Li, Li, Xing, Zhang, Gonzalez, and Stoica]{zheng2023judging}
Lianmin Zheng, Wei-Lin Chiang, Ying Sheng, Siyuan Zhuang, Zhanghao Wu, Yonghao Zhuang, Zi~Lin, Zhuohan Li, Dacheng Li, Eric Xing, Hao Zhang, Joseph~E. Gonzalez, and Ion Stoica.
\newblock Judging {LLM}-as-a-judge with {MT}-bench and chatbot arena.
\newblock In \emph{Thirty-seventh Conference on Neural Information Processing Systems Datasets and Benchmarks Track}, 2023.
\newblock URL \url{https://openreview.net/forum?id=uccHPGDlao}.

\bibitem[Zrnic and Cand{\`e}s(2024)]{zrnic2024active}
Tijana Zrnic and Emmanuel~J Cand{\`e}s.
\newblock Active statistical inference.
\newblock \emph{arXiv preprint arXiv:2403.03208}, 2024.

\bibitem[Zrnic and Candès(2024)]{zrnic2025cross}
Tijana Zrnic and Emmanuel~J. Candès.
\newblock Cross-prediction-powered inference.
\newblock \emph{Proceedings of the National Academy of Sciences}, 121\penalty0 (15):\penalty0 e2322083121, 2024.
\newblock \doi{10.1073/pnas.2322083121}.
\newblock URL \url{https://www.pnas.org/doi/abs/10.1073/pnas.2322083121}.

\end{thebibliography}
\clearpage
\appendix
\addcontentsline{toc}{chapter}{Appendices} % For main TOC
\etocsetnexttocdepth{2}
\localtableofcontents
\clearpage

\section{Broader impacts}
\label{sec:impacts}
This paper describes fundamental research on the evaluation of generative AI systems, which is a core technical challenge.  Hybrid active evaluation has the potential to improve the cost/accuracy tradeoff of system evaluation, which can make high-quality AI systems easier to build, deploy, and monitor. We do not speculate about broader impacts that may follow from this technical contribution.
\section{Additional theoretical results}
\label{app:additional-theory}

\subsection{Derivation of $\Error_T(\pi)$}
\label{sec:derivation}
We provide a short derivation of $ \Error_T(\pi)$ in \eqref{eq:error_t}. Because the estimator $\hat\theta_T^{\pi}$ is unbiased, 
\begin{align}
\E\left[\left(\hat\theta_T^\pi - \theta^*\right)^2\right] =  \Var(\hat\theta_T^{\pi}) = \frac{1}{T} \Var(\Delta^\pi)
\end{align}
when $\pi$ and $g$ are fixed, and where $\Delta^\pi = G + (H - G)^2 \frac{\xi}{\pi(X)}$. Then,
\begin{align}
\Var(\Delta^\pi)
 &= \E\left[\left(G + (H - G)\frac{\xi}{\pi(X)}\right)^2\right] - (\theta^*)^2 \\
    & = \E\left[ G^2 \right] + \E\left[\left((H - G)\frac{\xi}{\pi(X)}\right)^2\right] + 2\E\left[G(H - G)\frac{\xi}{\pi(X)}\right] - (\theta^*)^2 \\
    & = \E\left[ G^2 \right] + \E\left[(H - G)^2\frac{1}{\pi(X)}\right] + 2\E\left[G(H - G)\right] - (\theta^*)^2 \\
    & = \Var(H) - \E[(H-G)^2] + \E\left[(H - G)^2\frac{1}{\pi(X)}\right].
\end{align}

\subsection{Power tuning}
\label{sec:power_tuning}
\citet{angelopoulos2023ppi++} proposed "power tuning" as a way to improve upon the standard PPI estimator by allowing the estimator to adapt to the "usefulness" of the supplementary predictions (here, the weak rater G) with a tuning parameter $\lambda \in \mathbb{R}$. We now extend this to our setting. 

Let us consider a modified version of our estimator, with some fixed policy $\pi$ and $\lambda \in \mathbb{R}$:
\begin{equation}
    \hat\theta_T^{\lambda} = \frac{1}{T}\sum
    \limits_{t=1}^T \lambda G_t + (H_t - \lambda G_t)\frac{\xi_t}{\pi(X_t)}.
\end{equation}
For all values of $\lambda$, this estimator is unbiased.
Our job is to pick the value with minimum error. Following the previous derivation in Section \ref{sec:derivation}, the error of the estimator is
\begin{align}
\Error_{T, \pi}(\lambda) = \frac{1}{T} \left(\Var(H) - \E[(H-\lambda  G)^2] + \E\left[(H - \lambda G)^2\frac{1}{\pi(X)}\right] \right),
\end{align}
which is optimized by
\begin{align}
    \lambda^* &= \argmin_{\lambda \in \mathbb{R}} \E\left[(H - \lambda G)^2\left(\frac{1}{\pi(X)} - 1\right)\right] \\
    &=\argmin_{\lambda \in \mathbb{R}} \lambda^2 \E\left[G^2 \left(\frac{1}{\pi(X)} - 1\right) \right] - 2\lambda \E \left[HG \left(\frac{1}{\pi(X)} - 1\right) \right].
\end{align}

The above expression is quadratic in $\lambda$, and its optimizer is
\begin{equation}
    \lambda^* = \frac{\E\left[H G\left(\frac{1}{\pi(X)}-1\right)\right]}{\E\left[G^2\left(\frac{1}{\pi(X)}-1\right)\right]},
\end{equation}
which can be estimated in any consistent way, e.g., by its prediction-powered plug-in that can be computed after sampling all $(X_t, G_t, H_t, \xi_t)$ as:
\begin{equation}
    \hat\lambda_T = \frac{\frac{1}{T}\sum\limits_{t=1}^T \left(G_t^2 + (H_tG_t - G_t^2)\frac{\xi_t}{\pi_t(X_t)}\right)\left(\frac{1}{\pi_t(X_t)}-1\right)}{\frac{1}{T}\sum\limits_{t=1}^T G_t^2\left(\frac{1}{\pi_t(X_t)}-1\right)}.
\end{equation}

\subsection{Optimal random annotation: discrete time case }
The following proposition is the full version of Proposition~\ref{prop:optimal-random}---with the constraint that $T^{\rm stop}$ is an integer.
This leads to a substantially more complex optimization problem; we show the solution here, but we do not implement it in practice.
\begin{proposition}
    \label{prop:optimal-random-integer}
    Let $(X_1, G_1, H_1), \ldots, (X_T, G_T, H_T)$, $T \in \N$, be an i.i.d. sequence of real-valued random variables with joint distribution $P$, and define $\Error$, $\Cost$, and $\Pi^{\rm random}$ as above.
    Additionally, define the optimization problem
    \begin{equation}
        \begin{aligned}
            \minimize_{\substack{\pi \in \Pi^{\rm random}\\T^{\rm stop} \in \N_+}} \quad & \Error_{T^{\rm stop}}(\pi) \\
            \st \quad & \Cost_{T^{\rm stop}}(\pi) \leq B.
        \end{aligned}
        \label{problem:optimal-random-integer}
    \end{equation}
    Then the optimal solution to Problem~\eqref{problem:optimal-random-integer} is either $\pi^*(x) = 1$ or
    \begin{equation}
        \pi^*(x) = \frac{B-k^*c_g}{k^*c_h}.
    \end{equation}
    for all $x \in \cX$, where
    \begin{equation}
        k^* = \argmin_{k \in \mathcal{K}}\frac{1}{k}\left(\Var(H)-\E[(H-G)^2]\right) + \frac{c_h}{B - kc_g}\E[(H-G)^2],
    \end{equation}
    and
    \begin{equation}
        \mathcal{K} = \left\{  \left\lfloor B\frac{1 + \sqrt{ \frac{c_h}{c_g}\frac{\E[(H-G)^2]}{\Var(H)-\E[(H-G)^2]}}}{c_g - c_h\frac{\E[(H-G)^2]}{\Var(H)-\E[(H-G)^2]}}\right\rfloor ,  \left\lceil B\frac{1 + \sqrt{ \frac{c_h}{c_g}\frac{\E[(H-G)^2]}{\Var(H)-\E[(H-G)^2]}}}{c_g - c_h\frac{\E[(H-G)^2]}{\Var(H)-\E[(H-G)^2]}}\right\rceil\right\}.
    \end{equation}
\end{proposition}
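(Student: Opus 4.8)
The plan is to reduce Problem~\eqref{problem:optimal-random-integer} to a one-dimensional optimization over the integer horizon $k = T^{\rm stop}$, exploiting that on $\Pi^{\rm random}$ every policy is a constant $p \in (0,1]$. Substituting $\pi \equiv p$ into \eqref{eq:error_t} and into $\Cost$, and writing $V \triangleq \Var(H)$ and $M \triangleq \E[(H-G)^2]$ (note $M > 0$ since $\P(G_1 = H_1) < 1$), the problem becomes $\min_{p \in (0,1],\, k \in \N_+} \tfrac{1}{k}\big(V - M + M/p\big)$ subject to $k(c_h p + c_g) \le B$. First I would observe that the objective is strictly decreasing in $p$, so for each fixed $k$ the cost constraint should be tightened as far as possible: the optimal rate given $k$ is $p_k = \min\{1,\ (B - k c_g)/(k c_h)\}$, which is feasible (strictly positive) exactly when $k < B/c_g$.

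This splits the search over $k$ into two regimes. When $k \le B/(c_h + c_g)$ we have $p_k = 1$, the error equals $V/k$, which is decreasing in $k$, so the best choice in this regime is $k = \lfloor B/(c_h+c_g)\rfloor$, yielding the candidate solution $\pi^*(x) \equiv 1$. When $B/(c_h+c_g) < k < B/c_g$ we have $p_k = (B - k c_g)/(k c_h) < 1$, and substituting it back gives the error as a function of $k$ alone, $f(k) = \frac{V-M}{k} + \frac{c_h M}{B - k c_g}$, which is precisely the objective defining $k^*$ in the statement. I would then relax $k$ to $\R_{>0}$: when $V > M$, both terms of $f''$ are positive on $(0, B/c_g)$, so $f$ is strictly convex there and has a unique stationary point $\tilde k$, obtained from the first-order condition $\frac{c_h c_g M}{(B-kc_g)^2} = \frac{V-M}{k^2}$; since both sides are positive on the feasible interval, taking square roots turns this into a linear equation in $k$ whose solution, after rearrangement, is the closed-form $\tilde k$ entering the definition of $\mathcal{K}$. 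By convexity, the minimizer of $f$ over the integers in this regime is attained at $\lfloor \tilde k\rfloor$ or $\lceil \tilde k\rceil$, i.e.\ at some $k^* \in \mathcal{K}$, with $p_{k^*} = (B - k^* c_g)/(k^* c_h)$; the global optimum is then whichever of the two candidates ($\pi \equiv 1$ or $\pi \equiv p_{k^*}$) has the smaller error, which is exactly the dichotomy asserted. When instead $V \le M$ (so that, as in Proposition~\ref{prop:optimal-random}, the constant rate is forced to $1$), $f$ is monotone increasing on the feasible interval and the analysis collapses onto the first regime, giving $\pi^* \equiv 1$; this is the degenerate branch of the ``either/or''.

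The main obstacle I anticipate is bookkeeping around the two simultaneously active constraints---the box constraint $p \in (0,1]$ and the integrality of $k$---rather than any deep idea. Concretely: (i) one must verify that convexity of $f$ genuinely licenses restricting to $\{\lfloor\tilde k\rfloor, \lceil\tilde k\rceil\}$, including the edge case where $\tilde k$ falls outside the open interval $(B/(c_h+c_g),\, B/c_g)$, in which the integer minimizer of $f$ over that interval is pushed to the nearest admissible endpoint and the $\pi\equiv 1$ candidate typically dominates; (ii) one must confirm the two regimes meet consistently at $k \approx B/(c_h+c_g)$ (where $p_k = 1$), so the ``or'' is exhaustive; and (iii) one must treat $V \le M$ separately, since the closed form for $\tilde k$ is only real in the regime $V > M$. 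None of these requires more than careful case analysis combined with the monotonicity in $p$ and the convexity of $f$ established above.
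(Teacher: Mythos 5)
Your proposal is correct and follows essentially the same route as the paper's proof: reduce to the one-dimensional objective $f(k) = \tfrac{1}{k}\left(\Var(H)-\E[(H-G)^2]\right) + \tfrac{c_h}{B-kc_g}\E[(H-G)^2]$, solve the continuous first-order condition for the stationary point $\tilde k$, restrict to $\{\lfloor\tilde k\rfloor,\lceil\tilde k\rceil\}$, and compare against the boundary candidate $p=1$. The only differences are presentational: you optimize $p$ for each fixed integer $k$ via monotonicity of the error in $p$, whereas the paper restricts directly to the $p$ values making $B/(c_h p + c_g)$ integral, and you make explicit the convexity argument (and the $\Var(H)\le\E[(H-G)^2]$ and out-of-range edge cases) that the paper leaves implicit when it "ignores the discreteness of $k$."
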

It is easy to disambiguate between $p^*=1$ and the optimal policy based on $k^*$ by comparing the objective values directly.

\subsection{Extension to convex M-estimators}
Here we give an extension of Proposition~\ref{prop:optimal-budget} to general convex M-estimators~\cite{van2000asymptotic}.
Consider a convex loss function, $\ell_\theta$ for some $\theta \in \R^d$, equipped with the simplified notation $\ell_{\theta, t} = \ell_{\theta}(X_t,H_t)$ for all $t \in \N$ and $\ell_{\theta, t}^g = \ell_{\theta}(X_t,G_T)$.
We also use $\ell_\theta = \ell_{\theta}(X,H)$ and $\ell_\theta^g = \ell_\theta(X,G)$ for generic points $(X,G,H)\sim P$.
The target of estimation is the population minimizer, $\theta^* = \argmin_{\theta \in \R^d}\E[\ell_{\theta}]$.
The active estimator is 
\begin{equation}
    \hat\theta_T = \argmin_{\theta \in \R^d} \frac{1}{T}\sum\limits_{t=1}^T \Delta_{\theta, t} \quad \text{ where } \Delta_{\theta, t} = \ell_{\theta,t}^g + \left(\ell_{\theta,t} - \ell_{\theta,t}^g\right) \frac{\xi_t}{\pi_t(X_t)},
\end{equation}
for some sequence of annotation policies $\pi_t$, $t \in \N$.
For the purpose of deriving optimal annotation policies when $\pi_t$ is fixed as in Section~\ref{sec:optimal-policies}, we will also define 
\begin{equation}
    \hat\theta_T^\pi = \argmin_{\theta \in \R^d} \frac{1}{T}\sum\limits_{t=1}^T \Delta_{\theta, t} \quad \text{ where } \Delta_{\theta, t} = \ell_{\theta,t}^g + \left(\ell_{\theta,t} - \ell_{\theta,t}^g\right) \frac{\xi_t}{\pi(X_t)}.
\end{equation}

Unlike the estimator in the case of mean estimation from Section~\ref{sec:optimal-policies}, $\hat\theta_T^\pi$ does not have a closed-form variance in finite samples.
The standard solution in the analysis of M-estimators is to appeal to the asymptotic linearity of M-estimators to analyze the variance~\cite{van2000asymptotic}, as is done in Theorem 1 of ~\citet{zrnic2024active}.
The result below combines the aforementioned theorem with standard parametric analysis to give the asymptotic distribution of the squared error.
\begin{proposition}
    \label{prop:m-estimator-mse-dist}
    Let $\ell_\theta$ be smooth (see Assumption 1 in~\cite{zrnic2024active}) and define the Hessian $W_{\theta^*} = \nabla^2\E[\ell_{\theta^*,t}]$.
    Then if $\hat\theta^{\pi}_T \overset{p}{\to} \theta^*$, we have
    \begin{equation}
        \sqrt{T}(\hat\theta_T^{\pi} - \theta^*) \overset{d}{\to} \cN(0,\Sigma^*), 
    \end{equation}
    where $\Sigma^* = W_{\theta^*}^{-1} \Var\left( \nabla \ell_{\theta^*,t}^g + \left( \nabla \ell_{\theta^*,t} - \nabla \ell_{\theta^*,t}^g \right) \frac{\xi_t}{\pi(X_t)} \right) W_{\theta^*}^{-1}$.
    Therefore, we have
    \begin{equation}
        T\left\|\hat\theta_T^{\pi} - \theta^*\right\|^2_2 \overset{d}{\to} \sum_{j \in [d]} \lambda_j \zeta_j,
    \end{equation}
    where $\zeta_j \iidsim \chi^2_1$ for all $j \in [d]$ and $\lambda_j$ is the $j$th eigenvalue of $\Sigma^*$.
\end{proposition}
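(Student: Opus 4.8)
The plan is to build the result from two standard ingredients. The first is Theorem~1 of~\citet{zrnic2024active}, which establishes a central limit theorem for active sequential estimators of this exact form; I would verify that its smoothness hypothesis (Assumption~1 in that reference) together with the stated consistency $\hat\theta_T^\pi \overset{p}{\to}\theta^*$ are precisely the conditions invoked here, and then read off the asymptotic covariance. The second ingredient is the classical sandwich/delta-method expansion for M-estimators: under smoothness, $\hat\theta_T^\pi$ is asymptotically linear with influence function $-W_{\theta^*}^{-1}\big(\nabla\ell_{\theta^*}^g + (\nabla\ell_{\theta^*}-\nabla\ell_{\theta^*}^g)\,\xi/\pi(X)\big)$, where the centering is legitimate because $\E[\nabla\ell_{\theta^*}]=0$ and the inverse-probability reweighting is unbiased for $\E[\nabla\ell_{\theta^*}]$. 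Combining asymptotic linearity with the martingale CLT (the policy and predictor being fixed here, or converging in the online case) gives $\sqrt{T}(\hat\theta_T^\pi-\theta^*)\overset{d}{\to}\cN(0,\Sigma^*)$ with the claimed $\Sigma^*$.

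For the second display, I would apply the continuous mapping theorem to $Z_T \triangleq \sqrt{T}(\hat\theta_T^\pi-\theta^*)$ under the map $z\mapsto \|z\|_2^2$, giving $T\|\hat\theta_T^\pi-\theta^*\|_2^2 = \|Z_T\|_2^2 \overset{d}{\to}\|Z\|_2^2$ where $Z\sim\cN(0,\Sigma^*)$. It then remains to identify the law of $\|Z\|_2^2$ for a centered Gaussian vector. Writing the spectral decomposition $\Sigma^* = Q\Lambda Q^\T$ with $\Lambda=\diag(\lambda_1,\dots,\lambda_d)$ and $Q$ orthogonal, the rotated vector $\tilde Z \triangleq Q^\T Z \sim \cN(0,\Lambda)$ has $\|\tilde Z\|_2^2 = \|Z\|_2^2$ by orthogonal invariance of the Euclidean norm; its components $\tilde Z_j$ are independent with $\tilde Z_j\sim\cN(0,\lambda_j)$, so $\tilde Z_j^2/\lambda_j \iidsim \chi^2_1$ and $\|Z\|_2^2 = \sum_{j\in[d]}\lambda_j\zeta_j$ with $\zeta_j\iidsim\chi^2_1$. (If $\Sigma^*$ is singular the corresponding $\lambda_j=0$ and that term drops; the statement is unaffected.)

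I anticipate the only real subtlety is justifying the first convergence rigorously in the online setting, where $\pi_t$ and $g$ are data-dependent and only converge: there one needs the martingale-difference structure of $\Delta_{\theta^*,t}-\E[\Delta_{\theta^*,t}\mid\mathcal{F}_{t-1}]$ and a Slutsky-type argument to replace the running $\pi_t$ by its limit inside the variance, which is exactly the content imported from~\citet{zrnic2024active}. With $\pi$ and $g$ held fixed, as the proposition's preamble allows, this difficulty disappears and the argument is entirely routine: a single application of their CLT followed by the continuous mapping theorem and the Gaussian-quadratic-form identity above. The remaining steps — checking that the Hessian $W_{\theta^*}=\nabla^2\E[\ell_{\theta^*}]$ is well-defined and invertible, and that the variance in $\Sigma^*$ is finite — are covered by the smoothness assumption and do not require separate work.
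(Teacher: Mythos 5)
Your proposal is correct and follows essentially the same route as the paper: the asymptotic normality is read off from Theorem~1 of \citet{zrnic2024active}, and the limiting law of $T\|\hat\theta_T^\pi-\theta^*\|_2^2$ is obtained by (implicitly, in the paper's case) the continuous mapping theorem followed by diagonalizing $\Sigma^*$ in an orthogonal eigenbasis so that the squared norm becomes a $\lambda_j$-weighted sum of independent $\chi^2_1$ variables. Your write-up is in fact slightly more careful than the paper's on two points---you make the continuous mapping step explicit and you handle the rotation as $\tilde Z = Q^\T Z \sim \cN(0,\Lambda)$ rather than the paper's looser $\|\Lambda Z'\|_2^2$ notation---but these are presentational, not substantive, differences.
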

The above proposition gives us consistency of the active estimator, and more importantly, the asymptotic distribution of the squared error.
Since $\E[\zeta_j] = 1$ for all $j$, and the sum of the eigenvalues of a square matrix is equal to the trace, we know the mean-squared error is asymptotically equal to $\Error_T(\pi) = \frac{1}{T}\Tr(\Sigma^*)$.
With this in hand, we can use the same strategy from earlier to find the optimal annotation policy, using the asymptotic approximation of the error. For simplicity, here we assume that we are always on the interior of the constrained optimization problem, i.e., we solve for unconstrained $\pi(x)$ while assuming that $\gamma^* \sqrt{u(X)} \leq 1$. That said, a more rigorous treatment analogous to that in Proposition~\ref{prop:optimal-budget} can also be applied here, which we leave to future work.
\begin{proposition}
    \label{prop:convex-budget}
    In the setting of Proposition~\ref{prop:m-estimator-mse-dist}, let $(X_1, G_1, H_1), \ldots, (X_T, G_T, H_T)$, $T \in \N$, be an i.i.d. sequence of real-valued random variables with joint distribution $P$, and define $\Error_T(\pi) = \frac{1}{T}\sum_{j\in [d]}\Tr(\Sigma^*)$.
    Furthermore, define $\Cost$ and $\Pi$ as in Proposition~\ref{prop:optimal-budget}.
    
    Construct the optimization problem
    \begin{equation}
        \begin{aligned}
            \minimize_{\pi \in \mathcal{F},~T^{\rm stop} \in \R_{>0}} \quad & \Error_{T^{\rm stop}}(\pi) \\
            \st \quad & \Cost_{T^{\rm stop}}(\pi) \leq B.
        \end{aligned}
        \label{problem:convex-budget}
    \end{equation}
    where $\mathcal{F} = \{ x \mapsto f(x) : f(x) \in (0, \infty); \forall x \in \cX \}$.
    Then the solution to Problem~\eqref{problem:convex-budget} is
    \begin{equation}
        \pi^*(x) = \sqrt{\frac{c_g}{c_h} \cdot \frac{u(x)}{C}}
    \end{equation}
    where
    \begin{equation}
         u(x) = \E\left[\Tr\left(W_{\theta^*}^{-1}\left( \nabla \ell_{\theta^*} - \nabla \ell_{\theta^*}^g \right)\left(\nabla \ell_{\theta^*} - \nabla \ell_{\theta^*}^g \right)^\top W_{\theta^*}^{-1}\right) \mid X = x\right],
    \end{equation}
    and 
    \begin{equation}
    \begin{split}
         C = \Tr\Big(W_{\theta^*}^{-1}\Big(\E\left[\nabla \ell_{\theta^*}^g (\nabla \ell_{\theta^*})^\top + (\nabla \ell_{\theta^*} - \nabla \ell_{\theta^*}^g)(\nabla \ell_{\theta^*}^g)^\top \right] - \E[\nabla \ell_{\theta^*}]\E[\nabla \ell_{\theta^*}]^\top\Big) W_{\theta^*}^{-1}\Big).
        \end{split}
    \end{equation}
\end{proposition}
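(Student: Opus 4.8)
The plan is to reduce Problem~\eqref{problem:convex-budget} to exactly the scalar cost-constrained allocation problem that underlies Propositions~\ref{prop:optimal-random} and~\ref{prop:optimal-budget}, with the scalar ``uncertainty'' now being the trace of a conditional sandwich matrix, and then to solve that problem in closed form. The first step is to obtain an explicit formula for $\Error_T(\pi) = \frac{1}{T}\Tr(\Sigma^*)$. Writing $V \triangleq \nabla\ell_{\theta^*}$ and $V^g \triangleq \nabla\ell_{\theta^*}^g$, I would expand the matrix second moment of $V^g + (V - V^g)\xi/\pi(X)$ exactly as in the scalar derivation in Appendix~\ref{sec:derivation}, now tracking outer products instead of squares. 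Using that $\xi \mid X \sim \Bern(\pi(X))$ is independent of $(V, V^g)$ (so $\E[\xi/\pi(X) \mid X] = 1$ and $\E[\xi^2/\pi(X)^2 \mid X] = 1/\pi(X)$), together with $V = V^g + (V - V^g)$, the cross terms should collapse and leave
\begin{equation}
\Var\bigl(V^g + (V - V^g)\tfrac{\xi}{\pi(X)}\bigr) = \Var(V) + \E\bigl[(V - V^g)(V - V^g)^\top(1/\pi(X) - 1)\bigr].
\end{equation}
Conjugating by $W_{\theta^*}^{-1}$, taking the trace, using linearity of $\Tr$ (the scalar $1/\pi(X)-1$ pulls out of the trace), and then the tower property over $X$ should give $\Error_T(\pi) = \frac{1}{T}\bigl(C + \E[u(X)/\pi(X)]\bigr)$, with $u(x)$ and $C$ equal to the displayed expressions; the only thing to verify is that $\Tr(W_{\theta^*}^{-1}\Var(V)W_{\theta^*}^{-1}) - \E[u(X)]$ is the stated $C$, which follows from the algebraic identity $\E[VV^\top - (V - V^g)(V - V^g)^\top] = \E[V^g V^\top + (V - V^g)(V^g)^\top]$.

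With that identity in hand, the rest is the same template as Propositions~\ref{prop:optimal-random}--\ref{prop:optimal-budget} minus the clipping. Since $\Error_{T^{\rm stop}}(\pi)$ is strictly decreasing in $T^{\rm stop}$ (the bracketed factor is the trace of a PSD matrix, hence positive), the budget binds at the optimum, so I would substitute $T^{\rm stop} = B/(\costh\E[\pi(X)] + \costg)$ and turn the problem into minimizing $\frac{1}{B}(\costh\E[\pi(X)] + \costg)(C + \E[u(X)/\pi(X)])$ over $\pi \in \cF$. I would then optimize in two stages, shape then scale. Fixing the scale $a = \E[\pi(X)]$ freezes the prefactor, leaving $\min\{\E[u(X)/\pi(X)] : \E[\pi(X)] = a\}$, which by Cauchy--Schwarz equals $(\E[\sqrt{u(X)}])^2/a$ and is attained uniquely at the Neyman shape $\pi(x) = a\sqrt{u(x)}/\E[\sqrt{u(X)}]$. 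Substituting this reduces the problem to minimizing the scalar $g(a) = (\costh a + \costg)(C + m^2/a)$ over $a > 0$, with $m = \E[\sqrt{u(X)}]$; $g$ is strictly convex with unique stationary point $a^* = m\sqrt{\costg/(\costh C)}$, and plugging $a^*$ back into the shape cancels $m$ and yields $\pi^*(x) = \sqrt{(\costg/\costh)(u(x)/C)}$, as claimed.

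The genuinely delicate part is the matrix bookkeeping of the first step: getting every cross term and outer-product factor right so that the complicated-looking $C$ in the statement is recognized as $\Tr(W_{\theta^*}^{-1}\Var(V)W_{\theta^*}^{-1}) - \E[u(X)]$, and checking that we are in the regime the ``interior'' caveat before the statement presupposes. For the latter, note $\Tr(\Sigma^*) \ge 0$ for every $\pi$ and $\E[u(X)/\pi(X)] \to 0$ as $\pi \to \infty$ uniformly, so $C \ge 0$ automatically; the closed form for $\pi^*$ additionally needs $C > 0$ --- the M-estimator analogue of the condition $\MSE(H,G) < \frac{\costh}{\costh + \costg}\Var(H)$ in Proposition~\ref{prop:optimal-random} --- and $u > 0$ a.s.\ so that $\pi^* \in \cF$. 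Everything after Step~1 is the standard two-stage Cauchy--Schwarz-plus-one-dimensional-calculus argument and should be routine; in particular no clipping term appears precisely because $\cF$ carries no upper constraint, which is the simplification flagged just before the proposition.
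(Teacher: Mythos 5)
Your proposal is correct and reaches the paper's result, with the first half (the matrix bookkeeping) matching the paper's proof essentially line for line: both expand $\Var\bigl(\nabla\ell^g_{\theta^*} + (\nabla\ell_{\theta^*}-\nabla\ell^g_{\theta^*})\tfrac{\xi}{\pi(X)}\bigr)$ using $\E[\xi\mid X]=\pi(X)$ and $\xi^2=\xi$, push the trace through, and arrive at $\Tr(\Sigma^*) = C + \E[u(X)/\pi(X)]$ with the same identification of $C$; your check that the displayed $C$ equals $\Tr(W_{\theta^*}^{-1}\Var(\nabla\ell_{\theta^*})W_{\theta^*}^{-1}) - \E[u(X)]$ is the same algebra the paper performs implicitly. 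Where you genuinely diverge is the optimization step. The paper discretizes $\cX$ (inheriting the vector notation of Proposition~\ref{prop:optimal-budget}), differentiates the objective coordinate-wise in $p_x$, reads off $p_x \propto \sqrt{\E[M\mid X=x]}$ from the stationarity conditions, and then substitutes $\pi=\gamma\sqrt{u}$ to reduce to a one-dimensional problem in $\gamma$ solved by calculus. You instead split the problem into shape and scale: for fixed $a=\E[\pi(X)]$, Cauchy--Schwarz gives the Neyman allocation $\pi(x)=a\sqrt{u(x)}/\E[\sqrt{u(X)}]$ as the exact minimizer of $\E[u(X)/\pi(X)]$, and the remaining scalar problem in $a$ is strictly convex. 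Your route buys two things: it does not require $\cX$ to be discrete, and the Cauchy--Schwarz step certifies global optimality of the shape rather than mere stationarity. The paper's route has the advantage of being uniform with the proofs of Propositions~\ref{prop:optimal-budget} and~\ref{prop:optimal-random-integer} and of extending more directly to the clipped/constrained variant. Your closing remarks on the regularity needed ($C>0$, $u>0$ a.s.) correctly identify conditions the paper leaves implicit under its "interior" caveat. One small caution: the two-stage argument tacitly assumes the inner minimum is attained and finite, i.e.\ $\E[\sqrt{u(X)}]<\infty$ and $u>0$ a.s.; this is harmless here but worth stating if you write the argument out in full.
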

\begin{remark} When $\pi^*(x) \leq 1$, $\forall x \in \cX$, then $\pi^*$ is also optimal for Problem~\eqref{problem:convex-budget} solved for $\pi \in \Pi$.
\end{remark}

\subsubsection{Mean estimation}
In the case of mean estimation, the loss function takes the form
\begin{equation}
    \ell_{\theta}(x,h) = \frac{1}{2}(h - \theta)^2,
\end{equation}
where $\nabla \ell_{\theta^*}(X,H) - \nabla \ell_{\theta^*}(X,G) = H - G$, and $W_{\theta^*}$ is the identity matrix. Plugging back into $\pi^*$ in Proposition~\ref{prop:convex-budget} recovers $\pi_\mathrm{active}$ from Proposition~\ref{prop:optimal-budget} without clipping ($\tau^* = \infty$), i.e.,
$$\sqrt{\frac{\costg}{\costh} \frac{\E[(H - G)^2 \mid X = x]}{\Var(H) - \E[(H - G)^2]}}.$$

\subsubsection{Generalized linear models}
In the case of GLMs, the loss function takes the form
\begin{equation}
    \ell_{\theta}(x,h) = -hx^\top \theta + \psi (x^\top \theta)
\end{equation}
for some convex log-partition function $\psi$.
Thus, $\nabla \ell_{\theta^*}(X,H) - \nabla \ell_{\theta^*}(X,G) = (G - H)X$.
So, again by the linearity of the trace, we have that $$\pi^*(x) \propto \sqrt{\E\left[(H - G)^2 \mid X = x \right] \Tr\left( W_{\theta^*}^{-1}xx^\top W_{\theta^*}^{-1} \right)}.$$

\subsection{Effects of noisy policy parameters on estimator variance}
In practice, we will be using only an imperfect estimate of $u(x)$ for $\pi_\mathrm{active}$, which can negatively affect the performance of $\pi_\mathrm{active}$  to a substantial degree,  as we have seen for some of the datasets in Section~\ref{sec:estimating-policies}. Similarly, we will also only be using imperfect estimates of  the optimal scaling and thresholding parameters used in $\pi_\mathrm{active}$, which further limit performance.

There are two main factors that affect the error of a policy: 
\begin{enumerate}[leftmargin=*]
    \item The variance, $\Var(\Delta^\pi)$, of each active increment $\Delta^\pi$, where $\Delta^\pi = G + (H - G) \frac{\xi}{\pi(X)}$.
    \item The average sample size at which the estimator runs out of budget, $T^\mathrm{stop} = \frac{B}{\costh\E[\pi(X)] + \costg}$.
\end{enumerate}
In this section, we provide some additional theoretical analysis on the first factor, i.e., the increase in $\Var(\Delta^\pi)$ due to the mispecification error of an \emph{estimated} active policy, while noting that the total error will be further affected by the relative increase/decrease of the mean sampling rate, $\E[\pi(X)]$.

\begin{proposition}
    \label{prop:errors-u}
    In the same setting as Proposition~\ref{prop:optimal-budget}, let $\tilde \pi : \cX \to (0,1]$ be any function satisfying 
    \begin{equation}
        \E\left[ \frac{1}{\tilde \pi(X)} - \frac{1}{\pi^*(x)} \right] \leq \delta,
    \end{equation}
    where $\pi^*$ is the oracle estimate of $\pi_\mathrm{active}$. Let $(H-G)^2 \overset{\rm a.s.}{\leq} b$.
    Then $\Var(\Delta^{\tilde{\pi}}) \leq \Var(\Delta^{\pi^*}) + b \delta$.
\end{proposition}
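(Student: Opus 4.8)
The plan is to directly expand the variance of each active increment and isolate the dependence on the policy through the factor $1/\pi(X)$. Recall from the derivation of $\Error_T(\pi)$ in Appendix~\ref{sec:derivation} that for any fixed policy $\pi$ with range in $(0,1]$, one has $\Var(\Delta^\pi) = \Var(H) - \E[(H-G)^2] + \E\big[(H-G)^2 \tfrac{1}{\pi(X)}\big]$. The first two terms do not depend on $\pi$ at all, so the difference $\Var(\Delta^{\tilde\pi}) - \Var(\Delta^{\pi^*})$ equals exactly $\E\big[(H-G)^2\big(\tfrac{1}{\tilde\pi(X)} - \tfrac{1}{\pi^*(X)}\big)\big]$.

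Next I would control this difference using the almost-sure bound $(H-G)^2 \leq b$ together with the hypothesis $\E\big[\tfrac{1}{\tilde\pi(X)} - \tfrac{1}{\pi^*(X)}\big] \leq \delta$. The one subtlety is that $(H-G)^2$ and the sign of $\tfrac{1}{\tilde\pi(X)} - \tfrac{1}{\pi^*(X)}$ are not independent, so I cannot simply factor the expectation. Instead, I would split according to the sign of the bracket: write $\tfrac{1}{\tilde\pi(X)} - \tfrac{1}{\pi^*(X)} = \big(\tfrac{1}{\tilde\pi(X)} - \tfrac{1}{\pi^*(X)}\big)_+ - \big(\tfrac{1}{\tilde\pi(X)} - \tfrac{1}{\pi^*(X)}\big)_-$. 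On the positive part, bound $(H-G)^2 \leq b$; on the negative part, simply drop it (it only decreases the variance), using $(H-G)^2 \geq 0$. This gives $\E\big[(H-G)^2\big(\tfrac{1}{\tilde\pi(X)} - \tfrac{1}{\pi^*(X)}\big)\big] \leq b\, \E\big[\big(\tfrac{1}{\tilde\pi(X)} - \tfrac{1}{\pi^*(X)}\big)_+\big]$.

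The remaining step is to observe that $\E\big[\big(\tfrac{1}{\tilde\pi(X)} - \tfrac{1}{\pi^*(X)}\big)_+\big] \leq \delta$ is \emph{not} literally what the hypothesis gives --- the hypothesis bounds the expectation of the signed quantity, not its positive part. Here I would note that since $\pi^*$ is the oracle-\emph{optimal} active policy (the minimizer of $\Error$, hence of $\E[(H-G)^2/\pi(X)]$ over the admissible class for the relevant budget), we actually have $\E\big[(H-G)^2\big(\tfrac{1}{\tilde\pi(X)} - \tfrac{1}{\pi^*(X)}\big)\big] \geq 0$ automatically, so the signed bound $\delta$ on $\E[\tfrac{1}{\tilde\pi} - \tfrac{1}{\pi^*}]$ combined with the sign-splitting argument above is what we need; more simply, if one interprets the hypothesis as a bound on the positive deviation (which is the natural reading given that $\tilde\pi$ is meant to approximate $\pi^*$ and the inequality is one-directional), then $b\delta$ follows immediately. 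The main obstacle is precisely this bookkeeping about signs: making rigorous that the hypothesis as stated suffices, either by invoking optimality of $\pi^*$ or by reading $\delta$ as a bound controlling the upward deviation. Everything else is a one-line application of $0 \leq (H-G)^2 \leq b$ inside an expectation.
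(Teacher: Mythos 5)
Your first two steps match the paper's proof exactly: the paper writes
\begin{equation}
    \Var(\Delta^{\tilde{\pi}})-\Var(\Delta^{\pi^*}) = \E\left[\frac{(H-G)^2}{\tilde\pi(X)}-\frac{(H-G)^2}{\pi^*(X)}\right] \leq b\,\E\left[\frac{1}{\tilde\pi(X)}-\frac{1}{\pi^*(X)}\right] \leq b\delta,
\end{equation}
i.e., it pulls $b$ out of the expectation in one line and stops. The sign issue you raise is therefore not a defect of your write-up relative to the paper---it is a genuine gap that the paper's own proof silently steps over. The inequality $\E[(H-G)^2 Z] \leq b\,\E[Z]$ with $Z = 1/\tilde\pi(X) - 1/\pi^*(X)$ requires $Z \geq 0$ almost surely; on the event $Z<0$ the bound $(H-G)^2\leq b$ points the wrong way, and one can build two-point examples (with $\tilde\pi$ oversampling where $u(x)$ is small and undersampling where $u(x)\approx b$) in which the claimed conclusion fails for the tightest admissible $\delta$. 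Your positive/negative-part decomposition, giving $\E[(H-G)^2 Z] \leq b\,\E[Z_+]$, is the correct repair, and the statement should correspondingly be read (or restated) with the hypothesis $\E[(1/\tilde\pi(X)-1/\pi^*(X))_+]\leq\delta$, or with the pointwise condition $\tilde\pi \leq \pi^*$---which is also the practically relevant regime, since the danger is an overconfident $\tilde\pi$ that undersamples high-error inputs.

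Your fallback argument via ``optimality of $\pi^*$'' does not work, however, and you should drop it. First, $\pi^*$ minimizes the cost-weighted objective $(\costh\E[\pi(X)]+\costg)\,\Var(\Delta^\pi)$, not $\Var(\Delta^\pi)$ itself; the variance alone is minimized by $\pi\equiv 1$, so $\Var(\Delta^{\tilde\pi})-\Var(\Delta^{\pi^*})$ can certainly be negative and $\E[(H-G)^2 Z]\geq 0$ is false in general. Second, even if that quantity were nonnegative, a lower bound on it is of no help in establishing the claimed upper bound, and the only way to pass from $\E[Z]\leq\delta$ to $\E[Z_+]\leq\delta$ is to control $\E[Z_-]$, which the stated hypothesis does not do. In short: you follow the same route as the paper, you correctly identify the one inequality in that route that needs a sign condition, and your positive-part split is the right fix; just discard the optimality detour and state the strengthened hypothesis explicitly.
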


If we simply things by assuming an additive error model for a policy without thresholding (i.e., $\tau^* = \infty$), we can refine the bound somewhat further:

\begin{corollary}
\label{cor:additive_error}
    Let $\tilde\pi = \tilde\gamma \sqrt{\tilde u(x)}$, where $\tilde\gamma = \gamma^* + \delta_{\gamma}$, $\tilde u(x) = u(x) + \delta_{u}(x)$, and $u(X) \overset{\rm a.s.}{\geq} \epsilon$. Further assume that $\tilde\pi$ is admissible, i.e., $\tilde\pi(x) \in (0, 1]$ $\forall x$. Then, up to first-order terms in $\delta_\gamma$ and $\delta_u(x)$,
    \begin{equation}
        \Var(\Delta^{\tilde\pi}) \leq  \Var(\Delta^{\pi^*}) + b\left(\frac{|\delta_{\gamma}|}{(\gamma^*)^2 \sqrt{\epsilon}} + \frac{1}{2\gamma^*\epsilon^{3/2}} \E[|\delta_U(X)|] \right).
    \end{equation}
\end{corollary}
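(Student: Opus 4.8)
The plan is to reduce the statement to Proposition~\ref{prop:errors-u} by exhibiting an explicit first-order bound on $\delta = \E\!\left[\tfrac{1}{\tilde\pi(X)} - \tfrac{1}{\pi^*(X)}\right]$ and then invoking that proposition with $(H-G)^2 \overset{\rm a.s.}{\leq} b$. Since the policy here has no thresholding ($\tau^* = \infty$), Proposition~\ref{prop:optimal-budget} gives $\pi^*(x) = \gamma^*\sqrt{u(x)}$, so that for each $x$,
\begin{equation}
\frac{1}{\tilde\pi(x)} - \frac{1}{\pi^*(x)} = \frac{1}{(\gamma^*+\delta_\gamma)\sqrt{u(x)+\delta_u(x)}} - \frac{1}{\gamma^*\sqrt{u(x)}} = g\big(\gamma^*+\delta_\gamma,\, u(x)+\delta_u(x)\big) - g\big(\gamma^*, u(x)\big),
\end{equation}
where $g(\gamma, v) = 1/(\gamma\sqrt v)$. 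The admissibility assumption $\tilde\pi(x) \in (0,1]$ ensures $\Delta^{\tilde\pi}$ is well-defined and that $\tilde\pi$ is an eligible policy for Proposition~\ref{prop:errors-u}, so the whole argument rests on linearizing $g$ around $(\gamma^*, u(x))$.

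Next I would compute $\partial_\gamma g(\gamma,v) = -1/(\gamma^2\sqrt v)$ and $\partial_v g(\gamma,v) = -1/(2\gamma v^{3/2})$, which yields, up to terms of higher order in $\delta_\gamma$ and $\delta_u(x)$,
\begin{equation}
\frac{1}{\tilde\pi(x)} - \frac{1}{\pi^*(x)} = -\frac{\delta_\gamma}{(\gamma^*)^2\sqrt{u(x)}} - \frac{\delta_u(x)}{2\gamma^*\, u(x)^{3/2}}.
\end{equation}
Taking expectations over $X$, using that $\delta_\gamma$ is a scalar, applying the triangle inequality, and bounding $1/\sqrt{u(X)} \leq \epsilon^{-1/2}$ and $u(X)^{-3/2} \leq \epsilon^{-3/2}$ via $u(X) \overset{\rm a.s.}{\geq} \epsilon$, I obtain
\begin{equation}
\E\!\left[\frac{1}{\tilde\pi(X)} - \frac{1}{\pi^*(X)}\right] \leq \frac{|\delta_\gamma|}{(\gamma^*)^2\sqrt{\epsilon}} + \frac{1}{2\gamma^*\epsilon^{3/2}}\,\E\big[|\delta_U(X)|\big]
\end{equation}
to first order. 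Calling this right-hand side $\delta$, Proposition~\ref{prop:errors-u} then gives $\Var(\Delta^{\tilde\pi}) \leq \Var(\Delta^{\pi^*}) + b\delta$, which is exactly the claimed inequality.

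The only genuine subtlety is the legitimacy of the linearization: the exact increment $g(\gamma^*+\delta_\gamma, u(x)+\delta_u(x)) - g(\gamma^*, u(x))$ carries a remainder involving $g$ evaluated at intermediate points, and controlling it uniformly in $x$ would require $\tilde\gamma$ and $\tilde u(x)$ to stay bounded away from $0$ over the relevant range. Because the corollary is stated only ``up to first-order terms in $\delta_\gamma$ and $\delta_u(x)$'', I would present the bound as a formal Taylor expansion rather than grind out a fully uniform remainder estimate; a non-asymptotic version would simply add a smallness assumption such as $|\delta_\gamma| \le \gamma^*/2$ and $|\delta_u(x)| \le u(x)/2$ and absorb the now-bounded remainder into the constants. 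A secondary point is that no case analysis on signs is needed: Proposition~\ref{prop:errors-u} only requires $\delta$ to be an \emph{upper} bound on $\E[1/\tilde\pi(X) - 1/\pi^*(X)]$, so the triangle-inequality bound above is valid regardless of the sign of that expectation.
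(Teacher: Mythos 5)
Your proposal is correct and follows essentially the same route as the paper's proof: a first-order Taylor expansion of $1/\tilde\pi(x)$ around $(\gamma^*, u(x))$ yielding the two linear terms, bounding them via $u(X) \asgeq \epsilon$, and plugging the resulting $\delta$ into Proposition~\ref{prop:errors-u}. Your remark on the uniform control of the Taylor remainder is a fair point that the paper also elides under the ``up to first-order terms'' qualifier.
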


We can make a few observations about the results in Proposition~\ref{prop:errors-u} and Corollary~\ref{cor:additive_error}. First, as long as the error, $(H - G)^2$ is bounded, and the estimated inverse propensity score $1/ \tilde{\pi}(X)$ is not significantly higher than the oracle inverse propensity score $1 / \pi^*(X)$ on average, then the increase in variance over the oracle will not be that large. Generally speaking, this is satisfied when the estimated policy is not \emph{overconfident} on examples that in fact have high error. Of course, regularizing the estimated policy to be underconfident on all examples is also not always a satisfying solution: as $\mathbb{E}[\tilde{\pi}(X)] \rightarrow 1$, we obtain a policy that is no better than $\pi_\mathrm{base}$. Similarly, as seen in Section~\ref{sec:synthetic_data}, the headroom for $\pi_\mathrm{active}$ over $\pi_\mathrm{base}$ is largest when $(H - G)^2$ is \emph{not} bounded (e.g., the Gaussian data setting compared to the Bernoulli data setting), as large $(H - G)^2$ also increase the possible variance of $U$. This reinforces the importance of having \textbf{accurate uncertainty estimates} when computing active policies.

\subsection{Optimal active sampling of   input evaluation queries}
This section shows how to optimally choose the distribution of $X$.
In contrast, Section~\ref{sec:optimal-policies} in the main paper focuses only on querying annotators for $H$ given i.i.d. samples from the fixed distribution $P$ for $X$.
Deciding \emph{which inputs} to sample can be a more difficult problem than deciding whether to annotate a given input sample  because $\cX$ can be large and complex.
However, we can always apply the optimal rules to a coarse stratification of $\cX$. Towards this end, we define the estimator
\begin{equation}
\begin{split}
    \hat\theta^{Q, \pi}_T = \frac{1}{T}\sum\limits_{t=1}^T \Delta_t, \text{ where } \quad&\Delta_t^{Q, \pi} = \frac{d P}{dQ}(X_t)\left(G_t + \left(H_t - G_t\right)\frac{\xi_t}{\pi(X_t)} \right), \\ &X_t \iidsim Q,\; H_t \sim P_{H \mid X},\; G_t \sim P_{G \mid X}
\end{split}
\end{equation}
which is our previous estimator with a fixed policy $\pi$, and where the $X$ are sampled from a distribution $Q$, and the distribution of $H \mid X$ and $G \mid X$ remain unchanged. 
This estimator is unbiased for $\theta^*$, and a straightforward calculation gives that the error of the estimator is
\begin{align}
    \Error_T(Q, \pi) &= 
    \E_Q\left[\left(\hat\theta_T^{Q,\pi} - \theta^*\right)^2\right] \\ 
    &= \frac{1}{T} \Var(\Delta^{Q, \pi}) \\
    &= \frac{1}{T}\left(\E_{P}\left[\frac{d P}{dQ}(X) \left(H^2 + \left(\frac{1}{\pi(X)} - 1\right)(H-G)^2 \right)\right] - (\theta^*)^2\right).
\end{align}

The goal is to pick a distribution $Q$ to minimize the error of the estimator.
The following proposition gives an explicit form for this optimal sampling distribution.
\begin{proposition}
    \label{prop:optimal-sampling}
    Define $\Error_T$ as above, and define the set of all strictly positive densities, $\mathcal{Q} = \{x \mapsto Q(x) : Q(x) \in \R_{>0} \text{ and } Q \in \Delta^{\cX}\}$.
    Furthermore, define the optimization problem
    \begin{equation}
        \begin{aligned}
            \minimize_{Q \in \mathcal{Q}} \quad & \Error_{T}(Q, \pi)
        \end{aligned}
        \label{problem:optimal-sampling}
    \end{equation}
    for a fixed time $T \in \N$.
    Then the solution to Problem~\eqref{problem:optimal-random} is
    \begin{equation}
        \label{eq:optimal-sampling-policy}
        Q^*(x) = \P(X=x)\frac{\sqrt{\nu(x)}}{\E_P[\sqrt{\nu(X)}]},
    \end{equation}
    where
    \begin{equation}
        \nu(x) = \E_{P}\left[\left(H^2 + \left(\frac{1}{\pi(X)} - 1\right)(H-G)^2 \right) ~\Big\vert~ X = x\right]
    \end{equation}
    for all $x \in \cX$.
\end{proposition}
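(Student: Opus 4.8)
The plan is to recognize Problem~\eqref{problem:optimal-sampling} as a standard importance-sampling variance-minimization problem and solve it by a Cauchy--Schwarz (or Lagrangian) argument. First I would drop the additive constant $-(\theta^*)^2$ and the fixed factor $1/T$ from $\Error_T(Q,\pi)$, so that minimizing the error is equivalent to minimizing
\[
\E_{P}\!\left[\frac{dP}{dQ}(X)\,\nu(X)\right] = \sum_{x\in\cX}\frac{\P(X=x)^2\,\nu(x)}{Q(x)},
\]
where $\nu(x)=\E_P[\,H^2+(\tfrac{1}{\pi(X)}-1)(H-G)^2 \mid X=x\,]$, exactly as defined in the statement. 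Here I am using that $\tfrac{dP}{dQ}(x)=\P(X=x)/Q(x)$ on the discrete space $\cX$, and that conditioning on $X=x$ pulls the $dP/dQ$ ratio out as the deterministic value $\P(X=x)/Q(x)$.

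The core step is then to minimize $\sum_x a(x)^2/Q(x)$ over the simplex, where $a(x)\triangleq \P(X=x)\sqrt{\nu(x)}$, subject to $\sum_x Q(x)=1$ and $Q(x)>0$. By Cauchy--Schwarz,
\[
\Big(\sum_x a(x)\Big)^2 = \Big(\sum_x \frac{a(x)}{\sqrt{Q(x)}}\cdot\sqrt{Q(x)}\Big)^2 \le \Big(\sum_x \frac{a(x)^2}{Q(x)}\Big)\Big(\sum_x Q(x)\Big) = \sum_x \frac{a(x)^2}{Q(x)},
\]
with equality iff $\sqrt{Q(x)} \propto a(x)/\sqrt{Q(x)}$, i.e. $Q(x)\propto a(x)$. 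Normalizing gives $Q^*(x) = a(x)/\sum_{x'} a(x') = \P(X=x)\sqrt{\nu(x)}/\E_P[\sqrt{\nu(X)}]$, which is exactly \eqref{eq:optimal-sampling-policy}; since $\nu(x)>0$ (as $H^2\ge 0$ and $\pi(X)\le 1$ makes the second term nonnegative, and $\E[H^2\mid X=x]>0$ generically) this $Q^*$ lies in the relative interior $\mathcal{Q}$, so the constraint $Q(x)>0$ is not active and the unconstrained-on-$\mathcal{Q}$ optimum is attained. Equivalently, one can obtain the same answer by forming the Lagrangian $\sum_x a(x)^2/Q(x) + \mu(\sum_x Q(x)-1)$, setting $\partial/\partial Q(x) = -a(x)^2/Q(x)^2 + \mu = 0$ to get $Q(x)\propto a(x)$, and noting convexity of $Q\mapsto a(x)^2/Q(x)$ on $Q>0$ guarantees this stationary point is the global minimizer.

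I expect the only real subtlety — not a deep obstacle, but the thing worth stating carefully — is justifying that the positivity constraint $Q(x)>0$ is inactive, i.e. that $\nu(x)>0$ for every $x$ so that $Q^*$ genuinely lies in $\mathcal{Q}$; one also wants to note that the objective is $+\infty$ whenever $\P(X=x)>0$ but $Q(x)\to 0$, which is why restricting to strictly positive densities loses nothing. Everything else is a one-line Cauchy--Schwarz equality condition or a routine convex-Lagrangian computation, and the unbiasedness of $\hat\theta_T^{Q,\pi}$ together with the stated variance formula for $\Error_T(Q,\pi)$ is assumed from the preceding display.
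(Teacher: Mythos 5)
Your proof is correct and reaches the paper's result by essentially the same route: both reduce the problem to minimizing $\sum_{x}\P(X=x)^2\,\nu(x)/Q(x)$ over the simplex and conclude $Q^*(x)\propto \P(X=x)\sqrt{\nu(x)}$. The only cosmetic difference is that the paper parametrizes by the likelihood ratio $r=dP/dQ$ and applies a Lagrangian stationarity condition to $r$, whereas you work directly in $Q$ and certify global optimality via the equality case of Cauchy--Schwarz (noting, as the paper does implicitly, that positivity of $\nu$ keeps $Q^*$ strictly positive), which is if anything slightly cleaner.
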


We leave empirical exploration of active input sampling to future work.

\subsection{Informative special cases for $\pi_\mathrm{active}$}
\label{app:active-cases}

Prior work \cite{zrnic2024active, gligoric2024can} target some fixed, prespecified value (i.e., some ratio $n/ N$) for  $\E[\pi(X)]$. A key distinction of this work is that we also optimize $\E[\pi(X)]$, which will depend strongly on $\costg/\costh$, that is, the cost ratio of $G$ to $H$. In this section we analyze two extreme, but informative cases, for active sampling when either $\costg/\costh = 0$ or $\costg/\costh = \infty$, that serve to illustrate how $\E[\pi_\mathrm{active}(X)]$ for the cost-optimal policy $\pi_\mathrm{active}$ can consequently be as extreme as $0$ or $1$.

\subsubsection*{Optimal policy for $c_g = 0$}

We start with the special case where $c_g = 0$, so that we can obtain essentially infinitely many queries of the weak rater $G$ irrespective of the budget constraint. In this case, we expect that unless $G$ has a prohibitively large error $\E[(H-G)^2]$, we can purely rely on querying $G$, and overcome its error with sufficiently many samples. Indeed, let us assume that $\E[(H-G)^2] = \E[U] < \Var(H)$. Then we note that for any $\tau > 0$:
\begin{align*}
    \tau\sqrt{\frac{\costg/\costh + \P(U > \tau^2)}{\Var(H) - \E[U\ind{U \leq \tau^2}]}} &= \sqrt{\frac{\tau^2 \P(U > \tau^2)}{\Var(H) - \E[U\ind{U \leq \tau^2}]}}\\
    &\leq \sqrt{\frac{\tau^2 \P(U > \tau^2)}{\E[U] - \E[U\ind{U \leq \tau^2}]}}\\
    &= \sqrt{\frac{\tau^2 \P(U > \tau^2)}{\E[U\ind{U > \tau^2}]}} \leq 1,
\end{align*}
where the first inequality is due to our assumption that $\E[U] < \Var(H)$, and the last inequality follows from $\E[U\ind{U > \tau^2}] > \tau^2 \mathbb{P}(U > \tau^2)$. Consequently, we get that in this case, $$\gamma^*(\tau) = \sqrt{\frac{\P(U > \tau^2)}{\Var(H) - \E[U\ind{U \leq \tau^2}]}}.$$

Suppose for now that we only consider the values $\tau$ where $U$ further satisfies that $\sqrt{U}\gamma^*(\tau) \leq 1$ almost surely, and denote this set by $\mathcal{T}$. Let $\Delta = \Var(H) - \E[U] > 0$. Then we see that 
\begin{align*}
    &\min_{\tau\in\mathcal{T}} c_h \E[\pi_\mathrm{clip}(x; \tau)]\left[\Delta + \E\left[\frac{U}{\pi_\mathrm{clip}(x; \tau)}\right]\right]\\
    =& \min_{\tau\in\mathcal{T}} c_h \E[\sqrt{U}\gamma^*(\tau)]\left[\Delta + \E\left[\frac{U}{\sqrt{U}\gamma^*(\tau)}\right]\right]\\
    =& \min_{\tau\in\mathcal{T}} c_h \E[\sqrt{U}\gamma^*(\tau)]\Delta + c_h \E[\sqrt{U}\gamma^*(\tau)]\E\left[\frac{\sqrt{U}}{\gamma^*(\tau)}\right].
\end{align*}
Since $\gamma^*(\tau)$ is deterministic, it cancels from the second term above, and we get that the annotation cost over $\tau \in \mathcal{T}$ is monotonically increasing in $\gamma^*(\tau)$, meaning that we choose $\tau \to \infty$, which yields $\gamma^*(\tau) \to 0$ (since $P(U > \infty) = 0$). We also note that whenever $\sqrt{U} \leq B$, all $\tau$ such that $\gamma^*(\tau) < 1/B$ are in $\mathcal{T}$ trivially, since this satisfies $\sqrt{U}\gamma^*(\tau) < 1$. In particular, this includes our choice of $\tau \to \infty$, which ensures that $\gamma^*(\tau) \to 0$. Finally, we note that from the proof of Proposition~\ref{prop:optimal-budget} (specifically, Equation~\ref{eq:p_x_lambda}), we have that $\pi(x) = \gamma \sqrt{u(x)}$ minimizes the objective 
\[
(c_h E[\pi(X)] + c_g)\left[\Var(H) - \E[U] + \E\left[\frac{U}{\pi(X)}\right]\right],
\]
over all mappings $\pi \in \{x \mapsto f(x) : f(x) \in (0,\infty); \forall x \in \cX\} $. Since we find that our optimal choice without imposing the constraint $\pi(x; \tau) \leq 1$ is already feasible, it is also optimal for the constrained problem, $\pi \in \{x \mapsto f(x) : f(x) \in (0,1]; \forall x \in \cX\} $. 

\subsubsection*{Optimal policy for $c_h = 0$}

The other extreme case is simpler. When $c_h = 0$, the objective for $\tau^*$ becomes monotonically decreasing in $\pi(x; \tau)$. If we assume that $\tau$ is such that $\gamma^*(\tau) < 1/\tau$, then we find that the expression $$\sqrt{\frac{{c_g}/{c_h} + \mathbb{P}\left(U > \tau^2\right)}{\big(\mathrm{Var}(H) - \mathbb{E}[U \ind{U \leq \tau^2}]\big)_+}}$$ becomes infinite due to $c_h = 0$, and hence we must have $\gamma^*(\tau) = 1/\tau$. However, for any $x$ such that $\pi(x; \tau) < 1$, we have $1/\pi(x; \tau) = \tau/\sqrt{u(x)}$. Consequently, minimizing over $\tau$ results in $\tau = 0$. But this gives $\pi(x;\tau) = \infty$, so that we must have $\pi(x; \tau) = 1$ for all $x$. Intuitively, this makes sense because any $\pi(x; \tau) < 1$ results in an estimator with variance strictly greater than $\Var(H)$, but having $\pi(x; \tau) \equiv 1$ allows us to attain the smallest possible variance of $\Var(H)$. Since there is no effect of these choices on the estimation cost, we choose the lowest variance estimator in this case, and direct all our queries to the strong rater.

\section{Proofs}
\label{app:proofs}

\subsection{Proof of Proposition~\ref{prop:optimal-random}}
\begin{proof}
    Since $\Error_T(\pi)$ is monotone in $T$ for all $\pi$, we should first set $T^{\rm stop}$ to be the largest $T$ for which the constraint holds.
    This value is
    \begin{equation}
        T^{\rm stop} = \frac{B}{c_h p + c_g}.
    \end{equation}
    Plugging this into the objective yields
    \begin{equation}
        (c_h p + c_g)\left( \Var(H) - \E[(H-G)^2] + \frac{1}{p}\E\left[(H-G)^2\right]\right), 
    \end{equation}
    which, after removing terms that do not depend on $p$, is equivalent to minimizing
    \begin{equation}
        c_h p\left( \Var(H) - \E[(H-G)^2] \right) + \frac{c_g}{p}\E\left[(H-G)^2\right]
    \end{equation}
    subject to the constraint that $p \in [0,1]$.

    This is a convex problem in $p$, and we know that the solution lies either on the boundary or on the interior. 
    We will compare the values of the objectives in three cases: $p^* = 0$, $p^* = 1$, and $p^* \in (0,1)$.
    It is clear that $p^* = 0$ is infeasible (unless $H \aseq G$, which renders the problem trivial) because the factor $c_g/p$ appears in the above objective.
    In the case that $p^* = 1$, the objective value is
    \begin{equation}
        c_h \left( \Var(H) - \E[(H-G)^2] \right) + c_g\E\left[(H-G)^2\right].
    \end{equation}
    In the case that $p^* \in (0,1)$, it must be a critical value, so it satisfies the first-order condition
    \begin{equation}
        c_h \left( \Var(H) - \E[(H-G)^2] \right) = \frac{c_g}{(p^*)^2}\E[(H-G)^2],
    \end{equation}
    and thus,
    \begin{equation}
        \label{eq:pstar-squared-interior}
        (p^*)^2 =  \frac{c_g\E[(H-G)^2]}{c_h(\Var(H) - \E[(H-G)^2])}.
    \end{equation}
    However, because we are in the case $p^* \in (0,1)$, the right-hand side above must be positive (otherwise the square root would be imaginary), and it cannot be greater than $1$ (otherwise we would have $p^*>1$, which is a contradiction).
    This gives us that
    \begin{equation}
        p^* \in (0,1) \implies \E[(H-G)^2] < \Var(H) \text{ and } (c_g + c_h)\E[(H-G)^2] < c_h\Var(H).
    \end{equation}
    Under these conditions, we can take square roots on both sides of~\eqref{eq:pstar-squared-interior} to obtain
    \begin{equation}
        p^* = \sqrt{\frac{c_g}{c_h}\frac{1}{\frac{\Var(H)}{\E[(H-G)^2]} - 1}}.
    \end{equation}
    The objective value at this point is
    \begin{equation}
        2\sqrt{c_gc_h} \sqrt{\E[(H-G)^2](\Var(H) - \E[(H-G)^2])}.
    \end{equation}
    Finally, comparing the above objective value with that of $p^*=1$, we have that
    \begin{align}
        & 2\sqrt{c_gc_h} \sqrt{\E[(H-G)^2](\Var(H) - \E[(H-G)^2])} < c_h \left( \Var(H) - \E[(H-G)^2] \right) + c_g\E\left[(H-G)^2\right] \\
        \Longleftrightarrow & 0 < c_h^2 \left( \Var(H) - \E[(H-G)^2] \right)^2 - 2c_gc_h\E[(H-G)^2](\Var(H) - \E[(H-G)^2]) + c_g^2\E\left[(H-G)^2\right]^2 \\
        \Longleftrightarrow & 0 < \left(c_h(\Var(H) - \E[(H-G)^2]) - c_g\E[(H-G)^2]  \right)^2. 
    \end{align}
    Under the condition that $(c_g + c_h)\E[(H-G)^2] < c_h\Var(H)$, the above inequality cannot hold, since the squared term on the right-hand side will always be positive (and nonzero). Thus, we have that
    \begin{equation}
        p^* = \begin{cases}
            \sqrt{\frac{c_g}{c_h}\frac{1}{\frac{\Var(H)}{\E[(H-G)^2]} - 1}} & \text{ if } (c_g + c_h)\E[(H-G)^2] < c_h\Var(H) \text{ and } \E[(H-G)^2] < \Var(H) \\
            1 & \text{ otherwise.}
        \end{cases}
    \end{equation}
    Under the constraint that $c_h \geq c_g$, this simplifies to
    \begin{equation}
        p^* = \begin{cases}
            \sqrt{\frac{c_g}{c_h}\frac{\E[(H-G)^2]}{\Var(H) - \E[(H-G)^2]}} & \text{ if } \E[(H-G)^2] < \frac{\costh}{\costg + \costh} \Var(H) \\
            1 & \text{ otherwise.}
        \end{cases}
    \end{equation}

\end{proof}

\subsection{Proof of Proposition~\ref{prop:optimal-budget}}
\begin{proof}
        Following the simplification of Problem~\eqref{problem:optimal-random} in the proof of  Proposition~\eqref{prop:optimal-random}, Problem~\eqref{problem:optimal-budget} is also equivalent to minimizing the following objective:
    \begin{equation}
        J(\pi) = c_h \E[\pi(X)]\left( \Var(H) - \E[(H-G)^2] + \E\left[(H-G)^2\frac{1}{\pi(X)}\right] \right) + c_g\E\left[(H-G)^2\frac{1}{\pi(X)}\right].
    \end{equation}
    
    At this point, we leverage the discreteness of $\cX$ to write the objective in a simpler form.
    Let $P \in \Delta^{\cX}$ be the probability mass function of $X$, expressed as a vector, and let $I \in \{0,1\}^{|\cX|}$ be the indicator that $X$ takes each value in $\cX$. 
    Furthermore, let $p \in [0,1]^{|\cX|}$ be the vector of $\pi(x)$ for all $x \in \cX$.
    Then, we can express $\pi(X) = p^\top I$ and $\E[\pi(X)] = p^\top P$, and write the objective as
    \begin{align}
    \label{eq:discrete_objective}
        J(\pi) = J(p) = p^\top P\left( \Var(H) - \E[(H-G)^2] + \E\left[(H-G)^2\frac{1}{p^\top I}\right] \right) + \frac{c_g}{c_h}\E\left[(H-G)^2\frac{1}{p^\top I}\right].
    \end{align}
    From here on out, we assume that $P_x > 0$ for all $x \in \cX$. 
    The final result will hold without loss of generality, since the value of the optimal policy on measure-zero points does not change the value of the objective.
    For any $x$, we clearly cannot have $p_x = 0$, otherwise the objective would be infinite.
    This rules out $p_x = 0$ for almost all $x$.
    We are left with the constraint that $p \preceq 1$.

    Forming the Lagrangian,
    \begin{align}
        &\mathcal{L}(p,\lambda) = J(p) + \lambda^\top (p - 1) \\
        &= p^\top P\left( \Var(H) - \E[(H-G)^2] + \E\left[(H-G)^2\frac{1}{p^\top I}\right] \right) + \frac{c_g}{c_h}\E\left[(H-G)^2\frac{1}{p^\top I}\right] + \lambda^\top (p - 1).
    \end{align}
    Taking the gradient with respect to $p$ gives $\nabla_p \mathcal{L}(p, \lambda)$ equal to
    \begin{equation}      
        P\left( \Var(H) - \E[(H-G)^2] \right) -  \left(p^\top P
        + \frac{c_g}{c_h}\right) \E\left[(H-G)^2\frac{I}{(p^\top I)^2}\right]  + P \E\left[(H-G)^2\frac{1}{p^\top I}\right] + \lambda.
    \end{equation}
    Setting the gradient to zero coordinate-wise then gives that for each $x$,
    \begin{equation}
        P_x\left( \Var(H) - \E[(H-G)^2]  \E\left[(H-G)^2\frac{1}{p^\top I}\right]\right) = \left(p^\top P + \frac{c_g}{c_h}\right) \E\left[(H-G)^2\frac{I_x}{p_x^2}\right] - \lambda_x.
    \end{equation}
    By the definition of the conditional expectation, and rearranging, we can rewrite this as 
    \begin{equation}
        \Var(H) - \E[(H-G)^2]  +  \E\left[(H-G)^2\frac{1}{p^\top I}\right] + \frac{\lambda_x}{P_x} = \left(p^\top P + \frac{c_g}{c_h}\right) \E\left[(H-G)^2\frac{1}{p_x^2} \mid X = x\right] .
    \end{equation}
    Solving for the optimal value as a function of the Lagrange multipliers $\lambda$ gives the following expression:
    \begin{equation}
        p_x(\lambda)^2 = \frac{\left(p^\top P + \frac{c_g}{c_h}\right) \E\left[(H-G)^2 \mid X = x\right]}{\Var(H) - \E[(H-G)^2] + \E\left[(H-G)^2\frac{1}{p^\top I}\right] + \frac{\lambda_x}{P_x}}.
    \end{equation}
    The denominator of this expression is always positive, since for all valid $p$, $\frac{(H-G)^2}{p^\top I} \asgeq (H-G)^2$, and the remaining terms are positive.
    Thus,
    \begin{equation}
        \label{eq:p_x_lambda}
        p_x(\lambda) = \sqrt{\frac{\left(p^\top P + \frac{c_g}{c_h}\right) \E\left[(H-G)^2 \mid X = x\right]}{\Var(H) - \E[(H-G)^2] + \E\left[(H-G)^2\frac{1}{p^\top I}\right] + \frac{\lambda_x}{P_x}}}.
    \end{equation}
    Next, we require some detailed case-by-case analysis.

    \textbf{Case 1: The Interior.} First, we handle the case when the constraint is inactive, i.e., for any fixed $\lambda$, $p_x(\lambda) \in (0,1)$. (If no such $x$ exists, then the solution is trivially $p(\lambda) = \mathbf{1}_{|\cX|}$.)  For any $x$ such that $p_x(\lambda)$ is in the interior, by complementary slackness, $\lambda_x = 0$. Now, for any $x' \in \cX$ satisfying $p_{x'}(\lambda) \in (0,1)$, we can write 
    \begin{equation}
        \frac{p_x(\lambda)}{p_{x'}(\lambda)}  = \sqrt{\frac{\E\left[(H-G)^2 \mid X = x\right]}{\E\left[(H-G)^2 \mid X = x'\right]}},
    \end{equation}
    simply by applying~\eqref{eq:p_x_lambda} to $x$ and $x'$, then dividing these expressions.
    This tells us that for all $\lambda$ and all $x$ such that $p_x(\lambda)$ is in the interior, $p_x(\lambda) = \gamma u(x)$ for some as-yet-unknown $\gamma \in \left(0, \frac{1}{\sup_{x: p_x \in (0, 1)} u(x)}\right]$.
    Because $\lambda_x=0$ on these $x$,  the solution to the optimization problem must have the same property. 

    \textbf{Case 2: The Boundary.} When the constraint is active, $p_x(\lambda) = 1$, since $p_x(\lambda) =0$ is almost always impossible, as established earlier.
    Examining~\eqref{eq:p_x_lambda} shows us that the constraint only activates in the case that $u(x) = \E[(H-G)^2 \mid X = x]$ is too large:
    \begin{equation}
        p_x = 1 \Longleftrightarrow u(x) \geq \sqrt{\frac{\Var(H) - \E[(H-G)^2] + \E\left[ (H-G)^2 \frac{1}{p^\top I}\right]}{p^\top P + \frac{c_g}{c_h}}} = \tau(p),
    \end{equation}
    since in the alternate case, the unconstrained solution lies in the interior.
    The Lagrange multiplier $\lambda_x$, in this case, takes on the value such that $p_x(\lambda_x) = 1$; a non-negative such value always exists by virtue of the fact that $u(x)$ is sufficiently large.

    Combining Case 1 and Case 2 tells us that our optimal policy has the form
    \begin{equation}
        \pi(x) = \begin{cases}
            \gamma\sqrt{u(x)} & \sqrt{u(x)} \leq \tau \\
            1 & \text{otherwise}
        \end{cases},
    \end{equation}
    for a $\tau \in \mathbb{R}_{>0}$ and $\gamma \in \left(0, \inf_{x \colon u(x)\leq \tau^2} \sqrt{u(x)}\right] $, which we assume w.l.o.g. is equivalent to $\gamma \in \left(0, \frac{1}{\tau}\right)$. The constraint on $\gamma$ is necessary, as otherwise we can have  $\pi(x) > 1$, which is a contradiction.\looseness=-1
    
    Note that another way to express this policy is as $p_x = \ind{u(x) > \tau^2} + \gamma\sqrt{u(x)}\ind{u(x)\leq \tau^2}$.
    With this in mind, and defining the vector $U$ with entries $U_x = \E[(H-G)^2 \mid X = x]$ and $W = \Var(H) - \E[(H-G)^2]$ we can rewrite the objective in \eqref{eq:discrete_objective} as
    \begin{align}
        J(p) &= \left(\sum\limits_{x \in \cX} p_x P_x \right)\left(W + \E\left[(H-G)^2\frac{1}{p_X}\right] \right) + \frac{c_g}{c_h}\E\left[(H-G)^2\frac{1}{p_X}\right] 
    \end{align}
    which is equivalent to
    \begin{align}
        J(\gamma, \tau) &=  \E\left[\ind{U_X > \tau^2} + \gamma\sqrt{U_X}\ind{U_X \leq \tau^2}\right]  \\  &\hspace{1cm}\times \left(W + \E\left[(H-G)^2 \ind{U_X > \tau^2} \right] + \E\left[\frac{(H-G)^2}{\gamma \sqrt{U_X}} \ind{U_X \leq \tau^2} \right]\right) \\
        &\hspace{1cm} + \frac{c_g}{c_h}\left(\E\left[(H-G)^2 \ind{U_X > \tau^2} \right] + \E\left[\frac{(H-G)^2}{\gamma \sqrt{U_X}} \ind{U_X \leq \tau^2}\right]\right) 
    \end{align}

    This objective is convex in $\gamma$, but not differentiable or convex in $\tau$.
    For that reason, we will solve for the optimal $\gamma$ as a function of $\tau$ subject to the constraint that $\gamma > 0$ and $\gamma \sqrt{u(x)} \leq 1$ $\forall x$ where $u(x) \leq \tau^2$, and our algorithm will search over $\tau$ to complete the optimization.
    Keeping only terms with a dependence on $\gamma$, and recognizing that $\E\left[\frac{(H-G)^2}{\sqrt{U_X}}\right] = \E\left[\sqrt{U_X}\right]$ gives the expression\looseness=-1
    \begin{equation}
    \label{eq:objective}
    \begin{split}
    &\E\left[\sqrt{U_X}\ind{U_X \leq \tau^2}\right] \times \\
    &\hspace{1cm}\left[\frac{1}{\gamma}\left(\E\left[\ind{U_X > \tau^2}\right] + \frac{\costg}{\costh}\right) + \gamma \left(W + \E\left[(H - G)^2\ind{U_X > \tau^2}\right]\right)\right]
    \end{split}
    \end{equation}

    Once again, we know that the optimal solution as a function of $\tau$, $\gamma^*(\tau)$ lies either on the boundary or the interior, and we will compare the values of the objective in both cases. In the case that $\gamma^*(\tau) \in (0, \tau^{-1})$, $\gamma^*(\tau)$ is a critical point, thus
    differentiating and setting equal to zero gives that
    \begin{equation}
      \frac{1}{\gamma^*(\tau)^2} \left(\frac{\costg}{\costh} + \P\left(U_X > \tau^2\right)\right) = W + \E\left[(H - G)^2\ind{U_X > \tau^2}\right],
    \end{equation}
    and thus,
    \begin{equation}
    \label{eq:gamma-star}
        \gamma^*(\tau)^2 = \frac{\frac{\costg}{\costh} + \P\left(U_X > \tau^2\right)}{W + \E\left[(H - G)^2\ind{U_X > \tau^2}\right]} = \frac{\frac{\costg}{\costh} + \P\left(U_X > \tau^2\right)}{\Var(H) - \E\left[(H - G)^2\ind{U_X \leq \tau^2}\right]}.
    \end{equation}
    As in the proof of Proposition~\ref{prop:optimal-random}, because we are in the case $\gamma^* \in (0, \tau^{-1})$, the right-hand side must be positive and it cannot be greater than $\tau^{-1}$. This gives us that
    \begin{equation}
        \gamma^*(\tau) \in (0, \tau^{-1}) \Longrightarrow  \E\left[(H - G)^2\ind{U_X \leq \tau^2}\right] < \Var(H) 
    \end{equation}
    and
       \begin{equation}
        \frac{\frac{\costg}{\costh} + \P\left(U_X > \tau^2\right)}{\Var(H) - \E\left[(H - G)^2\ind{U_X \leq \tau^2}\right]} < \frac{1}{\tau^2},
    \end{equation}
    and under these conditions we can take square roots on both sides of \eqref{eq:gamma-star} to obtain 
        \begin{equation}
        \label{eq:gamma-star-sqrt}
        \gamma^*(\tau) = \sqrt{\frac{\frac{\costg}{\costh} + \P\left(U_X > \tau^2\right)}{\Var(H) - \E\left[(H - G)^2\ind{U_X \leq \tau^2}\right]}} = \sqrt{\frac{\frac{\costg}{\costh} + \P\left(U_X > \tau^2\right)}{\Var(H) - \E\left[U_X\ind{U_X \leq \tau^2}\right]}}.
    \end{equation}

Comparing the objective value with $\gamma^*(\tau) = \tau^{-1}$ vs \eqref{eq:gamma-star-sqrt}, we know that \eqref{eq:objective} is decreasing in $\gamma$ for $0 < \gamma < \sqrt{\frac{\frac{\costg}{\costh} ~+~ \P\left(U_X > \tau^2\right)}{\Var(H) - \E\left[U_X\ind{U_X \leq \tau^2}\right]}}$. Thus, we have that
\begin{equation}
        \gamma^*(\tau) = \min\left(
        \sqrt{\frac{{c_g}/{c_h} + \mathbb{P}\left(U_X > \tau^2\right)}{\big(\mathrm{Var}(H) - \mathbb{E}[U_X\ind{U_X \leq \tau^2}]\big)_+}}, % & \text{\small if  $\E\left[U\ind{ U \leq \tau^2}\right] < \Var(H) - \tau^2 \left(\frac{\costg}{\costh} + \P\left(U > \tau^2\right)\right)$} \\
        \frac{1}{\tau} \right).
    \end{equation}
Plugging into the original objective $J(\tau, \gamma^*(\tau))$ and minimizing over $\tau$ yields the solution.
\end{proof}

\subsection{Proof of Proposition~\ref{prop:optimal-random-integer}}
\begin{proof}
    Since $\Error_T(\pi)$ is monotone in $T$ for all $\pi$, we should first set $T^{\rm stop}$ to be the largest $T$ for which the constraint holds.
    This value is
    \begin{equation}
        T^{\rm stop} = \left\lfloor \frac{B}{c_h p + c_g} \right\rfloor.
    \end{equation}
    Plugging this into the objective yields
    \begin{equation}
        \frac{1}{\left\lfloor \frac{B}{c_h p + c_g} \right\rfloor}\left( \Var(H) - \E[(H-G)^2] + \frac{1}{p}\E\left[(H-G)^2\right]\right).
    \end{equation}
    This is a complicated optimization problem because of the floor function, and cannot be solved by setting the gradient to zero.
    We will begin by searching over all values of $p \in (0, 1]$ for which $\frac{B}{c_h p + c_g} = k$ for $k \in \mathbb{N}_+$, i.e., $p \in \left\{\frac{B-kc_g}{kc_h} : k \in \{\lceil B / (\costh + \costg)\rceil, \ldots, \lfloor B/\costg\rfloor\}\right\}$.
    In terms of $k$, and denoting $E = \E[(H-G)^2]$ and $V=\Var(H)-\E[(H-G)^2]$, the objective then becomes
    \begin{equation}
        \frac{1}{k}\left( V + \frac{kc_h}{B-kc_g}E\right) = \frac{1}{k}V + \frac{c_h}{B - kc_g}E.
    \end{equation}
    Ignoring the discreteness of $k$, in the case that $p^* \in (0, 1)$ we can set the derivative to zero, getting
    \begin{align}
        & \frac{c_g c_h}{(B-kc_g)^2}E = \frac{1}{k^2}V \\
        \Longleftrightarrow & k^2 c_gc_h\frac{E}{V} = (B-kc_g)^2 \\
        \Longleftrightarrow & k^2 \left(c_g^2 - c_gc_h\frac{E}{V}\right)  - 2kc_gB + B^2 = 0
    \end{align}
    The positive solution to this quadratic is
    \begin{equation}
        k = \frac{2c_gB + \sqrt{4c_g^2B^2-4B^2\left(c_g^2 - c_gc_h\frac{E}{V}\right)}}{2\left(c_g^2 - c_gc_h\frac{E}{V}\right)} = B\frac{1 + \sqrt{ \frac{c_h}{c_g}\frac{E}{V}}}{c_g - c_h\frac{E}{V}}.
    \end{equation}
    Thus, the optimal $k^*$ solves the following optimization problem: 
    \begin{equation}
        k^* = \argmin_{k \in \left\{  \left\lfloor B\frac{1 + \sqrt{ \frac{c_h}{c_g}\frac{E}{V}}}{c_g - c_h\frac{E}{V}}\right\rfloor ,  \left\lceil B\frac{1 + \sqrt{ \frac{c_h}{c_g}\frac{E}{V}}}{c_g - c_h\frac{E}{V}}\right\rceil\right\}}\frac{1}{k}V + \frac{c_h}{B - kc_g}E,
    \end{equation}
    And the optimal $p^*$ is either
    \begin{equation}
        p^* = \frac{B-k^*c_g}{k^*c_h}
    \end{equation}
    or the boundary solution $p^*=1$.
    To disambiguate between the two, we can directly compute the objective value for each.
\end{proof}

\subsection{Proof of Proposition~\ref{prop:m-estimator-mse-dist}}
\begin{proof}
    The asymptotic normality statement can be read off as a simplified version of Theorem 1 from~\cite{zrnic2024active}.
    The second part follows because if $Z \sim \cN(0, \Sigma^*)$, then $\|Z\|^2_2 = \|(V^*)^{-1/2}Z\|^2_2$, where $V^*$ is the eigenvector matrix of $\Sigma^*$ (since $(V^*)^{-1/2}$ is unitary).
    Thus, taking $\Lambda^*$ to be the (diagonal) eigenvalue matrix of $\Sigma^*$ and defining we have that $\|Z\|^2_2 \eqd \|\Lambda Z'\|^2_2$, where $Z' \sim \N(0,\mathbf{I}_d)$.
    Since $\|\Lambda Z'\|^2_2 = \sum\limits_{j=1}^d \lambda_j (Z'_j)^2$, and $Z'_j \iidsim \chi^2_1$, the result is proven.
\end{proof}

\subsection{Proof of Proposition~\ref{prop:convex-budget}}
\begin{proof}
    Following the simplification of Problem~\eqref{problem:optimal-budget}, our problem is equivalent to minimizing the following objective:
    \begin{equation}
    \label{eq:M-objective}
        (c_h \E[\pi(X)] + c_g) \Tr(\Sigma^*).
    \end{equation}
    Expanding out $\Sigma^*$, we can write
    \begin{equation}
        \Tr(\Sigma^*) = \Tr\left(W_{\theta^*}^{-1}\Var\left( \nabla \ell_{\theta^*}^g + \left( \nabla \ell_{\theta^*} - \nabla \ell_{\theta^*}^g \right) \frac{\xi}{\pi(X)} \right) W_{\theta^*}^{-1}\right)
    \end{equation}
    Expanding out the variance gives 
    \begin{align}
        &\Var\left( \nabla \ell_{\theta^*}^g + \left( \nabla \ell_{\theta^*} - \nabla \ell_{\theta^*}^g \right) \frac{\xi}{\pi(X)} \right) \\
        = &\E\left[ \left(\nabla \ell_{\theta^*}^g + \left( \nabla \ell_{\theta^*} - \nabla \ell_{\theta^*}^g \right) \frac{\xi}{\pi(X)}\right)\left(\nabla \ell_{\theta^*}^g + \left( \nabla \ell_{\theta^*} - \nabla \ell_{\theta^*}^g \right) \frac{\xi}{\pi(X)}\right)^\top \right] - \E[\nabla \ell_{\theta^*}]\E[\nabla \ell_{\theta^*}]^\top.
    \end{align}
    Expanding out the squared term yields 
    \begin{align}
        &\E\left[ \left(\nabla \ell_{\theta^*}^g + \left( \nabla \ell_{\theta^*} - \nabla \ell_{\theta^*}^g \right) \frac{\xi}{\pi(X)}\right)\left(\nabla \ell_{\theta^*}^g + \left( \nabla \ell_{\theta^*} - \nabla \ell_{\theta^*}^g \right) \frac{\xi}{\pi(X)}\right)^\top \right] \\
        &\qquad = \E\left[\nabla \ell_{\theta^*}^g (\nabla \ell_{\theta^*}^g)^\top \right] \\
        & \qquad \qquad + \E\left[\frac{\xi}{\pi(X)} \left(\nabla \ell_{\theta^*}^g\left( \nabla \ell_{\theta^*} - \nabla \ell_{\theta^*}^g \right)^\top + \left( \nabla \ell_{\theta^*} - \nabla \ell_{\theta^*}^g \right)(\nabla\ell_{\theta^*}^g)^\top \right) \right] \\
        & \qquad \qquad + \E\left[ \left(\left( \nabla \ell_{\theta^*} - \nabla \ell_{\theta^*}^g \right) \frac{\xi}{\pi(X)}\right)\left(\left( \nabla \ell_{\theta^*} - \nabla \ell_{\theta^*}^g \right) \frac{\xi}{\pi(X)}\right)^\top\right] \\
        &\qquad  = \E\left[\nabla \ell_{\theta^*}^g (\nabla \ell_{\theta^*}^g)^\top \right] \\
        & \qquad \qquad + \E\left[\nabla \ell_{\theta^*}^g\left( \nabla \ell_{\theta^*} - \nabla \ell_{\theta^*}^g \right)^\top + \left( \nabla \ell_{\theta^*} - \nabla \ell_{\theta^*}^g \right)(\nabla\ell_{\theta^*}^g)^\top \right] \\
        & \qquad \qquad + \E\left[ \frac{1}{\pi(X)} \left(\left( \nabla \ell_{\theta^*} - \nabla \ell_{\theta^*}^g \right) \right)\left(\left( \nabla \ell_{\theta^*} - \nabla \ell_{\theta^*}^g \right)\right)^\top\right].
    \end{align}
    Thus, by the linearity of the $\Tr$ operator, we can rewrite the trace as $\Tr(\Sigma^*) = \E\left[\frac{M}{\pi(X)}\right] + C$, where
    \begin{equation}
        M =  \Tr\left(W_{\theta^*}^{-1}\left( \nabla \ell_{\theta^*} - \nabla \ell_{\theta^*}^g \right)\left(\nabla \ell_{\theta^*} - \nabla \ell_{\theta^*}^g \right)^\top W_{\theta^*}^{-1}\right)
    \end{equation}
    and $C$ is
    \begin{align}
        &\Tr\Big(W_{\theta^*}^{-1}\Big(\E\left[\nabla \ell_{\theta^*}^g (\nabla \ell_{\theta^*}^g)^\top + \nabla \ell_{\theta^*}^g\left( \nabla \ell_{\theta^*} - \nabla \ell_{\theta^*}^g \right)^\top + \left( \nabla \ell_{\theta^*} - \nabla \ell_{\theta^*}^g \right)(\nabla\ell_{\theta^*}^g)^\top\right] - \E[\nabla \ell_{\theta^*}]\E[\nabla \ell_{\theta^*}]^\top\Big) W_{\theta^*}^{-1}\Big) \\
        &= \Tr\Big(W_{\theta^*}^{-1}\Big(\E\left[\nabla \ell_{\theta^*}^g (\nabla \ell_{\theta^*})^\top + (\nabla \ell_{\theta^*} - \nabla \ell_{\theta^*}^g)(\nabla \ell_{\theta^*}^g)^\top \right] - \E[\nabla \ell_{\theta^*}]\E[\nabla \ell_{\theta^*}]^\top\Big) W_{\theta^*}^{-1}\Big).
    \end{align}
    Returning to the objective, and excluding factors that do not depend on $\pi$, we can write it now as
    \begin{align}
        &(c_h \E[\pi(X)] + c_g) \left(\E\left[\frac{M}{\pi(X)}\right] + C\right) 
        \propto_{\pi}(c_h \E[\pi(X)] + c_g) \E\left[\frac{M}{\pi(X)}\right] +c_h \E[\pi(X)] C.
    \end{align}
    In discrete form, following Propostion~\ref{prop:optimal-budget}, this is equivalent to
    \begin{equation}
        (c_h p^\top P + c_g) \E\left[\frac{M}{p^\top I}\right] + c_h p^\top P C.
    \end{equation}
    Taking the derivative with respect to $p$ and setting it to zero coordinatewise yields
    \begin{equation}
        c_h P_x\E\left[\frac{M}{p^\top I}\right] + c_h P_x C = (c_h p^\top P + c_g) \E\left[M I_x\right],
    \end{equation}
    and thus,
    \begin{equation}
        p_x = \sqrt{\frac{(c_h p^\top P + c_g) \E\left[M \mid X = x\right]}{c_h \E\left[\frac{M}{p^\top I}\right] + c_h C}} \propto_x \sqrt{\E\left[M \mid X = x\right]} = \sqrt{U(x)}.
    \end{equation}
    Plugging $\pi(x) = \gamma \sqrt{\E\left[M \mid X = x\right]}$ back into \eqref{eq:M-objective} gives the one-dimensional objective
    \begin{equation}
        \frac{c_g}{\gamma}\E\left[\frac{M}{\sqrt{\E\left[M \mid X = x\right]}}\right]  + c_h \gamma \E[\sqrt{\E\left[M \mid X = x\right]}] C.
    \end{equation}
    The tower property gives us that $\E\left[\frac{M}{\sqrt{\E\left[M \mid X = x\right]}}\right] = \E\left[\sqrt{\E\left[M \mid X = x\right]}\right]$, yielding the objective
    \begin{equation}
        \frac{c_g}{\gamma}\E\left[\sqrt{\E\left[M \mid X = x\right]}\right]  + c_h \gamma \E[\sqrt{\E\left[M \mid X = x\right]}] C,
    \end{equation}
    which is equivalent to minimizing
    \begin{equation}
        \frac{c_g}{\gamma}  + c_h \gamma  C.
    \end{equation}
    The solution to this problem is
    \begin{equation}
        \gamma^* = \sqrt{\frac{c_g}{c_h} \cdot \frac{1}{C}}.
    \end{equation}
\end{proof}

\subsection{Proof of Proposition~\ref{prop:errors-u}}
\begin{proof}
Following the derivation in Section~\ref{sec:derivation}, we have that for any $\pi$
\begin{equation}
    \Var(\Delta^\pi) = \Var(H) - \E[(H-G)^2] + \E\left[(H-G)^2\frac{1}{\pi(X)}\right].
\end{equation}

% \begin{equation}
% \begin{aligned}
% \Error_{T^{\rm stop}}(\tilde\pi)-\Error_{T^{\rm stop}}(\pi^*)
% &= c_h\!\bigl(\E[\tilde\pi(X)]-\E[\pi^*(X)]\bigr)\!\left(\Var(H)-\E[(H-G)^2]\right) \\
% &\quad +\bigl(c_h\E[\tilde\pi(X)]+c_g\bigr)\,\E\!\left[\frac{(H-G)^2}{\tilde\pi(X)}\right] \\
% &\qquad -\bigl(c_h\E[\pi^*(X)]+c_g\bigr)\,\E\!\left[\frac{(H-G)^2}{\pi^*(X)}\right].
% \end{aligned}
% \end{equation}
We then immediately get that
\begin{align}
    \Var(\Delta^{\tilde{\pi}})-\Var(\Delta^{\pi^*}) = \E\left[\frac{(H-G)^2}{\tilde\pi(X)}-\frac{(H-G)^2}{\pi^*(X)}\right] &\leq b\E\left[\frac{1}{\tilde\pi(X)}-\frac{1}{\pi^*(X)}\right] 
    \leq b\delta.
\end{align}
\end{proof}

\subsection{Proof of Corollary~\ref{cor:additive_error}}
\begin{proof}
Since 
\begin{equation}
    \tilde\pi(x) = (\gamma^*+\delta_\gamma)\sqrt{U(x)+\delta_U(x)},
\end{equation}
we have
\begin{equation}
    \frac{1}{\tilde\pi(x)} = \frac{1}{(\gamma^*+\delta_\gamma)\sqrt{U(x)+\delta_U(x)}}
    = \frac{1}{\gamma^*\sqrt{U(x)}}
    \frac{1}{\left(1+\frac{\delta_\gamma}{\gamma^*}\right)\sqrt{1+\frac{\delta_U(x)}{U(x)}}}.
\end{equation}
A first-order Taylor expansion yields
\begin{equation}
    \frac{1}{\left(1+\frac{\delta_\gamma}{\gamma^*}\right)\sqrt{1+\frac{\delta_U(x)}{U(x)}}}
    = 1 - \frac{\delta_\gamma}{\gamma^*} - \frac{1}{2}\frac{\delta_U(x)}{U(x)} + o\Bigl(\delta_\gamma,\tfrac{\delta_U(x)}{U(x)}\Bigr).
\end{equation}
Thus,
\begin{equation}
    \frac{1}{\tilde\pi(x)} - \frac{1}{\pi^*(x)}
    = \frac{-\delta_\gamma}{(\gamma^*)^2\sqrt{U(x)}}
    -\frac{1}{2\gamma^*}\frac{\delta_U(x)}{U(x)^{3/2}}
    + o\Bigl(\delta_\gamma,\tfrac{\delta_U(x)}{U(x)}\Bigr).
\end{equation}
Ignoring second-order terms, since $U(x) \ge \epsilon$ almost surely, we have
\begin{equation}
    \Bigl|\frac{1}{\tilde\pi(x)} - \frac{1}{\pi^*(x)}\Bigr|
    \le \frac{|\delta_\gamma|}{(\gamma^*)^2\sqrt{\epsilon}}
    +\frac{1}{2\gamma^*\epsilon^{3/2}}\,|\delta_U(x)|.
\end{equation}
Taking the expectation and using linearity,
\begin{equation}
    \E\!\left[\frac{1}{\tilde\pi(X)} - \frac{1}{\pi^*(X)}\right]
    \le \frac{|\delta_\gamma|}{(\gamma^*)^2\sqrt{\epsilon}}
    +\frac{1}{2\gamma^*\epsilon^{3/2}}\,\E[\,|\delta_U(X)|].
\end{equation}
Plugging this bound into the initial inequality for $\Var(\Delta^{\tilde{\pi}})$ completes the proof:
\begin{equation}
    \Var(\Delta^{\tilde{\pi}})-\Var(\Delta^{\pi^*})  \le b\left(\frac{|\delta_\gamma|}{(\gamma^*)^2\sqrt{\epsilon}}
    +\frac{1}{2\gamma^*\epsilon^{3/2}}\,\E[\,|\delta_U(X)|]\right).
\end{equation}
\end{proof}

\subsection{Proof of Proposition~\ref{prop:optimal-sampling}}
\begin{proof}
    We will borrow notation from the proof of Proposition~\ref{prop:optimal-budget}, and express all quantities in vector form.
    The optimization problem in~\eqref{problem:optimal-sampling} only depends on $Q$ through the likelihood ratio, $\frac{dP}{dQ} = r \in \R_{>0}^{|\cX|}$, and $Q,P$ are absolutely continuous with respect to one another.
    So, we will learn $r$ and then calculate $Q^* = P/r$.
    
    Ignoring terms that do not depend on $r$, the problem in~\eqref{problem:optimal-sampling} can be rewritten as
    \begin{equation}
        \begin{aligned}
            \minimize_{\substack{r \in \R_{>0}^{|\cX|}}} \quad &r^\top \E_{P}\left[I \left(H^2 + \left(\frac{1}{\pi(X)} - 1\right)(H-G)^2 \right)\right] \\
            \st \quad & (1/r)^\top P = 1.
        \end{aligned}
    \end{equation}
    Forming the Lagrangian,
    \begin{equation}
        \mathcal{L}(r,\lambda) = r^\top \E_{P}\left[I \left(H^2 + \left(\frac{1}{\pi(X)} - 1\right)(H-G)^2 \right)\right] + \lambda((1/r)^\top P-1).
    \end{equation}
    Taking the gradient gives
    \begin{equation}
        \nabla_r \mathcal{L}(r,\lambda) = \E_{P}\left[I \left(H^2 + \left(\frac{1}{\pi(X)} - 1\right)(H-G)^2 \right)\right] - \lambda P/(r^2),
    \end{equation}
    and setting it to zero yields 
    \begin{equation}
        r^*_x \propto_x \sqrt{\frac{1}{\E_{P}\left[\left(H^2 + \left(\frac{1}{\pi(X)} - 1\right)(H-G)^2 \right) \Big\vert X = x\right]}} = \sqrt{\frac{1}{\nu_x}}.
    \end{equation}
    To ensure the proper normalization, we set
    \begin{equation}
        r^*_x = \frac{\sqrt{\nu}^\top P}{\sqrt{\nu_x}}.
    \end{equation}
    Thus, $Q^*(x) = P/r^* = \frac{\sqrt{\nu_x} P_x}{\sqrt{\nu}^\top P}$.
\end{proof}
\section{Additional empirical results}
\label{app:additional_results}

\subsection{Bernoulli data}

\begin{figure}[!h]
    \centering
    \includegraphics[width=1\linewidth]{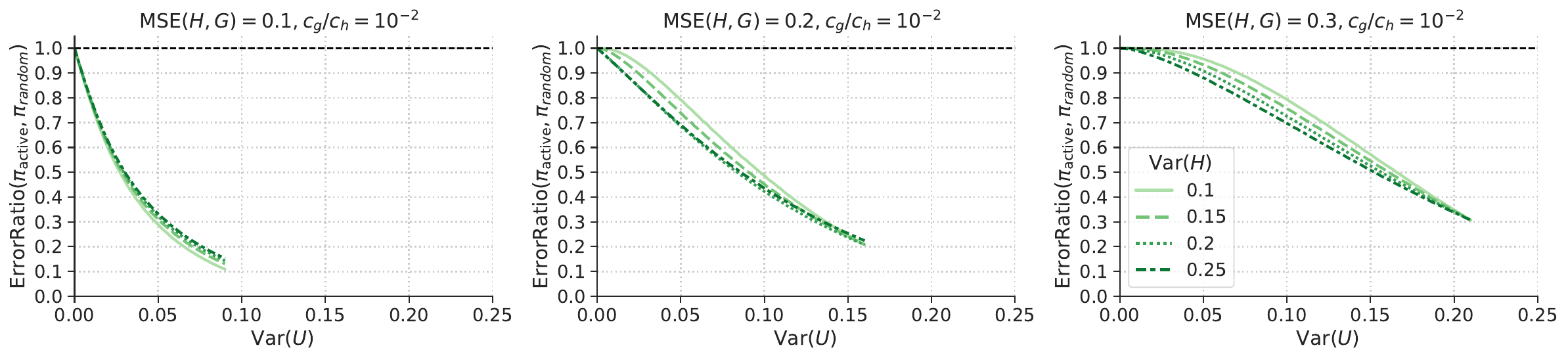}
    \vspace{-14pt}
    \caption{Results on the Bernoulli data (\S\ref{sec:bernoulli_data}) for $\pi_\mathrm{active}$ vs. $\pi_\mathrm{random}$ while varying  $\MSE(H, G)$  and $\Var(U)$. As in Figure~\ref{fig:gaussian}, each line plots a different value of $\Var(H)$, where we choose values  that are representative of low, medium, or high variance settings  compared to $\MSE(H, G)$.}
    \label{fig:bernoulli}
\end{figure}

Figure~\ref{fig:bernoulli} provides results for the Bernoulli data setting in Section~\ref{sec:bernoulli_data} when comparing $\pi_\mathrm{active}$ to $\pi_\mathrm{random}$. Recall that here the results differ from comparing to $\pi_\mathrm{base}$ only when $\MSE(H, G) < \frac{\costh}{\costh + \costg} \Var(H)$, as otherwise the optimal sampling rate for $\pi_\mathrm{random}$ is simply $p^* = 1$.

\subsubsection{On the error ratio lower bound} 
It is interesting to observe that  $\ErrorRatio(\pi_\mathrm{active}, \pi_\mathrm{base})$ is lower-bounded in the Bernoulli data setting at a value close to $\MSE(H, G)$. To see why, we note that the lowest value of $\ErrorRatio(\pi_\mathrm{active}, \pi_\mathrm{base})$ is obtained when $U$ is maximum variance---which is achieved when $U$ is a binary random variable that is $1$ when $G \neq H$, and $0$ otherwise. Recall that in the Bernoulli data setting both $H$ and $G$ are binary, and $\MSE(H, G) = \mathbb{P}(H \neq G)$. We can then compute $\ErrorRatio(\pi_\mathrm{active}, \pi_\mathrm{base})$ after optimizing over $\tau$ as approaching 
\begin{align}
 &\min\left( 
\begin{array}{c}
    \left(\gamma \MSE(H, G) + \frac{\costg}{\costh}\right)\left(1 +(\frac{1}{\gamma} - 1)\frac{\MSE(H, G)}{\Var(H)}\right)  \\[.5em]
     \MSE(H, G) + \frac{\costg}{\costh} 
\end{array}\right) \\[.5em] &\text{ where } \gamma = \sqrt{\frac{\costg/\costh}{(\Var(H) - \MSE(H, G))_+}}. 
\end{align}
Note that we have the first quantity only when $\MSE(H, G) \leq \Var(H) + \costg/\costh$.

\begin{proof}[Derivation.] When $U \rightarrow \ind{H \neq G} \in \{0, 1\}$, from Proposition~\eqref{prop:optimal-budget} $\pi_\mathrm{active}$ approaches either
\begin{equation}
\pi_\mathrm{clip}(x, \tau = 1) = \begin{cases}
\gamma^*(1) & \text{if $h(x) \neq g(x)$} \\
0 & \text{otherwise}
\end{cases} \quad \text{or} \quad
\pi_\mathrm{clip}(x, \tau = 0) = \begin{cases}
1 & \text{if $h(x) \neq g(x)$} \\
0 & \text{otherwise}
\end{cases}
\end{equation}
where $\gamma^*(1) = \sqrt{\frac{\costg/\costh}{(\Var(H) - \MSE(H, G))_+}}$. Plugging these values into the optimization over $\tau \in \{0, 1\}$,
\begin{equation}
        \tau^* = \argmin_{\tau \in \{0, 1\}}\left(\costh\mathbb{E}[\pi_\mathrm{clip}(x; \tau)] + \costg \right) \left(\Var(H) + \mathbb{E}\left[U \left(\pi_\mathrm{clip}(x; \tau)^{-1}  - 1\right)\right]\right),
\end{equation}
at $\tau = 1$ we get
\begin{equation}
\begin{gathered}
    \left(\costh \gamma^*(1) \MSE(H, G) + \costg\right)\left(\Var(H) + \left(\frac{1}{\gamma^*(1)} - 1\right)\MSE(H, G)\right),
\end{gathered}
\end{equation}
and at $\tau = 0$ we get
\begin{equation}
    (\costh \MSE(H, G) + \costg)\Var(H),
\end{equation}
so the optimal $\tau^*$ is the smaller of the two. 
Dividing each by $\costh \Var(H)$ and taking the minimum gives the result for $\ErrorRatio(\pi_\mathrm{active}, \pi_\mathrm{base})$.
\end{proof}

A similar calculation can also be made for $\ErrorRatio(\pi_\mathrm{active}, \pi_\mathrm{random})$, with different bounds for when $\MSE(H, G) \leq \Var(H) - \frac{\costg}{\costh}$ and/or  $\MSE(H, G) \leq \frac{\costh}{\costg + \costh}\Var(H)$ (i.e., both conditions, one or the other condition, or neither condition).\looseness=-1
\subsection{Real data}

We provide experimental results on two additional datasets:

\textbf{ImageNet.} The {ImageNet} dataset~\cite{deng2009imagenet} categorizes input images into one of $1000$  classes. Our goal is to evaluate the accuracy $\E[H]$ of a pretrained ResNet model~\cite{he2016deep}, where $H$ is the binary indicator of whether the model's prediction matches the human label for a given image $X$.  $G$ is the softmax value the model assigns to its predicted class. $U$ is computed as $G(1-G)$.

\textbf{Seahorse.} The Seahorse dataset~\cite{clark-etal-2023-seahorse} focuses on multilingual summarization. We focus on the ``attribution to the source document'' metric for summaries produced by a finetuned 13B parameter mT5 model~\cite{xue2020mt5}.  $H$ comes from human ratings. $G$ is the probability score from a finetuned mT5-XXL autorater model assessing attribution.\footnote{This checkpoint is available at \\ \url{https://huggingface.co/collections/google/seahorse-release-6543b0c06d87d83c6d24193b}}  $U$ is computed as $G(1-G)$.

 \begin{figure}[!h]
    \centering
    \includegraphics[width=\linewidth]{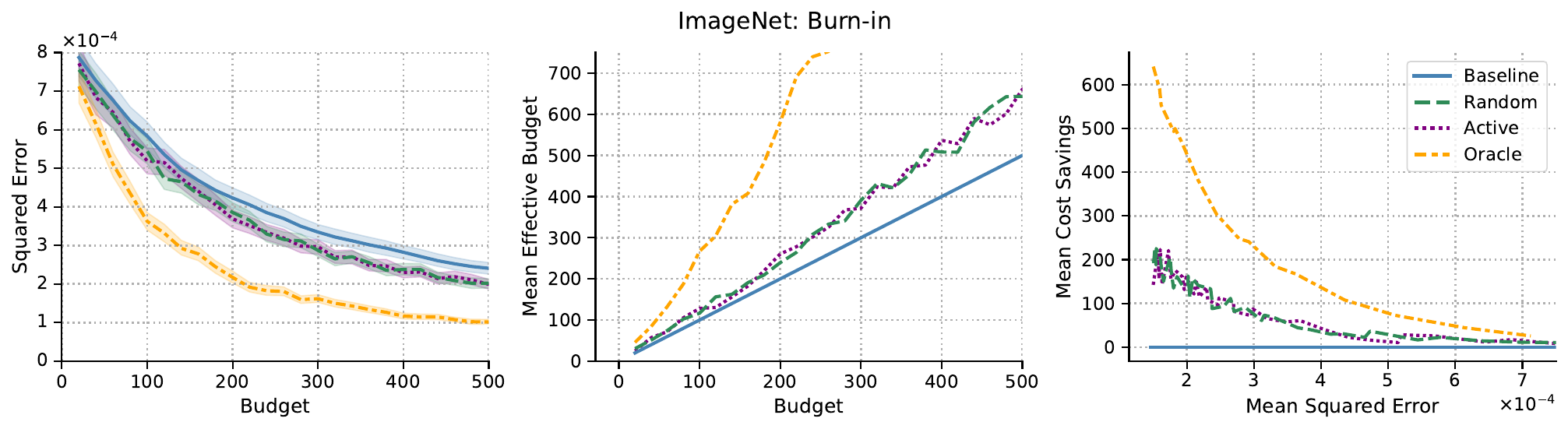} 
    \includegraphics[width=\linewidth]{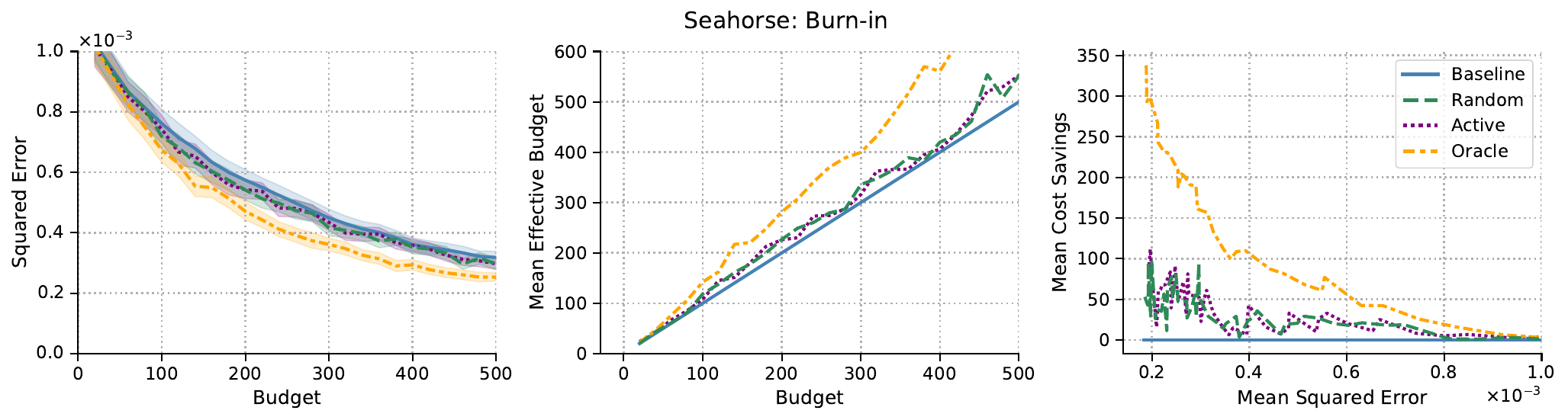}
    \vspace{-14pt}
    \caption{Results on the ImageNet and Seahorse datasets using $200$ examples as a burn-in (approach A2 in Section~\ref{sec:estimating-policies}). The budget on the x-axis  reflects ``additional'' budget used \emph{after} the burn-in examples.\looseness=-1}
    \label{fig:extra_results}
\end{figure}

Results are shown in Figure~\ref{fig:extra_results}, with similar takeaways as the other burn-in (approach A2) experiments in Section~\ref{sec:experimental-results}. For ImageNet, both $\pi_\mathrm{active}$ and $\pi_\mathrm{random}$ substantially outperform $\pi_\mathrm{base}$; however, the estimated $\pi_\mathrm{active}$ still leaves a significant amount of headroom behind with respect to the oracle active policy, and has comparable performance to $\pi_\mathrm{random}$. The Seahorse dataset is an interesting case where the weak rater $G$ is simply not that good, even conditionally. This results in small (but still positive) gains for both the active and random policies---even when $\pi_\mathrm{active}$ uses oracle parameters.\looseness=-1

\section{Implementation details}
\label{app:autorater-details}

All real data experiments in this paper were performed retrospectively with $G$ and $H$ computed once offline for all inputs $x$ in each dataset. Pretrained models and labels for $G$ and $H$, respectively, were used for all datasets except Chatbot Arena (Section~\ref{sec:estimating-policies}). All subsequent experiments for active sampling were then performed on CPU resources with 32GB of RAM.

For the Chatbot Arena dataset, we sampled responses from Gemini 1.5 Flash~\cite{geminiteam2024gemini15unlockingmultimodal}  using an adapted version of the Chatbot Arena auto-eval prompt.\footnote{\url{https://github.com/lm-sys/arena-hard-auto/blob/main/config/judge_config.yaml}}  Below is an example prompt. Color is added for clarity. Ten responses from Gemini 1.5 Flash are then sampled, with five responses using the same prompt with the order of A and B flipped. The final label is taken as the majority vote.

\begin{tcolorbox}[graybox]
Please act as an impartial judge and evaluate the quality of the responses provided by two AI Assistants to the user prompt displayed below. You will be given Assistant A's answer and Assistant B's answer. Your job is to evaluate which assistant's answer is better. \\
\\
When evaluating the assistants' answers, first identify any mistakes or inaccurate information. Next, consider if the assistant's answers are helpful, relevant, and concise. Helpful means the answer correctly responds to the prompt or follows the instructions. Note that when the user prompt has any ambiguity or more than one interpretation, it is more helpful and appropriate to ask for clarifications or more information from the user than providing an answer based on assumptions. Relevant means all parts of the response closely connect or are appropriate to what is being asked. Concise means the response is clear and not verbose or excessive. Then consider the creativity and novelty of the assistant's answers when needed. Finally, identify any missing important information in the assistants' answers that would be beneficial to include when responding to the user prompt. \\
\\
<|User Prompt|> \\
\textcolor{blue}{make a haiku on bacon the food} \\
<|The End of User Prompt|> \\
\\
<|The Start of Assistant A's Answer|> \\
\textcolor{brown}{Crisp strips of delight, \\
Sizzling dance, morning's first light, \\
Bacon whispers, "Bite."} \\
<|The End of Assistant A's Answer|> \\
\\
<|The Start of Assistant B's Answer|> \\
\textcolor{purple}{
Here is a haiku about bacon: \\
\\
Sizzling in pan \\
Savory salty bacon strips \\
Crispy delight yum} \\
<|The End of Assistant B's Answer|> \\
\\
Is the higher quality response: \\
(A) Assistant A is better \\
(B) Assistant B is better \\
Please answer with either (A) or (B).
\end{tcolorbox}

For  $G$, we finetune Gemma-3 4B for two hours on TPUv3 resources. The same prompt is used, however, we maximize the log-likelihood of the target Gemini-based answer used for $H$ instead of sampling. Early stopping is done based on the validation loss at predicting $H$ using a held-out split of the training data (recall that the training data is composed of other model comparisons from the Chatbot Arena dataset that are distinct from the one on which we evaluate our annotation policies).\looseness=-1

\end{document}